\documentclass{article}

\usepackage{environ}
\newif\ifconferencemode
\NewEnviron{movable}[2]{
  \ifconferencemode
    #2 
    \expandafter\xdef\csname stored#1\endcsname{\unexpanded\expandafter{\BODY}}
  \else
    \BODY
  \fi
}
\newcommand{\showstored}[1]{\csname stored#1\endcsname}

\conferencemodefalse

\ifconferencemode
    \newcommand{\conf}[1]{#1}
    \newcommand{\Arxiv}[1]{}
    \newcommand{\arxvFtnt}[1]{}
    \newcommand{\MCfig}{delta_comparison_1_panel.png}
\else
    \newcommand{\conf}[1]{}
    \newcommand{\Arxiv}[1]{#1}
    \newcommand{\arxvFtnt}[1]{\footnote{#1}}
    \newcommand{\MCfig}{delta_comparison_2_panels.png}
\fi

\ifconferencemode
    \usepackage[accepted]{./style/ICML_2026}
\else
    \usepackage{natbib}
    \usepackage[margin=1in]{geometry}
    \usepackage{bbm}
    \usepackage{algorithm, algorithmic}
    \usepackage[textsize=tiny]{todonotes}
\fi

\usepackage{subcaption}
\usepackage{wrapfig}
\usepackage{microtype}
\usepackage{graphicx}
\usepackage{booktabs}
\usepackage{hyperref}
\usepackage{amsmath}
\usepackage{amssymb}
\usepackage{mathtools}
\usepackage{amsthm}
\usepackage{bbm}
\usepackage[capitalize,noabbrev]{cleveref}

\theoremstyle{plain}
\newtheorem{theorem}{Theorem}[section]

\newtheorem{lemma}[theorem]{Lemma}

\newtheorem{claim}[theorem]{Claim}
\theoremstyle{definition}
\newtheorem{definition}[theorem]{Definition}

\theoremstyle{remark}
\newtheorem{remark}[theorem]{Remark}
\newcommand{\reals}{\mathbb{R}}
\newcommand{\naturals}{\mathbb{N}}

\newcommand{\lvec}[1]{\reflectbox{$\vec{\reflectbox{$#1$}}$}}
\newcommand{\para}[1]{\paragraph{#1}}
\newcommand{\eps}{\varepsilon}

\newcommand{\out}{y}

\newcommand{\outDom}{\mathcal{Y}}
\newcommand{\alg}{M}
\newcommand{\rand}{R}
\newcommand{\domain}{\mathcal{X}}
\newcommand{\dataset}{\boldsymbol{s}}
\newcommand{\view}{\boldsymbol{v}}

\newcommand{\prob}[2]{\underset{#1}{\mathbb{P}} \left(#2 \right)}
\newcommand{\expect}[2]{\underset{#1}{\mathbb{E}} \left[#2 \right]}
\newcommand{\loss}{\ell}
\newcommand{\lossFunc}[3]{\loss\left(#1; #2, #3 \right)}

\newcommand{\upBnd}{U}
\newcommand{\lowBnd}{V}
\newcommand{\lossRV}{L}
\newcommand{\dual}[1]{\mathcal{D}\left(#1\right)}
\newcommand{\lossRVfunc}[2]{\lossRV_{#1, #2}}

\newcommand{\HockeyStick}[1]{\boldsymbol{H}_{#1}}
\newcommand{\HockeyStickFunc}[2]{\HockeyStick{#1}\left(#2 \right)}
\newcommand{\HockeyStickDiv}[3]{\HockeyStick{#1}\left(#2 ~\left\Vert~ #3 \right. \right)}

\newcommand{\alloc}[2]{\mathcal{A}_{#1}\left(#2\right)}
\newcommand{\allocFunc}[3]{\mathcal{A}_{#1}\left(#2; #3\right)}

\newcommand{\add}[1]{\lvec{#1}}
\newcommand{\rem}[1]{\vec{#1}}

\newcommand{\privProf}[1]{\delta_{#1}}
\newcommand{\privProfAdd}[1]{\add{\delta}_{#1}}
\newcommand{\privProfRem}[1]{\rem{\delta}_{#1}}

\ifconferencemode
    \icmltitlerunning{Efficient Privacy Loss Accounting for Subsampling and Random Allocation}
\fi

\begin{document}

\ifconferencemode
    
    \twocolumn[
      \icmltitle{Efficient Privacy Loss Accounting for Subsampling and Random Allocation}
          
      \begin{icmlauthorlist}
        \icmlauthor{Vitaly Feldman}{Apple}
        \icmlauthor{Moshe Shenfeld}{HUJI}
      \end{icmlauthorlist}
    
      \icmlaffiliation{HUJI}{School of Computer Science and Engineering, The Hebrew University of Jerusalem, Israel}
      \icmlaffiliation{Apple}{Apple, Cupertino, CA, US}
    
      \icmlcorrespondingauthor{Moshe Shenfeld}{moshe.shenfeld@mail.huji.ac.il}
      
      \icmlkeywords{Differential Privacy, DP-SGD, Subsampling, Random Allocation}
    
      \vskip 0.3in
    ]
    \printAffiliationsAndNotice{} 
\else
    \title{Efficient Privacy Loss Accounting for Subsampling and Random Allocation}
    \author{
        Vitaly Feldman\\
        Apple \\
        \and
        Moshe Shenfeld\footnote{Work partially done while author was an intern at Apple.}\\
        The Hebrew University of Jerusalem
    }
    \maketitle
\fi

\begin{abstract}
    We consider the privacy amplification properties of a sampling scheme in which a user's data is used in $k$ steps chosen randomly and uniformly from a sequence (or set) of $t$ steps. This sampling scheme has been recently applied in the context of differentially private optimization \citep{CGHLKKMSZ24, CCGHST25} and communication-efficient high-dimensional private aggregation \citep{AFKRT26}, where it was shown to have utility advantages over the standard Poisson sampling. Theoretical analyses of this sampling scheme \citep{FS25, DCO25} lead to bounds that are close to those of Poisson sampling, yet still have two significant shortcomings. First, in many practical settings, the resulting privacy parameters are not tight due to the approximation steps in the analysis. Second, the computed parameters are either the hockey stick or R\'{e}nyi divergence, both of which introduce overheads when used in privacy loss accounting.
    In this work, we demonstrate that the privacy loss distribution (PLD) of random allocation applied to any differentially private algorithm can be computed efficiently. When applied to the Gaussian mechanism, our results demonstrate that the privacy-utility trade-off for random allocation is at least as good as that of Poisson subsampling. In particular, random allocation is better suited for training via DP-SGD.  To support these computations, our work develops new tools for general privacy loss accounting based on a notion of PLD realization. This notion allows us to extend accurate privacy loss accounting to subsampling which previously required manual noise-mechanism-specific analysis.
\end{abstract}

\section{Introduction}
Privacy amplification by data sampling is one of the central techniques in the analysis of differentially private (DP) algorithms. In this technique, a differentially private algorithm (or a sequence of DP algorithms) is executed on a randomly chosen set of data elements without revealing which of the elements were used.
As first demonstrated by \citet{KLNRS11} this additional randomness can significantly improve the privacy guarantees of the resulting algorithm, that is, privacy amplification. 

Privacy amplification by sampling has found numerous applications, most notably in the analysis of the differentially private stochastic gradient descent (DP-SGD) algorithm \citep{BST14} for training neural networks with differential privacy. 
In DP-SGD the gradients are computed on randomly chosen batches of data points and then privatized through Gaussian noise addition. Privacy analysis of this algorithm is based on the so-called Poisson sampling: elements in each batch and across batches are chosen randomly and independently of each other. The absence of dependence implies that the algorithm can be analyzed relatively easily as an independent composition of single step amplification results. This simplicity is also the key to accurate numerical accounting of the privacy parameters of DP-SGD that are crucial for all existing practical applications of DP-SGD. 

The downside of the simplicity of Poisson sampling is that independently resampling every batch is less efficient and harder to implement within the standard ML pipelines. As a result, in practice typically some form of data shuffling is used to define the batches in DP-SGD even though the privacy analysis relies on Poisson sampling (e.g.,~\citep{MHSBCCGKKL25}).
Data shuffling in which the elements are randomly permuted before being assigned to steps of the algorithm is also known to lead to privacy amplification. However, the analysis of this sampling scheme is more involved and nearly tight numerical results are known only for relatively simple pure DP ($\delta =0$) algorithms \citep{EFMRTT19,FMT21,FMT23, GDDKS21, GDDSK21}.  In particular,  for the case of Gaussian noise addition there is no practically useful method of computing the privacy parameters of DP-SGD with shuffling. 

The discrepancy between the implementations of DP-SGD and their analysis has been explored in several recent works demonstrating that shuffling can be less private than Poisson subsampling \citep{CGKKMSZ24a, CGKKMSZ24b, ABDCH26}. Motivated by these findings, \citet{CGHLKKMSZ24} study training of neural networks via DP-SGD with batches sampled via {\em balls-and-bins sampling}. In this sampling scheme, each data element is assigned randomly and independently (of other elements) to exactly one out of $t$ possible batches. Their main results show that from the point of view of utility (namely, accuracy of the final model) such sampling is essentially identical to shuffling and is noticeably better than Poisson sampling. Concurrently, \citet{CCGHST25} considered the same sampling scheme for the matrix mechanism in the context of DP-FTRL. The privacy analysis in these two works reduces the problem to analyzing the divergence of a specific pair of distributions on $\reals^t$. They then used Monte Carlo simulations to estimate the privacy parameters of this pair. This estimation method was improved in a follow-up work by \citet{DG26}, by framing it as a dynamic programming problem. These simulations provide strong evidence that privacy guarantees of balls-and-bins sampling for Gaussian noise are similar to those of the Poisson sampling with rate $1/t$. While very encouraging, such simulations do not establish formal guarantees. In addition, achieving high-confidence estimates for small $\delta$ and supporting composition appear to be computationally impractical. 

Another important application of privacy amplification is for reducing communication in private federated learning \citep{CSOK24,AFKRT26, DCO25}. In this application, each user subsamples the coordinates of the vector it holds (typically representing a model update) and then communicates the selected coordinates. Secure aggregation protocols are used to ensure that the server does not learn which coordinates were sampled by which user, thereby achieving privacy amplification. In this setting, it is also typically necessary to limit the maximum number of coordinates a user sends due to computational or communication constraints on the protocol. Poisson subsampling results in a random (binomial) number of coordinates to communicate and thus does not allow to fully exploit the available limit. Thus in \citep{AFKRT26}, a natural alternative sampling scheme is considered in which each user contributes a random $k$ out of the total $t$ times (but with users still doing this independently). For $k=1$ this sampling scheme corresponds to the balls-and-bins sampling \citep{CGHLKKMSZ24}.

Motivated by the applications above, \citet{FS25} propose and analyze a general sampling scheme where each element participates in exactly $k$ randomly chosen steps out of the total $t$, independently of other elements, referred to as $k$-out-of-$t$ {\em random allocation}. They show a reduction of the general $k$ scheme to $k=1$ and describe several ways to analyze the $1$-out-of-$t$ sampling scheme for general differentially private algorithms. \citet{DCO25} independently derived an additional analysis of the privacy of $k$-out-of-$t$ random allocation for Gaussian noise addition via R\'{e}nyi divergence. Recently, \citet{SK26} proposed an efficient version of the R\'{e}nyi divergence computation using dynamic programming.

The analyses in \citep{FS25,DCO25} and the numerical methods they entail demonstrate that in most practical settings the privacy amplification achieved by random allocation is comparable to that of Poisson sampling with the best results being typically within $20\%$ increase in $\eps$. While reasonably close, these bounds are worse than the bounds estimated via Monte Carlo simulations \citep{CGHLKKMSZ24,CCGHST25, DG26} and bounds that can be computed exactly in some special cases \citep{FS25, SK26}. Further, these analyses bound either the $(\eps,\delta)$ parameters \citep{FS25} or the R\'{e}nyi DP parameters \citep{FS25,DCO25, SK26} of the resulting algorithm. Both of these bounds have important limitations when used with additional processing steps. For example, the algorithm used in \citep{AFKRT26} relies on random allocation to reduce communication for each user but on top of it uses DP-SGD to sample batches of users using Poisson sampling and composition (for batches and epochs). In such an application, using an $(\eps,\delta)$-bound for random allocation would require performing composition for general $(\eps,\delta)$ algorithms which is known to be suboptimal\footnote{Suboptimality of $(\eps,\delta)$ composition is the main reason for the introduction of moment accountant technique and the development of numerical composition tools.}.
On the other hand, the general subsampling bounds based on R\'{e}nyi DP are typically loose. Further, conversion from R\'{e}nyi DP to final $(\eps, \delta)$ guarantees also typically introduces overheads. 

\subsection{Our Contribution}
We demonstrate how to overcome both shortcomings of the existing numerical methods for computing the privacy parameters of random allocation. Specifically, we show a method that, given a privacy loss distribution (PLD) for a pair of distributions that dominates the privacy loss of each step of some sequence of $t$ differentially private algorithms, computes an upper bound on the PLD of the $1$-out-of-$t$ random allocation applied to that sequence of algorithms. 

Our algorithm is efficient in that, for the Gaussian mechanism, its running time is $O(\log^3 (t) \cdot \log(t/\beta)/(\sigma^{2}\alpha^2))$ (Theorem \ref{thm:num_acc_RA} gives the general bound in terms of the interquantile range), where $\alpha$ is the approximation parameter of the loss (roughly corresponding to the error in $\eps$) and $\beta$ is an additional probability of unbounded loss (translating to an increase in $\delta$), as demonstrated in Figure \ref{fig:runtime_experiment}. 
Combining this with the reduction from the general $k$ case to $k=1$ (Lemma \ref{lem:multAlloc}), we also obtain an algorithm for computing the PLD of the $k$-out-of-$t$ random allocation.

PLD is now the standard representation of privacy loss used in privacy accounting libraries (e.g.,~\citep{Google20,Microsoft21,Meta21}) when computing composition. Its primary benefit is that it can be efficiently and losslessly composed as well as converted to other notions of DP such as $(\eps,\delta)$-DP and R\'{e}nyi DP. We demonstrate that PLD-based computations can be used for more general privacy accounting (and not just composition). Specifically, we show how to implement Poisson subsampling directly on PLDs.\footnote{In contrast, existing privacy accounting of subsampling relies on analytic expressions of the PLD of subsampled Gaussian/Laplace mechanism.} Our implementation crucially relies on ensuring that the approximate representation of a PLD we aim to compute is itself a {\em PLD realization}; that is, it corresponds to a PLD of some pair of distributions that dominates the algorithm. We also point out that our algorithm for random allocation naturally supports PLD-based accounting. Together, these algorithms enable accurate privacy accounting for more complex algorithms, such as the nested sampling used in DP-SGD with the PREAMBLE scheme  \citep{AFKRT26}. An implementation of these algorithms can be found at \citep{PLD_acc}. See Section~\ref{sec:PLDreal} for a detailed overview of this contribution.

\para{Technical overview:}
We now briefly outline our approach. As in the prior work, the starting point of our result is a relatively simple fact that a dominating pair of distributions\footnote{Informally, a pair of distributions is dominating for $M$ if it realizes (an upper bound on) all the worst case privacy parameters of $M$ (see Defn.~\ref{def:domPair}).}  for a  $1$-out-of-$t$ random allocation applied to a $t$-step algorithm $M$ is the pair of distributions $Q^t$ (the $t$-wise product distribution) and
\[
    \bar{P}_t=\frac{1}{t}\sum_{i\in [t]} Q^{i-1}\times P \times Q^{t-i},
\]
where $Q$ and $P$ are a dominating pair of distributions for $M$. Equivalently, we can reduce the analysis of a potentially very complicated algorithm like DP-SGD where steps can depend on the outputs of previous steps to the analysis of random allocation applied to a fixed randomizer (specifically, one that samples from a distribution $P$ when its input contains the user's data and samples from distribution $Q$ otherwise). As discussed in prior work, this reduction is tight for many private algorithms that include DP-SGD with sufficiently rich loss functions \citep{CGHLKKMSZ24,CCGHST25,FS25}.

Now, our goal is to compute the PLD, or the distribution of $\ln(\bar{P}_t(x)/Q^t(x))$ for $x\sim \bar{P}_t$. Somewhat more formally, we need to produce a sufficiently accurate upper bound on this random variable to allow computation of the privacy parameters for both directions of the divergence. In general, computing a PLD of a mixture of high-dimensional distributions is unlikely to be computationally tractable. Our main observation is that the PLD of the mixture arising in random allocation can be represented as the (logarithm of) a sum of exponentiated PLDs (or $\exp$-PLDs) of the dominating pair (see Theorem~\ref{thm:PLDrandAlloc} for a formal statement) and its inverse. We are dealing with an asymmetric add/remove notion of privacy, and therefore this result needs to be proved for both directions. This reduces our computation to $t$-wise convolutions performed on $\exp$-PLDs and their inverses.

We then describe how to appropriately discretize the $\exp$-PLDs and compute the $t$-wise convolutions for both directions in time logarithmic in $t$ and inverse quadratic in the desired accuracy (see Theorem~\ref{thm:num_acc_RA} for a formal statement). 
The dependence on accuracy is quadratic since ensuring tightness of our estimation requires multiplicative discretization of $\exp$-PLDs which is equivalent to the standard additive discretization of the PLD.  Such discretization has an extremely large dynamic range. As a result implementing FFT that relies on additive discretization requires a grid that is often too large for efficient computation\footnote{For comparison, FFT is the standard approach for computing the PLD of composition via convolution of PLDs since PLDs are discretized additively \citep{KJH20,KJPH21, KH21,GLW21}.}. At the same time, in some specific regimes, FFT over an additive grid might still be more efficient than the multiplicative grid approach. We discuss these computational considerations in Section \ref{sec:numRes}. The logarithmic dependence on $t$ is achieved using the standard exponentiation-by-squaring approach of doubling the number of steps via self-convolution and then using the binary representation of $t$ to compute the result of $t$-step convolution.

To compute an upper bound on the PLD for general $k$-out-of-$t$ random allocation, we use a slightly tighter variant of the reduction from $k$-out-of-$t$ random allocation to $k$ compositions of $1$-out-of-$t/k$ random allocation in \citet{FS25}, better accounting for the case where $t$ is not divisible by $k$ (Lemma \ref{lem:multAlloc}).

\para{Numerical evaluation:}
We compare our approach to existing techniques as well as Poisson subsampling in a variety of parameter settings.
While our technique is general, we focus our evaluation on Gaussian noise addition since it is the motivating application and the only case handled by most prior works. We note that we do not provide explicit results on the utility of random allocation, as such results can be found in prior work \citep{CGHLKKMSZ24,CCGHST25,FS25,DCO25,AFKRT26}. Our privacy bounds only require knowing the noise and sampling parameters used there. However, we repeat the privacy-utility trade-off toy experiment in \citep{FS25}, showing that random allocation improves on Poisson subsampling in this setting (Fig.~\ref{fig:utility_comparison}). Additional details of these numerical evaluations can be found in Section \ref{sec:numRes} and Appendix~\ref{apd:expRes}, where we provide an extensive comparison of Poisson and random allocation privacy bounds in various parameter regimes and for other local algorithms as well.

\begin{figure*}[t!]
    \centering\includegraphics[width=\linewidth]{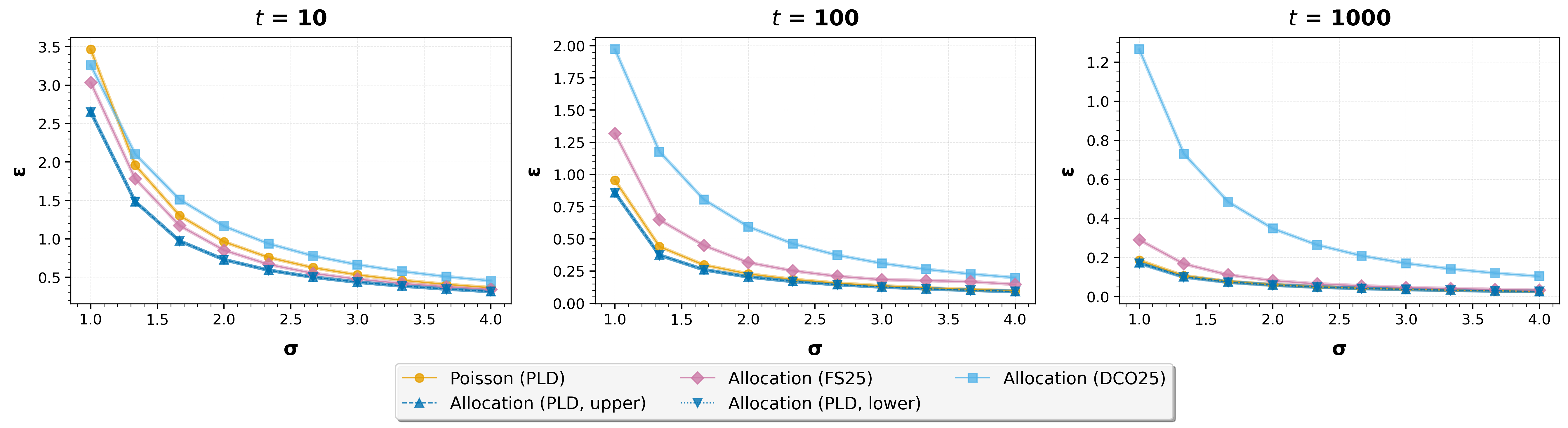}
    \caption{Upper and lower bounds on privacy parameter $\eps$ as a function of the noise parameter $\sigma$ for various values of $t$, all using the Gaussian mechanism with fixed $\delta = 10^{-6}$. We compare our upper and lower bounds (which are nearly identical) to upper bounds on random allocation \citep{FS25, DCO25}, and to the Poisson scheme with $\lambda = 1/t$.}
    \label{fig:epsilon_vs_sigma_by_t}
\end{figure*}

We start with a basic comparison with existing analysis methods for $k=1$ and a range of $t$ and $\sigma$ (Figure~\ref{fig:epsilon_vs_sigma_by_t}). 
As can be seen from the plots, our results improve upon all prior bounds and are never worse than the bounds for Poisson subsampling, which matches recent asymptotic analysis by \citet{DO26}. They also match those obtained via Monte Carlo simulations in the regimes where the latter produce reliable results (Figure \ref{fig:delta_comparison_partial}). We remark that the privacy bounds for these sampling techniques are generally incomparable (see Figure~\ref{fig:no-domination}).

\subsection{Related Work}
\begin{movable}{mov:relWork}
{
    Our work is most closely related to a long line of research on privacy amplification by subsampling and composition. This combination of tools was first defined and theoretically analyzed in the setting of convex optimization \citep{BST14}. Applications of DP-SGD in machine learning were spearheaded by the landmark work of  \citet{ACGMMT16} who significantly improved the privacy analysis via the moments accounting technique formalized via R\'{e}nyi DP \citep{Mironov17}. This work has also motivated the development of more advanced techniques for analysis of sampling and composition. A more detailed technical and historical overview of subsampling and composition for DP can be found in the survey by \citet{Steinke25}. 
    
    Our work closely aligns with the PLD-based numerical accounting techniques \citep{MM18, SMM19, KJH20,KJPH21, KH21, GLW21} which are now the standard approach for the analysis of DP-SGD supported by several libraries \citep{Google20,Microsoft21,Meta21}. We note that while our computation also involves convolutions, we are adding probability ratios and not their logarithms while ensuring the same kind of approximation guarantees. As a result, our algorithm is substantially different. At the same time, our algorithmic results fit naturally with the rest of the PLD toolkit and expand it to random allocation and general (single step) subsampling.

    Two recent works give formal analyses of $k$-out-of-$t$ random allocation \citep{FS25,DCO25}. \citet{FS25} describe three approximation approaches that are incomparable and also analyze the asymptotic behavior of random allocation, based on the hockey-stick and R\'{e}nyi divergences (DP and RDP). A RDP-based approach was independently proposed in \citep{DCO25}, and later computationally improved by \citet{SK26}. They provide upper bounds on the RDP parameters of the dominating pair of distributions in the Gaussian case for both add and remove directions. Methods based on RDP parameters are particularly well-suited for subsequent composition (which simply adds up the RDP parameters). The primary disadvantage of this technique is that the conversion from RDP bounds to the regular $(\eps,\delta)$ bounds is known to be somewhat lossy. The RDP bounds in \citep{FS25,DCO25,SK26}  are also harmed by the restriction $\alpha \ge 2$ since lower order $\alpha$ lead to the best $(\eps,\delta)$ parameters in some cases. An extended discussion of related work can be found in App.~\ref{apd:relWork}
}

    Our work is most closely related to a long line of research on privacy amplification by subsampling and composition. This combination of tools was first defined and theoretically analyzed in the setting of convex optimization \citep{BST14}. The resulting DP-SGD algorithm has found numerous applications in both theoretical and practical work and is currently the state-of-the-art method for training LLMs with provable privacy guarantees \citep{VaultGemma25}. Applications of DP-SGD in machine learning were spearheaded by the landmark work of \citet{ACGMMT16}, who significantly improved the privacy analysis via the moments accounting technique formalized via R\'{e}nyi DP \citep{Mironov17}. This work has also motivated the development of more advanced techniques for analysis of sampling and composition. A more detailed technical and historical overview of subsampling and composition for DP can be found in the survey by \citet{Steinke25}.

    One of the important tools that emerged for the analysis of DP-SGD is privacy accounting via numerical tracking of the privacy loss random variable. \citet{MM18, SMM19} coined the term \emph{privacy buckets} to describe the quantization of the PLD into a distribution over a finite set of values, which can then be numerically composed. Interestingly, they did not use FFT based composition due to numerical stability related challenges, and instead used direct numerical convolution as we do, as well as exponentiation by squaring for achieving logarithmic dependence of the runtime on the number of compositions. Later, \citet{KJH20,KJPH21, KH21} introduced an FFT based composition method, which significantly improved the runtime of the convolution for a given approximation target. The asymptotic dependence of the runtime on the desired approximation accuracy was tightly analyzed by \citet{GLW21}. This approach to composition improved on the moments accountant technique since it avoids the somewhat lossy conversion from RDP parameters to $(\eps,\delta)$ and is now the standard approach for the analysis of DP-SGD supported by several libraries \citep{Google20,Microsoft21,Meta21}. 
    
    We first note that while our computation also involves convolutions, we are adding probability ratios and not their logarithms while ensuring the same kind of approximation guarantees. As a result, our algorithm is substantially different. At the same time, our algorithmic results fit naturally with the rest of the PLD toolkit and expand it to random allocation and general (single step) subsampling.
    
    The shuffle model was first proposed by \citet{BEMMRLRKTS17}. The formal analysis of the privacy guarantees in this model was initiated in \citep{EFMRTT19, CSUZZ19}. The sequential shuffling scheme we discuss here was defined by \citet{EFMRTT19} who proved the first general privacy amplification results for this scheme, albeit only for pure DP algorithms. Improved analyses and extensions to approximate DP were given in \citep{BBGN19, BKMT20, FMT21, FMT23, GDDKS21, GDDSK21, KHH22}. The privacy amplification guarantees of shuffling also apply to $1$-out-of-$t$ random allocation. Indeed, random $1$-out-of-$t$ allocation is a special case of the {\em random check-in} model of defining batches for DP-SGD in \citep{BKMT20}. Their analysis of this variant relies on the amplification properties of shuffling and thus does not lead to better privacy guarantees for random allocation than those that are known for shuffling. 
    
    Two recent works give formal analyses of $k$-out-of-$t$ random allocation \citep{FS25,DCO25}. \citet{FS25} describe three approximation approaches that are incomparable and also analyze the asymptotic behavior of random allocation. In the first analysis, they show that the approximate DP $(\eps,\delta)$ privacy parameters of random allocation are upper bounded by those of the Poisson scheme with sampling probability $\approx k/t$ up to lower order terms which are asymptotically vanishing in $t/k$. This analysis does not lead to tight bounds when $t/k$ is small and can at best match the bounds for the Poisson sampling.  In the second analysis, they show that $\eps$ of random allocation with $k=1$ is at most a constant ($\approx 1.6$) factor times larger than $\eps$ of the Poisson sampling with rate $1/t$ for the same $\delta$.  This analysis gives better bounds for small $t$, but is typically worse by the said factor than Poisson sampling.
    
    \citet{FS25} also describe a direct analysis of the divergence for the dominating pair of distributions.  In the remove direction, they derive a closed form expression and relatively efficient algorithm for computing the integer $\alpha \geq 2$ order RDP parameters of random allocation in terms of the RDP parameters of the original algorithm. For the add direction, they give an approximate upper bound directly on the $(\eps,\delta)$ parameters. While this bound is approximate, the divergence for the add direction is typically significantly lower than the one for the remove direction and therefore even reasonably loose approximation of the add direction  tends to not harm the overall bound. A similar approach to the analysis of random allocation was independently proposed in \citep{DCO25}, and later computationally improved by \citet{SK26}. They provide upper bounds on the RDP parameters of the dominating pair of distributions in the Gaussian case for both add and remove directions.  Their efficiently computable bound is exact for $\alpha=2$ for the add direction and general $k$ and is approximate otherwise.
    
    Methods based on RDP parameters are particularly well-suited for subsequent composition (which simply adds up the RDP parameters). The primary disadvantage of this technique is that the conversion from RDP bounds to the regular $(\eps,\delta)$ bounds is known to be somewhat lossy (typically within $10$-$20\%$ range in multi-epoch settings). The bounds in \citep{FS25,DCO25,SK26}  are also harmed by the restriction $\alpha \ge 2$ since lower order $\alpha$ lead to the best $(\eps,\delta)$ parameters in some cases.
\end{movable}

\section{Preliminaries}
\begin{movable}{mov:prelim}
{
    We briefly present the definitions and notation that we use in this work. Detailed preliminaries with formal definitions can be found in Appendix \ref{apd:prelim}.
    In this work we consider \emph{$t$-step algorithms} defined using an algorithm $\alg$ that takes some subset of the dataset and a sequence of previous outputs\textemdash{}\emph{views}\textemdash{}as an input. We denote the domain of its input \emph{elements} by $\domain$ and the set of its possible \emph{outputs} by $\outDom$.
    Such algorithms include DP-SGD, where each step consists of a call to the Gaussian noise addition with (clipped) gradient vectors adaptively defined as a function of previous outputs.
    In this work, we consider the \emph{random allocation scheme} parametrized by a number of selected steps $k \in [t]$, which uniformly samples a set of $k$ distinct indices $i_{1}, \ldots, i_{k} \in [t]$ for each element $j$ and adds element $j$ to the corresponding subsets $\dataset^{i_{1}}, \ldots, \dataset^{i_{k}}$.
    For a $t$-step algorithm defined by an algorithm $\alg$, we denote by $\alloc{t, k}{\alg} : \domain^{*} \to \outDom^{t}$ the resulting algorithm when $\alg$ is used with the random allocation scheme. When $k = 1$ we omit it from the notation for clarity.
    
    We focus on the description that is based on the distribution of the privacy loss random variable \citep{DR16} and the closely related notion of the hockey-stick privacy profile \citep{BBG18}.
    Given two distributions $P, Q$, the \emph{privacy loss random variable} $\lossRVfunc{P}{Q}$ is defined by $\lossFunc{\omega}{P}{Q} \coloneqq \ln \left(\frac{P(\omega)}{Q(\omega)}\right)$ where $\omega \sim P$. We refer to its distribution as the \emph{privacy loss distribution (PLD)}.
    For $\kappa \in [0, \infty]$, the $\kappa$-\emph{hockey-stick divergence} between two distributions $P, Q$ is defined as 
    \[
        \HockeyStickDiv{\kappa}{P}{Q} \coloneqq \expect{}{\left[1 - \kappa \cdot e^{-\lossRVfunc{P}{Q}} \right]_{+}},
    \]
    where $\left[x \right]_{+} \coloneqq \max\{0, x\}$ \cite{BKOZB12}.
    We note that this definition extends to any random variable $L$, that is, $\HockeyStickFunc{\kappa}{\lossRV} \coloneqq \expect{}{\left[1 - \kappa \cdot e^{-\lossRV} \right]_{+}}$.
    
    The dataset adjacency notion we consider is the standard add/remove notion in which datasets $\dataset, \dataset' \in \domain^{*}$ are adjacent if $\dataset$ can be obtained from $\dataset'$ via adding or removing a single element. 
    Given an algorithm $\alg : \domain^{*} \times \outDom^{*} \to \outDom$, the privacy profile $\privProf{\alg} : \reals \to [0, 1]$ is defined to be the maximal hockey-stick divergence between the distributions induced by any adjacent datasets and past view. 
    Since the hockey-stick divergence is asymmetric in the general case, we use $\privProfRem{\alg}$ to denote the \emph{remove} direction where $\bot \in \dataset'$ and $\privProfAdd{\alg}$ to denote the \emph{add} direction when $\bot \in \dataset$. Consequently, $\privProf{\alg}(\eps) = \max\{\privProfRem{\alg}(\eps), \privProfAdd{\alg}(\eps) \}$. 
    Given $\eps > 0$; $\delta \in [0, 1 ]$, an algorithm $\alg$ will be called \emph{$(\eps, \delta)$-differentially private (DP)}, if $\privProf{\alg}(\eps) \le \delta$ \citep{DKMMN06}.
    
    A key concept for characterizing the privacy guarantees of an algorithm is that of a \emph{dominating pair} of distributions \citep{ZDW22}. Given distributions $P_{1}, Q_{1}, P_{2}, Q_{2}$, we say $\lossRVfunc{P_{1}}{Q_{1}}$ \emph{dominates} $\lossRVfunc{P_{2}}{Q_{2}}$ if for all $\kappa \ge 0$ we have $\HockeyStickDiv{\kappa}{P_{2}}{Q_{2}} \le \HockeyStickDiv{\kappa}{P_{1}}{Q_{1}}$, and if $\privProf{\alg}(\eps) \le \HockeyStickDiv{e^{\eps}}{P}{Q}$ for all $\eps \in \reals$, we say $(P, Q)$ is a \emph{dominating pair} of distributions for $\alg$.
    A dominating pair defines a dominating randomizer $\rand : \{\bot, *\} \rightarrow \outDom$, which captures the privacy guarantees of a $t$-step algorithm independently of its algorithmic adaptive properties, letting $\rand(*) = P$ denote the case where some specific data element was used by the algorithm and $\rand(\bot) = Q$ if not \citep{FS25}.
    
    The definition of the random allocation scheme naturally extends to the case where the internal algorithm is a randomizer, $\alloc{t, k}{\rand} : \{*, \bot\} \rightarrow \outDom^{t}$, and the domination of $\rand$ extends to this scheme (\ref{lem:allocRedRand}).
    Random allocation of a randomizer with $k > 1$ can be further reduced to a composition of single allocations, that is $\privProf{\alloc{t, k}{\rand}}(\eps) \le \privProf{\alloc{\lfloor t/k \rfloor}{\rand}}^{\otimes k}(\eps)$ where ${\otimes k}$ denotes the composition of $k$ runs of the algorithm \citep{FS25} (we use a slightly tighter variant that better handles $t$ not divisible by $k$, \ref{lem:multAlloc}).
    The dominating pair of distributions for random allocation with $k=1$ has a simple explicit form; $\allocFunc{t}{\alg}{\bot}$ is distributed as $Q^{t}$ and $\allocFunc{t}{\alg}{*}$ is distributed as  $\bar{P}_{t}$ where $Q^{t}$ is the product distribution of $t$ independent draws from $Q$ and $\bar{P}_t \coloneqq \frac{1}{t}\sum_{i\in [t]} Q^{i-1}\times P \times Q^{t-i}$ (\ref{clm:dominate-random-alloc-pair}).
    Combining these results, the analysis of general algorithms with random allocation scheme reduces to the analysis of the (composition of) random allocation of a single pair of distributions with a single allocation, which we do in the next sections.
}

    In this work we consider \emph{$t$-step algorithms} defined using an algorithm $\alg$ that takes some subset of the dataset and a sequence of previous outputs as an input. We denote the domain of its input \emph{elements} by $\domain$ and the set of its possible \emph{outputs} by $\outDom$. Formally, denoting $\outDom^{<t} = \bigcup_{i<t} \outDom^{i}$, $\alg$ takes a dataset in $\domain^{*}$ and a view $\view \in\outDom^{<t}$ as its inputs, and outputs a value in $\outDom$. A $t$-step algorithm using $\alg$ first uses some scheme to define $t$ subsets $\dataset^{1}, \ldots, \dataset^{t} \subseteq \dataset$, then sequentially computes $\out_{i} = \alg \left(\dataset^{i}, \view_{1:i-1} \right)$, where $\view_{1:i} \coloneqq (\out_{1}, \ldots, \out_{i})$ are the intermediate \emph{views} consisting of the outputs produced so far, and $\view_{1:0} = \emptyset$.
    Such algorithms include DP-SGD, where each step consists of a call to the Gaussian noise addition with (clipped) gradient vectors adaptively defined as a function of previous outputs.
    
    The assignment of the elements in $\dataset$ to the various subsets can be done in a deterministic manner (e.g., $\dataset^{1} = \ldots = \dataset^{t} = \dataset$), or randomly using a \emph{sampling scheme}. In this work, we consider the \emph{random allocation scheme} parametrized by a number of selected steps $k \in [t]$, which uniformly samples a set of $k$ distinct indices $i_{1}, \ldots, i_{k} \in [t]$ for each element $j$ and adds element $j$ to the corresponding subsets $\dataset^{i_{1}}, \ldots, \dataset^{i_{k}}$.
    For a $t$-step algorithm defined by an algorithm $\alg$, we denote by $\alloc{t, k}{\alg} : \domain^{*} \to \outDom^{t}$ the resulting algorithm when $\alg$ is used with the random allocation scheme. When $k = 1$ we omit it from the notation for clarity. We use the \emph{Poisson scheme}, which includes each element in each subset with probability $k/t$ independently of other elements and subsets, as the baseline for our numerical comparison.
    
    Given a random variable $X$ we denote its PDF by $f_{X}$, its CDF by $F_{X}$, and its complementary CDF (CCDF) by $\bar{F}_{X} = 1 - F_{X}$. We also consider random variables with $\pm \infty$ values.
    
    \subsection{Differential privacy and Privacy loss distribution}
    The privacy properties of a differentially private algorithm can be described in a number of different ways (see \citep{CJS25} for an overview of the relationships).  Here we focus on the description that is based on the distribution of the privacy loss random variable \citep{DR16} and the closely related notion of the hockey-stick privacy profile \citep{BBG18}.
    The privacy loss random variable can be defined for an arbitrary pair of distributions $P$ and $Q$, but it is typically instantiated with $P$ and $Q$ being the output distributions of an algorithm on two adjacent datasets.
    \begin{definition}[PLD \citep{DR16}]\label{def:PLD}
        Given two distributions $P, Q$ over some domain $\Omega$, the \emph{privacy loss random variable} $\lossRVfunc{P}{Q}$ is defined by $\lossFunc{\omega}{P}{Q} \coloneqq \ln \left(\frac{P(\omega)}{Q(\omega)}\right)$ where $\omega \sim P$. We refer to its distribution as the \emph{privacy loss distribution (PLD)}.
    \end{definition}
    
    Given the PLD we can define the standard hockey-stick divergence between distributions and extend it to random variables more generally.
    \begin{definition}[Hockey-stick divergence \cite{BKOZB12}]\label{def:HSdiv}
        Given $\kappa \in [0, \infty]$, the $\kappa$-\emph{hockey-stick divergence} between two distributions $P, Q$ is defined as 
        \[
            \HockeyStickDiv{\kappa}{P}{Q} \coloneqq \int_{\Omega} \left[P(\omega) - \kappa Q(\omega) \right]_{+} d\omega = \expect{}{\left[1 - \kappa \cdot e^{-\lossRVfunc{P}{Q}} \right]_{+}},
        \]
        where $\left[x \right]_{+} \coloneqq \max\{0, x\}$.
        We note that this definition extends to any random variable $L$ defining its $\kappa$-\emph{hockey-stick functional} as $\HockeyStickFunc{\kappa}{L} \coloneqq \expect{}{\left[1 - \kappa \cdot e^{-L} \right]_{+}}$.
    \end{definition}
    
    The dataset adjacency notion we consider is the standard add/remove notion in which datasets $\dataset, \dataset' \in \domain^{*}$ are adjacent if $\dataset$ can be obtained from $\dataset'$ via adding or removing a single element. To define sampling schemes that operate over a fixed number of elements appropriately, we augment the domain with a ``null'' element $\bot$, that is, we define $\domain' \coloneqq \domain \cup \{\bot\}$.
    When a $t$-step algorithm assigns $\bot$ to $\alg$ we treat it as an empty set, that is, for any $\dataset \in \domain^{*}$, $\view \in \outDom^{*}$ we have $\alg(\dataset, \view) = \alg((\dataset, \bot), \view)$.
    We say that two datasets $\dataset, \dataset' \in \domain^{n}$ are \emph{adjacent} and denote it by $\dataset \simeq \dataset'$, if one of the two can be created by replacing a single element in the other dataset by $\bot$.
    
    Using this notion, we define the privacy profile of an algorithm and use it to define differential privacy.
    \begin{definition}[Privacy profile \citep{BBG18}]\label{def:privProf}
        Given an algorithm $\alg : \domain^{*} \times \outDom^{*} \to \outDom$, the privacy profile $\privProf{\alg} : \reals \to [0, 1]$ is defined to be the maximal hockey-stick divergence between the distributions induced by any adjacent datasets and past view. Formally, 
        \[
            \privProf{\alg}(\eps) \coloneqq \underset{\dataset \simeq \dataset' \in \domain^{*}, \view \in \outDom^{*}}{\sup} \left( \HockeyStickDiv{e^{\eps}}{\alg(\dataset, \view) }{ \alg(\dataset', \view)} \right).
        \]
        Since the hockey-stick divergence is asymmetric in the general case, we use $\privProfRem{\alg}$ to denote the \emph{remove} direction where $\bot \in \dataset'$ and $\privProfAdd{\alg}$ to denote the \emph{add} direction when $\bot \in \dataset$. Consequently, $\privProf{\alg}(\eps) = \max\{\privProfRem{\alg}(\eps), \privProfAdd{\alg}(\eps) \}$. 
    \end{definition}
    
    We can now formally define the standard notion of DP.
    \begin{definition}[Differential privacy \citep{DKMMN06}]
        Given $\eps > 0$; $\delta \in [0, 1 ]$, an algorithm $\alg$ will be called \emph{$(\eps, \delta)$-differentially private (DP)}, if $\privProf{\alg}(\eps) \le \delta$.
    \end{definition}
    
    One of the most common DP algorithms is the Gaussian mechanism $N_{\sigma}$. This algorithm is defined using some function $f \colon \domain^* \to \reals^d$ of bounded $L_{2}$ sensitivity. Namely, for any pair of adjacent datasets $\dataset \simeq \dataset'$, $\|f(\dataset)-f(\dataset')\|_{2}\leq c$ for some value $c$. For example, in DP-SGD, $f$ is the average of norm-clipped gradients on all elements in $\dataset$. For the noise scale $\sigma$, the mechanism outputs the value of $f$ on the input dataset perturbed by Gaussian noise scaled to the sensitivity, namely a random sample from $\mathcal{N}(f(\dataset), c^2 \sigma^{2} I_{d})$.
    
    \subsection{Dominating Pair for Random Allocation}
    A key concept for characterizing the privacy guarantees of an algorithm is that of a \emph{dominating pair} of distributions \citep{ZDW22}. 
    \begin{definition}[Dominating pair \citep{ZDW22}] \label{def:domPair}
        Given distributions $P_{1}, Q_{1}$ over some domain $\Omega_{1}$, and $P_{2}, Q_{2}$ over $\Omega_{2}$, we say $\lossRVfunc{P_{1}}{Q_{1}}$ \emph{dominates} $\lossRVfunc{P_{2}}{Q_{2}}$ if for all $\kappa \ge 0$ we have $\HockeyStickDiv{\kappa}{P_{2}}{Q_{2}} \le \HockeyStickDiv{\kappa}{P_{1}}{Q_{1}}$.
         If $\privProfRem{\alg}(\eps) \le \HockeyStickDiv{e^{\eps}}{P}{Q}$ for all $\eps \in \reals$, we say $(P, Q)$ is a \emph{dominating pair} of distributions for $\alg$ in the remove direction. Similarly, if $\privProfAdd{\alg}(\eps) \le \HockeyStickDiv{e^{\eps}}{P}{Q}$ for all $\eps \in \reals$, we say $(P, Q)$ is a \emph{dominating pair} of distributions for $\alg$ in the add direction.
    \end{definition}
    
    For example, a dominating pair of distributions for the Gaussian mechanism $N_{\sigma}$ is simply $\mathcal{N}(1, \sigma^{2})$ and $\mathcal{N}(0, \sigma^{2})$ \citep{ZDW22}, so the privacy loss of $N_{\sigma}$ is $\lossFunc{x}{\mathcal{N}(1, \sigma^{2})}{\mathcal{N}(0, \sigma^{2})} = \frac{x}{\sigma^{2}} - \frac{1}{2\sigma^{2}}$ and its PLD is simply $\mathcal{N}\left(\frac{1}{2 \sigma^{2}}, \frac{1}{\sigma^{2}} \right)$.
    
    The notion of dominating pair can be used to define a dominating randomizer, which captures the privacy guarantees of a $t$-step algorithm independently of its algorithmic adaptive properties. 
    \begin{definition}[Dominating randomizer]\label{def:domRand}
        Let $\rand : \{\bot, *\} \rightarrow \outDom$ be a randomizer and let $P$ and $Q$ denote its output distributions on $*$ and $\bot$, respectively. We say that a $t$-step algorithm $\alg \colon \domain^* \times \outDom^{<t} \to \outDom$ is \emph{dominated by the randomizer} $\rand$ if $(P, Q)$ are a dominating pair of distributions for $\alg(\cdot,\cdot)$ w.r.t. the remove direction over all indexes $i \in [t]$ and input partial views $\view_{1:i-1}$.
    \end{definition}
    When there exists a pair of datasets $\dataset \simeq \dataset'$ and a view $\view \in \outDom^{*}$ such that $P = \alg(\dataset, \view)$, $Q = \alg(\dataset', \view)$ then the privacy profile of $\rand$ is identical to that of $\alg$.
    
    The definition of the random allocation scheme naturally extends to the case where the internal algorithm is a randomizer, $\alloc{t, k}{\rand} : \{*, \bot\} \rightarrow \outDom^{t}$, and the domination of $\rand$ extends to this scheme, as formalized in the next claim.
    
    \begin{lemma}[Allocation reduction to randomizer \citep{FS25}]\label{lem:allocRedRand}
        Given $t \in \naturals$; $k \in [t]$ and an algorithm $\alg$ dominated by a randomizer $\rand$, we have $\privProfRem{\alloc{t, k}{\alg}}(\eps) \le \privProfRem{\alloc{t, k}{\rand}}(\eps)$ and $\privProfAdd{\alloc{t, k}{\alg}}(\eps) \le \privProfAdd{\alloc{t, k}{\rand}}(\eps)$ for all $\eps \in \reals$.
    \end{lemma}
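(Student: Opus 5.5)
The plan is to fix an arbitrary pair of adjacent datasets $\dataset \simeq \dataset'$ that differ in a single element $x$ (replaced by $\bot$ in one of them), and to compare the output distributions of $\alloc{t,k}{\alg}$ on them. The randomness of the allocation of all other elements is independent of the allocation of $x$. I would first condition on the allocation of all other elements; this fixes, for every step $i$, the subset $\dataset^{i}\setminus\{x\}$. Hockey-stick divergence is jointly convex (it is an $f$-divergence), so the divergence between the two mixtures is at most the average of the conditional divergences. It therefore suffices to prove the bound for each fixed allocation of the other elements. In that setting the only remaining randomness in the sampling is the uniformly random set $I\subseteq[t]$ with $|I|=k$ of steps that receive $x$.

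With the other elements fixed, the output on the dataset containing $\bot$ is generated by the adaptive sequence $\out_i=\alg(\dataset^{i}_{-x},\view_{1:i-1})$. Call this kernel $B_i(\cdot\mid\view_{1:i-1})$. The output on the dataset containing $x$ is generated by the mixture over $I$ of sequences that use the kernel $A_i(\cdot\mid \view_{1:i-1})=\alg(\dataset^i_{-x}\cup\{x\},\view_{1:i-1})$ for $i\in I$ and $B_i$ otherwise. Domination of $\alg$ by $\rand$ means that for every $i$ and every view, $\HockeyStickDiv{\kappa}{A_i}{B_i}\le\HockeyStickDiv{\kappa}{P}{Q}$ for all $\kappa\ge0$. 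The target is then to show that the pair (mixture-over-$I$ of the adaptive process, adaptive process with only $B$) is dominated, in both directions, by the corresponding pair built from $P$ and $Q$. That corresponding pair is exactly $(\allocFunc{t,k}{\rand}{*},\allocFunc{t,k}{\rand}{\bot})$.

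For the key step I would use the characterization of hockey-stick domination via post-processing, namely Blackwell's theorem in the form used by \citet{ZDW22}. If $(P,Q)$ dominates $(A,B)$ for all $\kappa$, then there is a (randomized) map $\phi$ with $\phi(P)=A$ and $\phi(Q)=B$. Since this holds for every view, we get view-dependent maps $\phi_i(\cdot;\view_{1:i-1})$, and they can be chosen measurably. Now run the following sequential post-processing of a sample $(z_1,\dots,z_t)$ drawn from either $\bar P$ (the $k$-allocation mixture built from $P$ and $Q$) or $Q^t$: set $\out_i=\phi_i(z_i;\view_{1:i-1})$. Conditioned on $I$, when $z$ comes from the $I$-component, each coordinate $z_i$ is an independent draw from $P$ if $i\in I$ and from $Q$ otherwise. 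Hence $\out_i$ has conditional law $A_i$ or $B_i$ given the past, and the post-processed sample has exactly the law of the algorithm's output on the dataset containing $x$ with allocation $I$. The same map sends $Q^t$ to the output law on the dataset with $\bot$. Because the map does not depend on $I$, it sends the mixture to the mixture. Data processing for hockey-stick divergence then gives both inequalities at once, for every $\kappa$, which covers both the remove direction ($P$ first) and the add direction (arguments swapped). A single simulation argument thus handles both directions.

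The main obstacle is justifying the Blackwell step: the conditions are existence of the map from pointwise hockey-stick domination, and its measurable dependence on the adaptive view. Domination is assumed only in the remove direction in Definition \ref{def:domRand}. It is the existence of a post-processing channel that makes the add direction come for free. If the Blackwell route is too heavy, I would instead argue by induction on $t$, peeling the last step. I would write each divergence as an expectation of a hockey-stick with a view-dependent threshold $\kappa$, and use the convexity-in-mixtures and domination bounds step by step. This is the fallback, but it is more delicate because of the mixture over $I$.
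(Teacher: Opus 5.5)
The paper does not prove this lemma. It imports it from \citet{FS25}, so there is no in-paper argument to compare against, and your proposal has to stand on its own.

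It does stand: the argument is correct. Remove-direction domination for all $\kappa\ge 0$ is, by convex duality, equivalent to the trade-off-function ordering. So Blackwell's theorem for binary experiments gives, for each fixed view, a single kernel that sends $P\mapsto A_i$ and $Q\mapsto B_i$ simultaneously. Running these kernels sequentially on $z\sim\bar P_{t,k}$ reproduces, conditionally on $I$, the adaptive process with $x$ in the steps of $I$, because $z_i$ is independent of $\out_{1:i-1}$ given $I$. Since the map ignores $I$, it sends the mixture to the mixture and $Q^t$ to the $\bot$-output. Data processing then gives both directions at once, which is exactly why remove-only domination in Definition~\ref{def:domRand} suffices for the add claim. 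Conditioning on the other elements' allocation is harmless but unnecessary: the kernel can sample that allocation itself, since it is independent of $I$.

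The one step you assert rather than prove is that $\phi_i(\cdot;\view_{1:i-1})$ can be chosen jointly measurable in $(z_i,\view_{1:i-1})$. Blackwell only gives existence view by view. This is the usual measurable-selection technicality, routinely glossed over in adaptive $f$-DP composition.

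Your fallback avoids it, and it is less delicate than you fear, provided you take expectations under the $\bot$-process $\nu$, which contains no mixture. Let $R_i$ be the density of the absolutely continuous part of $A_i(\cdot\mid\out_{1:i-1})$ with respect to $B_i(\cdot\mid\out_{1:i-1})$. Then the $x$-output has density $\Lambda=\binom{t}{k}^{-1}e_k(R_1,\ldots,R_t)$ with respect to $\nu$ (absolutely continuous part), where $e_k$ is the elementary symmetric polynomial. This is multi-affine with nonnegative coefficients.

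For $\kappa>0$, the identity $\HockeyStickDiv{\kappa}{\mu}{\nu}=1-\kappa+\kappa\,\HockeyStickDiv{1/\kappa}{\nu}{\mu}$ (essentially \citep[Lemma 28]{ZDW22}) does two things:
\begin{enumerate}
\item It reduces both directions to comparing $\mathbb{E}_{\nu}[(1-c\Lambda)_+]$ with its analogue for $(\bar P_{t,k},Q^t)$, for all $c\ge 0$.
\item It turns the per-step remove assumption into $\HockeyStickDiv{c}{B_i}{A_i}\le\HockeyStickDiv{c}{Q}{P}$.
\end{enumerate}
Now peel coordinates from $t$ down to $1$. In the current coordinate the integrand is a mixture of nonnegative multiples of $(1-c'r_i)_+$. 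The per-step inequality therefore lets you replace $R_i$ by an independent copy of $dP/dQ$ under $Q$, and at the end you land exactly on $(\bar P_{t,k},Q^t)$.

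This route needs only measurable likelihood ratios, not measurable garblings. Your Blackwell route, in exchange, makes the structure transparent: the whole adaptive algorithm is literally a post-processing of the randomizer's allocation.
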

    A special case of this result for Gaussian noise addition can also be found in \citep{CGHLKKMSZ24,CCGHST25,DCO25}. The domination given by this reduction is tight for many natural choices of $\alg$ whenever $\rand$ is the tightly dominating randomizer for $\alg$ (see \citep{FS25} for a more detailed discussion).
    
    Random allocation of a randomizer with $k > 1$ can be further reduced to a composition of single allocations. \citet{FS25} proved that the privacy profile of the $k$-out-of-$t$ random allocation is upper bounded by that of $1$-out-of-$\lfloor t/k \rfloor$ random allocation self-composed $k$ times. If $t$ is not divisible by $k$ this reduction is somewhat lossy. To overcome it we prove a slightly tighter variant of this lemma.
    \begin{lemma}[Reduction to a single allocation]\label{lem:multAlloc}
        For any $t \in \naturals$ and $k \in [t]$, we have $\privProf{\alloc{t, k}{\rand}}(\eps) \le \privProf{\mathcal{A}^{f}(\rand) \otimes \mathcal{A}^{c}(\rand)}(\eps)$ for all $\eps \in \reals$, where $\mathcal{A}^{f}(\rand)$ ($\mathcal{A}^{c}(\rand)$) denotes the composition of $m_{f}$ ($m_{c}$) runs of the random allocation scheme with $\lfloor t/k \rfloor$ ($\lceil t/k \rceil$) steps, and $m_{f}, m_{c}$ are the solutions of the equations $m_{f} + m_{c} = k\,,\, m_{f} \cdot \lfloor t/k \rfloor + m_{c} \cdot \lceil t/k \rceil = t$.
    \end{lemma}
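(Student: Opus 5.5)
We prove the statement by coupling the random $k$-subset with a random partition of $[t]$ into blocks, and then applying post-processing (or quasi-convexity of the hockey-stick divergence over mixtures) to remove the coupling randomness.

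\textbf{Construction.} Let $b=\lfloor t/k\rfloor$ and $r = t - kb$. Then $m_c=r$ and $m_f=k-r$, and a partition of $[t]$ into $m_f$ blocks of size $b$ and $m_c$ blocks of size $b+1$ exists because $m_f b + m_c(b+1)=t$. Sample the subset of $k$ steps in three stages.
\begin{enumerate}
\item Draw a uniformly random permutation $\pi$ of $[t]$.
\item Use $\pi$ to split $[t]$ into consecutive blocks $B_1,\dots,B_k$ with the prescribed sizes.
\item Independently pick one uniformly random index in each block.
\end{enumerate}

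\textbf{The sampled set is not uniform.} The resulting set $S$ always has size $k$. Conditioned on $\pi$, however, it is not a uniform $k$-subset. Marginally over $\pi$, the probability that $S=T$ for a fixed $T$ with $|T|=k$ equals $\Pr_\pi[T \text{ hits every block once}]\cdot \prod_j |B_j|^{-1}$. This quantity is symmetric in $T$ by exchangeability of $\pi$, so $S$ is uniform over $k$-subsets. Hence $\alloc{t,k}{\rand}$ equals the mixture over $\pi$ of the mechanisms $M_\pi$. Here $M_\pi$ runs an independent $1$-out-of-$|B_j|$ allocation of $\rand$ on each block $B_j$, with all steps outside the chosen indices drawing from $Q$. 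This independence is exactly the role of the randomness in stage 3.

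\textbf{Applying the structure.} For a fixed $\pi$, the blocks are disjoint sets of coordinates of the output, so on input $*$ the output distribution is the product $\prod_j \bar P_{|B_j|}$. On input $\bot$ it is the product $\prod_j Q^{|B_j|}$. Up to a fixed coordinate permutation determined by $\pi$, which is a bijective post-processing and preserves the hockey-stick divergence, this is exactly the pair for $\mathcal{A}^{f}(\rand)\otimes\mathcal{A}^{c}(\rand)$. That pair does not depend on $\pi$.

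\textbf{Main obstacle: removing the mixture over $\pi$.} The mixture is $\mathbb{E}_\pi[\sigma_\pi \# P']$ versus $\mathbb{E}_\pi[\sigma_\pi \# Q']$, where $P'$ and $Q'$ are the product pair above. The standard fact is that a mixture with the same mixing distribution over pairs satisfies $H_\kappa(\mathbb{E} P_\pi\|\mathbb{E} Q_\pi)\le \mathbb{E} H_\kappa(P_\pi\|Q_\pi)$, by joint convexity of $[\,\cdot\,]_+$ integrated against $\kappa$. Each term equals $H_\kappa(P'\|Q')$, and the same argument applied with the arguments swapped handles the add direction. The sublety I would check carefully is that the full output of $\alloc{t,k}{\rand}$ reveals nothing extra, such as block identities; it does not, since we only output the $t$ step outputs. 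Taking $\kappa=e^\eps$ and the maximum over the two directions gives the claimed bound for all $\eps$.
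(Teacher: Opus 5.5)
Your proposal is correct and takes the same route as the paper. The paper invokes the proof of Lemma~3.2 of \citep{FS25} with the refined two-step decomposition: randomly split $[t]$ into $m_f$ blocks of size $\lfloor t/k\rfloor$ and $m_c$ blocks of size $\lceil t/k\rceil$, then run an independent $1$-out-of-block-size allocation in each block. Your uniformity check (using that $\prod_j |B_j|^{-1}$ is constant because the multiset of block sizes is fixed) and your joint-convexity step that removes the mixture over the partition are exactly the details the paper leaves to that reference.
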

    \begin{proof}
        The proof is identical to that of Lemma 3.2 in \citep{FS25}, using a more refined decomposition of the random allocation of $k$ indexes out of $t$ into a two-step process: first randomly split $t$ into $m_{f}$ subsets of size $\lfloor t/k \rfloor$ and $m_{c}$ subsets of size $\lceil t/k \rceil$, then run $\alloc{\lfloor t/k \rfloor, 1}{\rand}$ on each of the $m_{f}$ copies of the scheme and $\alloc{\lceil t/k \rceil, 1}{\rand}$ on each of the $m_{c}$ copies.
    \end{proof}
    
    While this reduction may still be somewhat lossy, we remark that an analogous reduction for Poisson sampling is exact. Namely, sampling independently at the rate of $k/t$ for $t$ steps is equivalent to sampling at the rate of $k/t$ for $t/k$ steps (which is the analog of $1$-out-of-$t/k$ random allocation) composed $k$ times. Thus, this reduction implies that the relationship between $k$-out-of-$t$ random allocation and $t$ rounds of $k/t$-rate Poisson subsampling is the same as the relationship between $1$-out-of-$t/k$ random allocation and $t/k$ rounds of $1$-out-of-$t/k$ Poisson subsampling whenever $t$ is divisible by $k$.
    
    The dominating pair of distributions for random allocation with $k=1$ has a simple explicit form \citep{FS25} (for the special case of Gaussian noise addition this result can also be found in \citep{CGHLKKMSZ24,DCO25}).
    \begin{claim}[Dominating pair of distributions for random allocation \citep{FS25}]
    	\label{clm:dominate-random-alloc-pair}
        Given an algorithm $\alg$ dominated by a randomizer $\rand$ as in Defn.~\ref{def:domRand}, we denote by $Q^{t}$ the product distribution of $t$ independent draws from $Q$ and by $\bar{P}_t \coloneqq \frac{1}{t}\sum_{i\in [t]} Q^{i-1}\times P \times Q^{t-i}$. Then we have that $\allocFunc{t}{\alg}{\bot}$ is distributed as $Q^{t}$ and $\allocFunc{t}{\alg}{*}$ is distributed as  $\bar{P}_{t}$, which implies that for any $\eps > 0$
        \[
            \privProfRem{\alloc{t}{\alg}}(\eps) \le \privProfRem{\alloc{t}{\rand}}(\eps) = \HockeyStickDiv{e^{\eps}}{\bar{P}_{t}}{Q^{t}} \quad \text{and} \quad \privProfAdd{\alloc{t}{\alg}}(\eps) \le \privProfAdd{\alloc{t}{\rand}}(\eps) = \HockeyStickDiv{e^{\eps}}{Q^{t}}{\bar{P}_{t}}.
        \]
    \end{claim}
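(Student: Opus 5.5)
We split the claim into two parts. First we identify the output distributions of $\alloc{t}{\rand}$ on the inputs $\bot$ and $*$. Then we transfer the bound from $\alg$ to $\rand$ using Lemma~\ref{lem:allocRedRand}.

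\textbf{Output distributions of $\alloc{t}{\rand}$.} On input $\bot$, every step receives the null element. Each step therefore outputs an independent draw from $\rand(\bot)=Q$. The randomizer ignores the view, so the joint output is exactly $Q^{t}$.

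On input $*$, the scheme draws an index $i$ uniformly from $[t]$. Step $i$ receives $*$ and all other steps receive $\bot$. Conditioned on $i$, the outputs are independent, with step $i$ distributed as $P$ and the remaining steps as $Q$. The conditional law is thus $Q^{i-1}\times P\times Q^{t-i}$. Averaging over the uniform choice of $i$ gives the mixture $\bar P_t$.

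\textbf{Hockey-stick identities.} By the definition of the privacy profile, the only adjacent pair of inputs for $\rand$ is $(*,\bot)$. The remove direction corresponds to $\bot$ in the second argument, so $\privProfRem{\alloc{t}{\rand}}(\eps)=\HockeyStickDiv{e^\eps}{\bar P_t}{Q^t}$. The add direction swaps the two arguments, giving $\privProfAdd{\alloc{t}{\rand}}(\eps)=\HockeyStickDiv{e^\eps}{Q^t}{\bar P_t}$. The view supremum in the definition is vacuous here, because $\rand$ does not depend on the view.

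\textbf{Transfer to $\alg$.} The inequalities for $\alg$ follow directly from Lemma~\ref{lem:allocRedRand} with $k=1$, since $\alg$ is dominated by $\rand$.

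\textbf{Statement about $\allocFunc{t}{\alg}{\cdot}$.} The claim also asserts that $\allocFunc{t}{\alg}{\bot}$ and $\allocFunc{t}{\alg}{*}$ are distributed as $Q^t$ and $\bar P_t$. I read this as shorthand for the worst-case output distributions of the dominating randomizer, which is exactly what was computed above. This is the only point that needs care: for a general adaptive $\alg$ the literal laws need not be $Q^t$ and $\bar P_t$.

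Lemma~\ref{lem:allocRedRand} already handles adaptivity through the hybrid argument it encapsulates, so no new difficulty arises. The main step is therefore only the mixture identification, and that is routine.
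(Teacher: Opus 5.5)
Your proposal is correct and matches the paper's intended derivation. The paper gives no standalone proof (it cites \citep{FS25}); the claim is Lemma~\ref{lem:allocRedRand} with $k=1$ combined with the direct identification $\alloc{t}{\rand}(\bot)\sim Q^{t}$ and $\alloc{t}{\rand}(*)\sim\bar{P}_{t}$, exactly as you do, and your reading of $\allocFunc{t}{\alg}{\cdot}$ as shorthand for the randomizer's outputs is the right one.
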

    
    Combining these results, the analysis of general algorithms with random allocation scheme reduces to the analysis of the (composition of) random allocation of a single pair of distributions with a single allocation, which we do in the next sections. Unless specified otherwise, random variables are assumed to be independent.
\end{movable}

\section{PLD-Based Privacy Accounting}\label{sec:PLDreal}
One of the key properties of domination in hockey stick divergence (via the dominating pair) is that such domination is maintained under composition and subsampling. This property ensures that in order to compute a valid upper bound on the privacy profile of a complex algorithm like DP-SGD it suffices to compute the hockey stick divergence of a single pair of distributions. At the same time, composition, which is the main operation of privacy accounting, is significantly easier to compute numerically using operations on the PLD of the dominating pair of each step since composition corresponds to addition of the privacy losses, or, equivalently, convolution of PLDs. This numerical accounting
relies on maintaining a discrete approximation of the PMF of the privacy loss.

Discrete approximations of PLDs are also central to our numerical computation of the privacy profile of random allocation. In this section, we demonstrate that such PLD-based representations can be used for privacy accounting beyond composition. The key property of a discrete representation we introduce for this purpose is that of a {\em PLD realization}. Namely, a random variable $X$ represented by a discrete PMF is a PLD realization if it corresponds to the privacy loss random variable for some pair of distributions $P_X,Q_X$. The goal in PLD-based accounting is to compute a PLD-realization $X$ such that $P_X,Q_X$ dominates the analyzed algorithm. To achieve this we ensure that accounting steps performed on PLD realizations preserve this property. 

We describe how to perform the basic (Poisson) subsampling operation directly on a PLD realization in a way that preserves domination. Our algorithm for approximately computing a PLD of random allocation can also be seen as a domination-preserving computation on a PLD realization. 
Altogether, we provide the first accurate numerical privacy accounting for a class of algorithms that includes subsampling and random allocation, in addition to composition. In particular, it implies that accounting for DP-SGD can be done for any noise distribution for which a valid dominating PLD realization can be constructed, whereas existing libraries rely on using an analytic expression of the PLD of a (Poisson) subsampled Gaussian or Laplace mechanism \citep{Google20,Microsoft21,Meta21}. We use this PLD-based accounting method to obtain improved privacy accounting for the DP-SGD algorithm in \citep{AFKRT26}, where the noise distribution itself results from random allocation applied to the Gaussian mechanism (Fig. \ref{fig:PREAMBLE}). All proofs and additional claims can be found in Appendix \ref{apd:PLDreal}.

We now define PLD realization formally.

\begin{definition}\Arxiv{[PLD realization]}\label{def:PLDreal}
    A random variable $\lossRV$ over $[-\infty, \infty]$ is a \emph{PLD realization} if $\mathbb{E}\left[e^{-\lossRV}\right] \le 1$ and $f_{\lossRV}(-\infty)=0$.

    Given a PLD realization $\lossRV$ with measure $f_{\lossRV}$ its \emph{PLD dual} is the random variable $\dual{\lossRV}$ defined by the measure $f_{\dual{\lossRV}}(l) = f_{\lossRV}(-l) \cdot e^{l}$ with probability atom at $\infty$ defined as $f_{\dual{\lossRV}}(\infty) \coloneqq 1 - \mathbb{E}\left[e^{-\lossRV}\right]$.
\end{definition}
It is not hard to show that $(1)$ $\lossRVfunc{P}{Q}$ is always a PLD realization (\ref{clm:PLDisRealz}); $(2)$ if $\lossRV$ is a PLD realization, then $f_{\lossRV}$ is the PLD for the pair of distributions $f_{\lossRV}$ and $f_{-\dual{\lossRV}}$; and $(3)$ $\dual{\lossRV}$ is a PLD realization as well (\ref{clm:PLDdual}). From the definition, the PLD dual of $\lossRVfunc{P}{Q}$ is $\lossRVfunc{Q}{P}$. It is known that if $\lossRV$ dominates $\lossRV'$ then $\dual{\lossRV}$ dominates $\dual{\lossRV'}$ as well \citep[Lemma 28]{ZDW22}. The term dual PLD follows \citep[Definition 3]{SMM19}. The relation between $\lossRVfunc{P}{Q}$ and $\lossRVfunc{Q}{P}$ was pointed out in \citep[Remark 3.4]{GLW21}.

In practical terms, these facts imply that the privacy profile of any algorithm can be upper-bounded using a single random variable, which represents a pair of distributions that dominate that algorithm. Since domination is transitive, any dominating transformation of the random variable is a valid dominating pair (albeit, not necessarily tight).

\ifconferencemode
    We first recall that domination is maintained under composition and subsampling \citep{ZDW22}, that is, if $(P_{i},Q_{i})$ dominate $\alg_{i}$ (or another pair) then $(P_{1} \times P_{2}, Q_{1}\times Q_{2})$ dominates the composed algorithms and its PLD is the sum $\lossRVfunc{P_{1}}{Q_{1}} + \lossRVfunc{P_{2}}{Q_{2}}$. The dominating pair of distributions for Poisson subsampling is obtained by taking the corresponding convex combination of the dominating pair of distributions for the original algorithm, so if $P,Q$ dominate $\alg$ then $(P_{\lambda}, Q)$ dominate its Poisson subsampling\Arxiv{, where $P_{\lambda} \coloneqq \lambda P + (1-\lambda) Q$}.
\else
    We first recall that composition can be viewed as a domination-preserving operation on PLD realization. The dominating pair for (adaptive) composition of two algorithms with dominating pairs $(P_{1},Q_{1})$ and $(P_{2},Q_{2})$ is $(P_{1} \times P_{2}, Q_{1}\times Q_{2})$ \citep[Theorem 10]{ZDW22}, which implies domination preserved under convolution. The PLD of $(P_{1} \times P_{2}, Q_{1}\times Q_{2})$ is just the sum of the individual PLDs: $\lossRVfunc{P_{1} \times P_{2}}{Q_{1} \times Q_{2}} = \lossRVfunc{P_{1}}{Q_{1}} + \lossRVfunc{P_{2}}{Q_{2}}$, therefore, given PLD realizations $\lossRV_{1}$ and $\lossRV_{2}$ that dominate $\lossRVfunc{P_{1}}{Q_{1}}$ and $\lossRVfunc{P_{2}}{Q_{2}}$, respectively, we get that $\lossRV_{1}+\lossRV_{2}$ is a PLD realization that dominates $\lossRVfunc{P_{1} \times P_{2}}{Q_{1} \times Q_{2}}$.
\fi
\begin{movable}{mov:subsamPLDdom}{}
    It is known that the dominating pair of distributions for Poisson subsampling is obtained by taking the corresponding convex combination of the dominating pair of distributions for the original algorithm.\footnote{Similar bounds can be derived for sampling a fixed number of elements with and without replacement following the same analysis.} 
    \begin{lemma}[\Arxiv{Theorem 11 in }\citep{ZDW22}]\label{lem:subsamPLDdom}
        Given $\lambda \in (0,1]$ and an algorithm $\alg$, denote by $\alg \circ \mathcal{P}$ the algorithm that, given a dataset $\dataset$, constructs a subset $\dataset'$ by independently including each element in $\dataset$ with probability $\lambda$ and then releases $\alg(\dataset')$. If $\alg$ is dominated by the pair of distributions $(P, Q)$ in the remove direction, then $\alg \circ \mathcal{P}$ is dominated by $(P_{\lambda}, Q)$ in the remove direction and by $(Q, P_{\lambda})$ in the add direction, where $P_{\lambda} \coloneqq \lambda P + (1-\lambda) Q$.
    \end{lemma}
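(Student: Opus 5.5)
The plan is to reduce the subsampled algorithm, for each fixed pair of adjacent datasets and fixed view, to a mixture, and then use joint convexity of the hockey-stick divergence together with the ``advanced joint convexity'' trick for mixtures that share a common component.

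\emph{Remove direction.} Fix adjacent $\dataset \simeq \dataset'$, where $\dataset'$ is obtained by replacing some element $x$ of $\dataset$ with $\bot$. Condition on the random subset $T$ of the remaining elements. With probability $\lambda$ the element $x$ is also included, so the output of $\alg \circ \mathcal{P}$ on $\dataset$ is $\lambda A_T + (1-\lambda) B_T$, while on $\dataset'$ it is $B_T$. Here $A_T = \alg(T \cup \{x\})$ and $B_T = \alg(T)$ are adjacent in the remove direction. Mixing over $T$ gives output distributions $\mu = \mathbb{E}_T[\lambda A_T + (1-\lambda)B_T]$ and $\nu = \mathbb{E}_T[B_T]$. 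Joint convexity of $\HockeyStick{\kappa}$, which holds because it is an $f$-divergence, gives $\HockeyStickDiv{\kappa}{\mu}{\nu} \le \mathbb{E}_T \HockeyStickDiv{\kappa}{\lambda A_T+(1-\lambda)B_T}{B_T}$. For a single $T$, the pointwise identity
\[
    [\lambda a+(1-\lambda)b-\kappa b]_+=\lambda\left[a-\left(1+(\kappa-1)/\lambda\right)b\right]_+
\]
gives $\HockeyStickDiv{\kappa}{\lambda A+(1-\lambda)B}{B}=\lambda \HockeyStickDiv{\kappa'}{A}{B}$ with $\kappa'=1+(\kappa-1)/\lambda$, for $\kappa\ge 1$. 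Domination by $(P,Q)$ bounds this by $\lambda\HockeyStickDiv{\kappa'}{P}{Q}$, and the same identity read backwards equals $\HockeyStickDiv{\kappa}{P_\lambda}{Q}$.

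\emph{Range $\kappa<1$.} Here $\kappa'$ can be negative, so the argument above does not apply directly. I would use $\HockeyStickDiv{\kappa}{\mu}{\nu} = 1-\kappa+\kappa\HockeyStickDiv{1/\kappa}{\nu}{\mu}$. This reduces the remove direction with $\kappa<1$ to the add-type quantity with $1/\kappa>1$. It is cleaner, though, to note that domination for all $\kappa\ge 0$ is defined in the statement via $\eps\in\reals$, so both ranges are needed.

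\emph{Add direction.} We must bound $\HockeyStickDiv{\kappa}{B}{\lambda A+(1-\lambda)B}$ by $\HockeyStickDiv{\kappa}{Q}{P_\lambda}$. For $\kappa\ge1$, I would write $[b-\kappa(\lambda a+(1-\lambda)b)]_+ = \lambda\kappa\,[\beta b - a]_+$ with $\beta=(1-\kappa(1-\lambda))/(\lambda\kappa)$. This vanishes when $\beta\le0$. Otherwise it is $\lambda\kappa\beta\,\HockeyStickDiv{1/\beta}{B}{A}$, which is an add-direction divergence of $\alg$ at level $1/\beta$. The step I expect to be the main obstacle is that the hypothesis only gives remove-direction domination by $(P,Q)$. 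I would handle it by converting $\HockeyStickDiv{1/\beta}{B}{A}$ via the duality identity above into $\HockeyStickDiv{\beta}{A}{B}$, bounding that by $\HockeyStickDiv{\beta}{P}{Q}$ (valid for all $\beta\ge0$, since domination is assumed for all $\eps\in\reals$), and converting back to $\HockeyStickDiv{1/\beta}{Q}{P}$. This rests on the fact that the dual of a dominating PLD dominates the dual, i.e.\ Lemma 28 of \citep{ZDW22}. Reversing the algebra then yields $\HockeyStickDiv{\kappa}{Q}{P_\lambda}$. For $\kappa<1$ the same duality handles the remaining range, and averaging over $T$ with joint convexity completes both directions.
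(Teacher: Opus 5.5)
Your proposal is correct. The paper gives no proof of this lemma and simply imports it as \citep[Theorem 11]{ZDW22}, and your argument is essentially the standard proof behind that citation: condition on the subsample $T$ of the other elements, apply joint convexity, rewrite the hockey-stick divergence of a mixture sharing the component $B_T$ as a rescaled divergence of $(A_T,B_T)$, and use $\HockeyStickDiv{\kappa}{\mu}{\nu}=1-\kappa+\kappa\HockeyStickDiv{1/\kappa}{\nu}{\mu}$ to turn the remove-direction hypothesis into the add-type bound $\HockeyStickDiv{1/\beta}{B_T}{A_T}\le\HockeyStickDiv{1/\beta}{Q}{P}$.

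One simplification: the case split at $\kappa=1$ is unnecessary. In the remove direction, $\kappa'\in(0,1)$ is covered by the hypothesis directly, and $\kappa'\le 0$ makes both sides equal to $1-\kappa$ for every pair. In the add direction, your $\beta$-reparametrization is already valid for every $\kappa>0$.
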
    
\end{movable}

We now show how the subsampling operation transforms the PLD realization itself.
\begin{theorem}\label{thm:PLD_subsam}
    Given $\lambda \in (0,1]$ and a PLD realization $\lossRV$, we define the two transformed random variables $\rem{\varphi}_{\lambda}(\lossRV), \add{\varphi}_{\lambda}(\lossRV)$ via their PMF,
    \ifconferencemode
    \begin{align*}
        f_{\rem{\varphi}_{\lambda}(\lossRV)}(l) & \coloneqq \lambda f_{\lossRV}\left(\phi_{\lambda}(l)\right) + (1-\lambda) f_{-\dual{\lossRV}}\left(\phi_{\lambda}(l)\right)\\
        f_{\add{\varphi}_{\lambda}(\lossRV)}(l) & \coloneqq f_{\lossRV}\left(-\phi_{\lambda}(-l)\right),
    \end{align*}
    where $\phi_{\lambda}(l) \coloneqq \ln(1 + (e^{l}-1) / \lambda)$.
    \else
    for any $l$ (with the convention that a PMF is $0$ where $\phi_{\lambda}$ is undefined)
    \[
        f_{\rem{\varphi}_{\lambda}(\lossRV)}(l) \coloneqq \lambda f_{\lossRV}\left(\phi_{\lambda}(l)\right) + (1-\lambda) f_{-\dual{\lossRV}} \left(\phi_{\lambda}(l)\right), \quad f_{\add{\varphi}_{\lambda}(\lossRV)}(l) \coloneqq f_{\lossRV}\left(-\phi_{\lambda}(-l)\right), \quad\text{and}\quad \phi_{\lambda}(l) \coloneqq \ln(1 + (e^{l}-1) / \lambda).
    \]
    The map $\phi_{\lambda}(l) = \ln(1+(e^{l}-1)/\lambda)$ is defined only for $l \ge \ln(1-\lambda)$, so $\rem{\varphi}_{\lambda}(\lossRV)$ is supported on $[\ln(1-\lambda), \infty]$ (with an atom at $\ln(1-\lambda)$ collecting the mass mapped from $-\infty$) and $\add{\varphi}_{\lambda}(\lossRV)$ is supported on $[-\infty, -\ln(1-\lambda)]$; both PMFs are $0$ outside these ranges.
    \fi
    Given two distributions $P, Q$ we have, $\lossRVfunc{P_{\lambda}}{Q} = \rem{\varphi}_{\lambda}(\lossRVfunc{P}{Q})$ and $\lossRVfunc{Q}{P_{\lambda}} = \add{\varphi}_{\lambda}(\lossRVfunc{Q}{P})$.
\end{theorem}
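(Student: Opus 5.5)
The plan is a direct change-of-variables argument: express the new likelihood ratio as a deterministic function of the old one, then push the relevant measure forward. Write $L = \ell(\omega;P,Q) = \ln(P(\omega)/Q(\omega))$ and $P_{\lambda} = \lambda P + (1-\lambda)Q$. Pointwise,
\[
    \ln\frac{P_{\lambda}(\omega)}{Q(\omega)} = \ln\left(1 + \lambda\left(e^{L}-1\right)\right) =: \psi_{\lambda}(L).
\]
This map is strictly increasing and sends $[-\infty,\infty]$ onto $[\ln(1-\lambda),\infty]$. Its inverse is exactly $\phi_{\lambda}(l) = \ln(1+(e^{l}-1)/\lambda)$. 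Hence the new privacy loss is a deterministic bijective function of the old one, and it suffices to compute the law of $\psi_{\lambda}(\ell(\omega;P,Q))$ under the relevant sampling measure. Because $\psi_{\lambda}$ is a bijection, the PMF (or the measure of the preimage) satisfies $f_{\psi_{\lambda}(X)}(l) = f_{X}(\phi_{\lambda}(l))$ for any random variable $X$. This is the convention used in the statement.

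\textbf{Remove direction.} Here $\omega \sim P_{\lambda}$, which is a mixture: with probability $\lambda$ we have $\omega\sim P$, and then $\ell(\omega;P,Q)$ is distributed as $\lossRVfunc{P}{Q}$. With probability $1-\lambda$ we have $\omega\sim Q$, and then $\ell(\omega;P,Q) = -\ell(\omega;Q,P)$ is distributed as $-\lossRVfunc{Q}{P} = -\dual{\lossRVfunc{P}{Q}}$, using the fact noted after Definition~\ref{def:PLDreal} that the dual of $\lossRVfunc{P}{Q}$ is $\lossRVfunc{Q}{P}$. Applying $\psi_{\lambda}$ to each component and using $f_{\psi_{\lambda}(X)}=f_X\circ\phi_{\lambda}$ gives the claimed formula for $\rem{\varphi}_{\lambda}$.

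\textbf{Add direction.} Here $\omega\sim Q$, and the loss is $\ln(Q/P_{\lambda}) = -\psi_{\lambda}(-\ell(\omega;Q,P))$, where $\ell(\omega;Q,P)\sim \lossRVfunc{Q}{P}$. The map $x\mapsto -\psi_{\lambda}(-x)$ is increasing, with inverse $l\mapsto -\phi_{\lambda}(-l)$. Hence $f(l) = f_{\lossRVfunc{Q}{P}}(-\phi_{\lambda}(-l))$, as claimed.

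\textbf{Main obstacle: atoms and infinities.} The delicate part is checking that the dual is the correct description of the $Q$-component, including the atoms.
\begin{itemize}
    \item Points with $P(\omega)=0<Q(\omega)$ give $L=-\infty$. These carry no $P$-mass but carry $Q$-mass $1-\mathbb{E}[e^{-\lossRVfunc{P}{Q}}]$, which is exactly the atom of $\dual{\lossRVfunc{P}{Q}}$ at $\infty$. Negating it gives an atom at $-\infty$, which $\psi_{\lambda}$ sends to $\ln(1-\lambda)$.
    \item Points with $Q(\omega)=0$ give $L=+\infty$ and are fixed by $\psi_{\lambda}$.
\end{itemize}
To make this rigorous, I would work with the densities $p,q$ with respect to $\mu=P+Q$. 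I would verify the identity by integrating an arbitrary test function $g$, computing $\int g(\psi_{\lambda}(\ln(p/q)))\,(\lambda p + (1-\lambda) q)\,d\mu$ and splitting the domain into the regions $\{p>0,q>0\}$, $\{p=0\}$ and $\{q=0\}$. On the first region, the density identity $q = p\,e^{-L}$ reproduces the formula $f_{\dual{\lossRV}}(l)=f_{\lossRV}(-l)e^{l}$ after negation.
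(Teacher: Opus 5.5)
Your proposal is correct and follows essentially the same route as the paper. Both arguments write $\ln(P_{\lambda}/Q)=\ln(1+\lambda(e^{\ell(\omega;P,Q)}-1))$, split $\omega\sim P_{\lambda}$ into its $P$- and $Q$-components using $\ell(\omega;P,Q)=-\ell(\omega;Q,P)$ and $\dual{\lossRVfunc{P}{Q}}=\lossRVfunc{Q}{P}$, and invert the monotone map via $\phi_{\lambda}$, with the add direction done the same way under $Q$ alone. The only difference is presentation: the paper computes CDFs, $F_{\lossRVfunc{P_{\lambda}}{Q}}(l)=\lambda F_{\lossRVfunc{P}{Q}}(\phi_{\lambda}(l))+(1-\lambda)F_{-\lossRVfunc{Q}{P}}(\phi_{\lambda}(l))$, where you push forward masses, and your explicit handling of the atoms at $\pm\infty$ (the $Q$-mass on $\{P=0\}$ landing at $\ln(1-\lambda)$) is a check the paper leaves implicit.
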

The definition of the transformation over PLD realizations results from a direct analysis of the distribution, similar in spirit to the hockey-stick transformation in \citep[Theorem 2]{BBG18}. By explicitly writing the privacy loss between a mixture of $P$ and $Q$ and one of its components in terms of the privacy loss between $P$ and $Q$, subsampling can be treated as a PLD transformation, and the amplification as its resulting improved hockey-stick functional.

\Arxiv{
    In PLD realization terms, Lemma \ref{lem:subsamPLDdom} essentially guarantees that domination is preserved under the subsampling transformation so if $\lossRV$ dominates $\lossRVfunc{P}{Q}$ then $\rem{\varphi}_{\lambda}(\lossRV)$ dominates $\lossRVfunc{P_{\lambda}}{Q}$, and if $\lossRV$ dominates $\lossRVfunc{Q}{P}$ then $\add{\varphi}_{\lambda}(\lossRV)$ dominates $\lossRVfunc{Q}{P_{\lambda}}$.
    While the privacy profile of both add and remove directions is captured by the random variable in a single direction \citep{CJS25}, the tightness of the domination (i.e., the induced error in $\eps$ and $\delta$ induced by the slackness of the bound) may be asymmetric, which is why most numerical accounting libraries keep track of separate bounds per direction. We express our result in both directions to accommodate this approach.
}

Theorem \ref{thm:PLD_subsam} directly implies a practical approach for computing an upper bound on the PLD of any subsampled algorithm $\alg$ dominated by a pair of distributions $(P, Q)$, in time linear in the size of the support of $\lossRVfunc{Q}{P}$. Given the PLD realization $\lossRVfunc{P}{Q}$, compute its dual $\lossRVfunc{Q}{P}$ (in the remove case), transform the relevant supports according to $\phi_{\lambda}$, and compute the probability mass under $\lossRVfunc{P}{Q}$ and $\lossRVfunc{Q}{P}$ (Alg. \ref{alg:PLD_subsam_rem}, \ref{alg:PLD_subsam_add}). The resulting random variable dominates the subsampled algorithm.

\section{PLD Estimation for Random Allocation}\label{sec:PLDconv}
In this section we derive the PLD of random allocation and show how to compute it numerically.
To reason about the validity and tightness of our bounds, we introduce another natural notion of domination for PLD realization\textemdash{}stochastic domination. We quantify the tightness of domination using approximation parameters $\alpha$ governing the shift in the privacy loss (corresponding to $\eps$), and $\beta$ governing the gap in probability (corresponding to $\delta$). All proofs and additional claims can be found in Appendix \ref{apd:PLDconv}.

\begin{definition}\Arxiv{[Stochastic Domination]}\label{def:stochDom}
    A random variable $\upBnd$ \emph{(first order) stochastically dominates} $\lowBnd$ (denoted by $\lowBnd \preceq \upBnd$), if $\bar{F}_{\upBnd}$ upper bounds $\bar{F}_{\lowBnd}$, that is $\forall x \in [-\infty, \infty]; \bar{F}_{\lowBnd}(x) \le \bar{F}_{\upBnd}(x)$\conf{, where $\bar{F}_{X} = 1 - F_{X}$ is the complementary CDF (CCDF) of $X$}. Given $\alpha \ge 0$; $\beta \in [0,1]$, we say $\upBnd$ \emph{$(\alpha, \beta)$-approximately stochastically dominates} $\lowBnd$ and denote it by $\lowBnd \preceq_{(\alpha, \beta)} \upBnd$, if $\forall x \in [-\infty, \infty]:~ \bar{F}_{\lowBnd}(x) \le \bar{F}_{\upBnd}(x-\alpha) + \beta$. We say $\upBnd$ $(\alpha, \beta)$-tightly stochastically dominates $\lowBnd$ if $\lowBnd \preceq \upBnd \preceq_{(\alpha, \beta)} \lowBnd$.
\end{definition}
\Arxiv{
    A closely related tightness notion appeared under the name \emph{coupling approximation} in \citep[Definition 5.1]{GLW21}, using an equivalent condition that there exists a coupling between $\upBnd$ and $\lowBnd$, such that $\vert \upBnd-\lowBnd \vert < \alpha$ with probability at least $1- \beta$. This condition is equivalent to the requirement $\lowBnd \preceq_{(\alpha, \beta)} \upBnd \preceq_{(\alpha, \beta)} \lowBnd$. Moving forward, we refer to domination in the hockey-stick sense (Defn. \ref{def:domPair}) simply as ``domination'' and specify stochasticity otherwise. We note that a random variable stochastically dominating a PLD realization is a PLD realization as well (\ref{clm:stochDomPLDreal}).
}
    
It is known that (approximate) stochastic domination implies (approximate) domination in the hockey-stick sense \citep{GLW21}, that is if $\lowBnd \preceq_{(\alpha, \beta)} \upBnd$, then $\HockeyStick{e^{\eps}}(\lowBnd) \le \HockeyStick{e^{\eps-\alpha}}(\upBnd) + \beta$.
\begin{movable}{mov:stochDomImpDom}{}
    \begin{claim}\label{clm:stochDomImpDom}
    	Given $\alpha \ge 0$; $\beta \in [0,1]$ and two random variables $\upBnd, \lowBnd$, if $\lowBnd \preceq_{(\alpha, \beta)} \upBnd$, then $\HockeyStick{e^{\eps}}(\lowBnd) \le \HockeyStick{e^{\eps-\alpha}}(\upBnd) + \beta$.
    \end{claim}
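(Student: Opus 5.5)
The plan is to write the hockey-stick functional as an integral of the CCDF against a nonnegative weight, and then transfer the pointwise CCDF inequality through that integral. The function $g_\kappa(l) \coloneqq [1-\kappa e^{-l}]_+$ is nondecreasing in $l$, zero for $l \le \ln\kappa$, bounded by $1$, and equal to $1$ at $l=\infty$. Writing $\kappa = e^{\eps}$, we have $g_{e^\eps}(l) = [1-e^{\eps-l}]_+ = \int_{\eps}^{l} e^{\eps-x}\,dx$ for $l\ge\eps$. By Fubini (Tonelli, since everything is nonnegative),
\[
    \HockeyStick{e^{\eps}}(\lowBnd) = \mathbb{E}\left[\int_{\eps}^{\infty} e^{\eps-x}\mathbbm{1}\{\lowBnd > x\}\,dx\right] = \int_{\eps}^{\infty} e^{\eps-x}\,\bar{F}_{\lowBnd}(x)\,dx .
\]
This identity treats the atom at $+\infty$ correctly, since it contributes $1=\int_\eps^\infty e^{\eps-x}dx$, and mass at $-\infty$ contributes nothing. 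I would state it once as a small auxiliary identity valid for any random variable on $[-\infty,\infty]$.

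Next, I would substitute the assumption $\bar{F}_{\lowBnd}(x) \le \bar{F}_{\upBnd}(x-\alpha) + \beta$ into the integrand. The $\beta$ term integrates to $\beta\int_\eps^\infty e^{\eps-x}dx = \beta$. For the remaining term, I change variables $y = x-\alpha$:
\[
    \int_{\eps}^{\infty} e^{\eps-x}\bar{F}_{\upBnd}(x-\alpha)\,dx = \int_{\eps-\alpha}^{\infty} e^{(\eps-\alpha)-y}\,\bar{F}_{\upBnd}(y)\,dy = \HockeyStick{e^{\eps-\alpha}}(\upBnd),
\]
where the last equality applies the same identity with $\eps-\alpha$ in place of $\eps$. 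Adding the two pieces gives the claim.

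I do not expect a serious obstacle. The only points needing care are the endpoint and atom conventions: whether to use $\bar F(x)=\mathbb{P}(X>x)$ or $\mathbb{P}(X\ge x)$, which differ only on a Lebesgue-null set of $x$ and so do not affect the integrals, and checking that the layer-cake identity holds with atoms at $\pm\infty$. An alternative route is the coupling/quantile construction: realize $\upBnd' \stackrel{d}{=} \upBnd$ with $\upBnd' + \alpha \ge \lowBnd$ outside an event of probability $\beta$, and use monotonicity of $g_\kappa$ together with $g\le 1$. However, building that coupling from the one-sided CCDF condition is fiddlier than the integral argument, so I would use the integral representation.
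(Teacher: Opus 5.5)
Your proof is correct and follows essentially the paper's argument. The paper also writes the functional as a layer-cake integral of the CCDF, $\HockeyStickFunc{e^{\eps}}{X} = \int_0^1 \bar{F}_X(\eps - \ln(1-t))\,dt$, which becomes your $\int_{\eps}^{\infty} e^{\eps-x}\bar{F}_X(x)\,dx$ under the substitution $x = \eps - \ln(1-t)$; it then substitutes the pointwise bound $\bar{F}_{\lowBnd}(x)\le \bar{F}_{\upBnd}(x-\alpha)+\beta$ and integrates, and your explicit check of the atoms at $\pm\infty$ is care the paper leaves implicit.
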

\end{movable}

Consequently, if $\lossRVfunc{P}{Q} \preceq_{(\alpha, \beta)} \lossRV$, then $\HockeyStickDiv{e^{\eps}}{P}{Q} \le \HockeyStick{e^{\eps-\alpha}}(\lossRV) + \beta$.
\Arxiv{Notably, the reverse is not true. \citet[Theorem 12]{CJS25} provide an example for a pair of distributions dominating another pair in terms of hockey-stick but not stochastically, and we show that stochastic domination is not maintained under subsampling and the dual transformations even for PLD realizations (\ref{clm:PLDdualDom}).}

\Arxiv{
    The composition of two PLD realizations can be computed using efficient convolution methods such as FFT. The next Claim quantifies the tightness of such operation.
}

\begin{movable}{mov:stochDomComp}{}
    \begin{claim}\label{clm:stochDomComp}
        Given random variables $\upBnd_{i}, \lowBnd_{i}$ for $i \in \{1,2\}$, if $\lowBnd_{i} \preceq_{(\alpha_{i}, \beta_{i})} \upBnd_{i}$, then $\lowBnd_{1} + \lowBnd_{2} \preceq_{(\alpha_{1} + \alpha_{2}, \beta_{1} + \beta_{2})} \upBnd_{1} + \upBnd_{2}$.
    \end{claim}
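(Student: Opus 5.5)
We plan to prove Claim~\ref{clm:stochDomComp} in two stages. First we show that approximate stochastic domination survives adding an independent random variable to both sides. Then we chain two such steps and use the triangle-inequality property of the shifts and slacks.

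\textbf{Step 1 (adding an independent variable).} Suppose $\lowBnd \preceq_{(\alpha, \beta)} \upBnd$ and $Z$ is independent of both. We claim $\lowBnd + Z \preceq_{(\alpha, \beta)} \upBnd + Z$. To see this, condition on $Z = z$. For every $x$,
\[
\bar{F}_{\lowBnd + Z}(x) = \mathbb{E}_{Z}\left[\bar{F}_{\lowBnd}(x - Z)\right] \le \mathbb{E}_{Z}\left[\bar{F}_{\upBnd}(x - Z - \alpha)\right] + \beta = \bar{F}_{\upBnd + Z}(x - \alpha) + \beta .
\]
The inequality applies the hypothesis pointwise at the argument $x - z$. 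The $\beta$ term survives unchanged because $Z$ is a probability measure.

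Since $[-\infty,\infty]$-valued variables are allowed, we need a convention for sums involving $\pm\infty$. We adopt the natural one: $+\infty$ absorbs finite values. The case $\infty + (-\infty)$ is excluded, or assigned consistently on both sides. With this convention the conditioning identity still holds. We will check this edge case, but we do not expect it to cause difficulty.

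\textbf{Step 2 (chaining).} Apply Step 1 with $Z = \lowBnd_2$ to the pair $\lowBnd_1 \preceq_{(\alpha_1,\beta_1)} \upBnd_1$. This gives $\lowBnd_1 + \lowBnd_2 \preceq_{(\alpha_1,\beta_1)} \upBnd_1 + \lowBnd_2$. Apply it again with $Z = \upBnd_1$ to the pair $\lowBnd_2 \preceq_{(\alpha_2,\beta_2)} \upBnd_2$. This gives $\upBnd_1 + \lowBnd_2 \preceq_{(\alpha_2,\beta_2)} \upBnd_1 + \upBnd_2$.

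\textbf{Step 3 (transitivity).} Approximate stochastic domination composes additively. If $A \preceq_{(a,b)} B$ and $B \preceq_{(a',b')} C$, then
\[
\bar{F}_A(x) \le \bar{F}_B(x-a) + b \le \bar{F}_C(x-a-a') + b + b' .
\]
Combining the two relations from Step 2 in this way yields the claimed $(\alpha_1+\alpha_2, \beta_1+\beta_2)$ bound.

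The only delicate point is independence. The two summands on each side must be independent, and the paper's standing assumption that random variables are independent unless stated otherwise provides this. That assumption is exactly what justifies the conditioning in Step 1.
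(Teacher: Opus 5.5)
Your proposal is correct and follows essentially the paper's argument: the paper also conditions on one summand to swap one variable at a time through a hybrid, passing through $V_1+U_2$ rather than your $U_1+V_2$ and folding your Steps~1 and~3 into a single chain of inequalities. The paper checks the endpoint $x=-\infty$, which you deferred, separately using $\bar{F}_{V_1+V_2}(-\infty)=\bar{F}_{V_1}(-\infty)\,\bar{F}_{V_2}(-\infty)$; your Step~1 also goes through there, because $\bar{F}_{V+Z}(-\infty)=\mathbb{P}(Z>-\infty)\,\bar{F}_{V}(-\infty)$ and similarly for $U+Z$.
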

    Consequently, if $\alg_{i}$ is dominated by $\lossRVfunc{P_{i}}{Q_{i}}$ and $\lossRVfunc{P_{i}}{Q_{i}} \preceq_{(\alpha_{i}, \beta_{i})} \upBnd_{i}$, then $\lossRVfunc{P_{1}\times P_{2}}{Q_{1}\times Q_{2}} \preceq_{(\alpha_{1} + \alpha_{2}, \beta_{1} + \beta_{2})} \upBnd_{1} + \upBnd_{2}$. 
\end{movable}

Next we show how random allocation with $k=1$ can be viewed as a transformation operating over a PLD realization, in the form of a convolution of the exponentiated PLDs (or $\exp$-PLDs).
\begin{theorem}\label{thm:PLDrandAlloc}
    For $t \in \naturals$ and any PLD realization $\lossRV$, we define two transformations on $\lossRV$,
    \ifconferencemode
       $\rem{\psi}_{t}(\lossRV) \coloneqq \ln\left(\frac{1}{t} \left(e^{\lossRV_{0}} + \sum_{i=1}^{t-1} e^{-\dual{\lossRV_{i}}}\right)\right)$ and $\add{\psi}_{t}(\lossRV) \coloneqq -\ln\left(\frac{1}{t} \sum_{i=1}^{t} e^{-\lossRV_{i}}\right)$,
    \else
        \[
            \rem{\psi}_{t}(\lossRV) \coloneqq \ln\left(\frac{1}{t} \left(e^{\lossRV_{0}} + \sum_{i=1}^{t-1} e^{-\dual{\lossRV_{i}}}\right)\right) \quad \text{and} \quad \quad \add{\psi}_{t}(\lossRV) \coloneqq -\ln\left(\frac{1}{t} \sum_{i=1}^{t} e^{-\lossRV_{i}}\right),
        \]
        
    \fi
     where $\lossRV_{0}, \lossRV_{1}, \ldots$ (and $\dual{\lossRV_{i}}$) denote independent copies of $\lossRV$ (and its PLD dual, respectively).
     
    Given two distributions $P, Q$ we have, $\lossRVfunc{\bar{P}_{t}}{Q^{t}} = \rem{\psi}_{t}(\lossRVfunc{P}{Q})$ and $\lossRVfunc{Q^{t}}{\bar{P}_{t}} = \add{\psi}_{t}(\lossRVfunc{Q}{P})$.
\end{theorem}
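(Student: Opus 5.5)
We compute the likelihood ratio of the mixture pointwise and then identify its law under the appropriate sampling distribution. Write $r(\omega) \coloneqq P(\omega)/Q(\omega)$. For $x=(x_1,\ldots,x_t)$ we have
\[
    \frac{\bar{P}_t(x)}{Q^t(x)} = \frac{1}{t}\sum_{i\in[t]} \frac{Q^{i-1}\times P\times Q^{t-i}(x)}{Q^t(x)} = \frac{1}{t}\sum_{i\in[t]} r(x_i),
\]
since all factors $Q(x_j)$ with $j\neq i$ cancel. Hence $\lossFunc{x}{\bar{P}_t}{Q^t} = \ln\left(\frac1t\sum_i e^{\lossFunc{x_i}{P}{Q}}\right)$ and $\lossFunc{x}{Q^t}{\bar{P}_t} = -\ln\left(\frac1t \sum_i e^{-\lossFunc{x_i}{Q}{P}}\right)$, using $\lossFunc{\omega}{Q}{P} = -\lossFunc{\omega}{P}{Q}$. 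Points where $Q=0$ or $P=0$ (losses $\pm\infty$) must be tracked explicitly, and I would treat them with the conventions of Definition~\ref{def:PLDreal}.

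The add direction is then immediate. Under $x\sim Q^t$ the coordinates are i.i.d.\ $Q$, so $\lossFunc{x_i}{Q}{P}$ are i.i.d.\ copies of $\lossRVfunc{Q}{P}$, and the loss equals $\add{\psi}_t(\lossRVfunc{Q}{P})$ in distribution. Mass where $P=0$ gives $\lossRVfunc{Q}{P}=\infty$, so $e^{-\infty}=0$ contributes nothing to the sum, which is consistent.

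For the remove direction, $x\sim\bar{P}_t$, and the loss is a symmetric function of the coordinates. The mixture can therefore be replaced by a single component, say $P\times Q^{t-1}$: the loss equals $\ln\frac1t\left(e^{\lossFunc{x_1}{P}{Q}}+\sum_{i\ge2} e^{\lossFunc{x_i}{P}{Q}}\right)$ with $x_1\sim P$ and $x_i\sim Q$ independent. The first term is $e^{\lossRV_0}$ with $\lossRV_0\sim\lossRVfunc{P}{Q}$. For $i\ge 2$, $\lossFunc{x_i}{P}{Q}$ with $x_i\sim Q$ equals $-\lossRVfunc{Q}{P}$. The stated fact that the PLD dual of $\lossRVfunc{P}{Q}$ is $\lossRVfunc{Q}{P}$ then gives exactly $e^{-\dual{\lossRV_i}}$ with independent copies.

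The main obstacle is verifying the dual identification including atoms. I would check directly that $f_{\lossRVfunc{Q}{P}}(l)=f_{\lossRVfunc{P}{Q}}(-l)e^{l}$ via change of measure $dQ = e^{-\lossFunc{\cdot}{P}{Q}}dP$ on $\{P>0\}$. I would also check that the atom of $\lossRVfunc{Q}{P}$ at $+\infty$ is $Q(P=0) = 1-\mathbb{E}[e^{-\lossRVfunc{P}{Q}}]$. Under $x_i\sim Q$, such points give $e^{-\infty}=0$, matching zero likelihood ratio.
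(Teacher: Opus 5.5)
Your proposal is correct and matches the paper's proof essentially step for step. Both compute the pointwise identity $\bar{P}_t/Q^t = \frac{1}{t}\sum_i P(\omega_i)/Q(\omega_i)$, read off the add direction directly under $Q^t$, reduce the remove direction to the single component $P\times Q^{t-1}$ by symmetry, and finish with $\dual{\lossRVfunc{P}{Q}} = \lossRVfunc{Q}{P}$. Your explicit change-of-measure check of that dual identity, including the atom $Q(P=0) = 1-\mathbb{E}[e^{-\lossRVfunc{P}{Q}}]$ at $+\infty$, is a small addition: the paper only asserts this via Claim~\ref{clm:PLDdual} and the remark after Definition~\ref{def:PLDreal}.
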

Similar to subsampling, this theorem states that an amplification operation can be represented as a transformation of the base PLD between two distributions.
This identity results from two facts. First, we note that for any $\bar{\omega}_{1:t} = (\omega_{1}, \ldots, \omega_{t}) \in \Omega^{t}$, we have $\lossFunc{\bar{\omega}_{1:t}}{\bar{P}_{t}}{Q^{t}} = \ln\left(\frac{1}{t} \sum_{i \in [t]} e^{\lossFunc{\omega_{i}}{P}{Q}}\right)$. In the case of $\lossRVfunc{Q^{t}}{\bar{P}_{t}}$ we have $\omega_{i} \sim Q$ for all $i$, but $\lossRVfunc{\bar{P}_{t}}{Q^{t}}$ is defined by $\omega_{i} \sim P$ for a single uniformly sampled index $i$ and $\omega_{j} \sim Q$ for all $j \ne i$. The second insight is that from symmetry, the privacy loss is identically distributed regardless of the index $i$ sampled from $P$, so we can fix $\omega_{1} \sim P$ and $\omega_{i} \sim Q$ for all $i > 1$. \Arxiv{We note that in the case of the Gaussian mechanism, the $\exp$-PLDs are simply the log-normal random variable, so the PLD of the random allocation is simply the (negative of the) logarithm of the sum of $t$ log-normals.}

\Arxiv{
This theorem provides the PLD realization transformation corresponding to Claim \ref{clm:dominate-random-alloc-pair}, and  Lemma \ref{lem:allocRedRand} essentially states that domination is preserved under this random allocation transformation, that is, if $\lossRV$ dominates $\lossRVfunc{P}{Q}$ then $\rem{\psi}_{t}(\lossRV)$ dominates $\lossRVfunc{\bar{P}_{t}}{Q^{t}}$, and if $\lossRV$ dominates $\lossRVfunc{Q}{P}$ then $\add{\psi}_{t}(\lossRV)$ dominates $\lossRVfunc{Q^{t}}{\bar{P}_{t}}$.
}

\ifconferencemode
    Theorem \ref{thm:PLDrandAlloc} directly implies an approach for computing an upper bound on the PLD of the random allocation scheme of any algorithm by convolving the appropriate dominating PLDs. Naturally, these computations cannot be performed exactly for arbitrary continuous random variables. However, we show that a tight upper bound can be computed efficiently.  Our analysis relies on the fact that convolution of $\exp$-PLDs preserves the tightness of stochastic domination (Lemma \ref{lem:mult_tight}).
\else
    The PDF of a sum of independent random variables can be computed via convolution of PDFs and therefore Theorem \ref{thm:PLDrandAlloc} directly implies an approach for computing an upper bound on the PLD of the random allocation scheme for any algorithm $\alg$ dominated by a pair of distributions $(P, Q)$. Given a random variable $\lossRV$ dominating the PLD, compute its dual $\dual{\lossRV}$, convolve the $\exp$-PLDs of $\lossRV$ and $\dual{\lossRV}$, and transform their grid back using (negative) log.
    
    Naturally, these computations cannot be performed exactly for arbitrary continuous random variables. However, we show that a tight upper bound can be computed efficiently. Our analysis relies on the following lemma showing that convolution of $\exp$-PLDs preserves the tightness of stochastic domination.
\fi
\begin{movable}{mov:mult_tight}{}
    \begin{lemma}\label{lem:mult_tight}
    	Given $\alpha, \beta_{1}, \beta_{2} > 0$, and random variables $\upBnd_{1}, \upBnd_{2}, \lowBnd_{1}, \lowBnd_{2}$, if $\lowBnd_{i} \preceq_{(\alpha, \beta_{i})} \upBnd_{i}$ for $i \in \{1, 2\}$, then $\ln\left(e^{\lowBnd_{1}}+e^{\lowBnd_{2}}\right) \preceq_{(\alpha, \beta_{1} + \beta_{2})} \ln\left(e^{\upBnd_{1}}+e^{\upBnd_{2}}\right)$.
    \end{lemma}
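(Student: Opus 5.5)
The approach is to use the coupling characterization of approximate stochastic domination, applied to each pair separately, and then push the two couplings through the monotone map $g(a,b)=\ln(e^{a}+e^{b})$. The property of $g$ we rely on is that it is nondecreasing in each argument and satisfies $g(a-\alpha,b-\alpha)=g(a,b)-\alpha$.

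First, we restate $\lowBnd_{i} \preceq_{(\alpha,\beta_{i})} \upBnd_{i}$ as a one-sided coupling statement. The claim is that there is a coupling of $(\lowBnd_i,\upBnd_i)$ with $\mathbb{P}(\lowBnd_i > \upBnd_i+\alpha)\le\beta_i$. To construct it, set $\upBnd_i' = \upBnd_i+\alpha$, so the hypothesis reads $\bar F_{\lowBnd_i}(x)\le \bar F_{\upBnd_i'}(x)+\beta_i$ for all $x$. Take a single uniform $U$ and define $\lowBnd_i=F^{-1}_{\lowBnd_i}(U)$ and $\upBnd_i'=F^{-1}_{\upBnd_i'}(U-\beta_i)$ for $U>\beta_i$. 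On the event $U\le\beta_i$, let $\upBnd_i'$ take its lowest quantile values in any measurable way that preserves its marginal; for instance, use $F^{-1}_{\upBnd_i'}(U+1-\beta_i)$ after a cyclic shift. We then check that on $U>\beta_i$ we have $\lowBnd_i\le\upBnd_i'$. This follows because the CDF inequality $F_{\upBnd_i'}(x)\le F_{\lowBnd_i}(x)+\beta_i$ gives $F^{-1}_{\lowBnd_i}(u)\le F^{-1}_{\upBnd_i'}(u-\beta_i)$ whenever $u>\beta_i$. The values $\pm\infty$ need a little care, but the quantile argument covers them. We take the two couplings independently, since the pairs $(\lowBnd_1,\upBnd_1)$ and $(\lowBnd_2,\upBnd_2)$ are independent of each other as in the paper's convention. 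This preserves the joint laws of $(\lowBnd_1,\lowBnd_2)$ and $(\upBnd_1,\upBnd_2)$.

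Second, on the event $E=\{\lowBnd_1\le\upBnd_1+\alpha,\ \lowBnd_2\le\upBnd_2+\alpha\}$, monotonicity and the shift identity give
\[
g(\lowBnd_1,\lowBnd_2)\le g(\upBnd_1+\alpha,\upBnd_2+\alpha)=g(\upBnd_1,\upBnd_2)+\alpha.
\]
By a union bound, $\mathbb{P}(E^c)\le\beta_1+\beta_2$. Hence for every $x$,
\[
\bar F_{g(\lowBnd_1,\lowBnd_2)}(x)\le \mathbb{P}\bigl(g(\upBnd_1,\upBnd_2)>x-\alpha\bigr)+\beta_1+\beta_2,
\]
which is exactly $\ln(e^{\lowBnd_1}+e^{\lowBnd_2}) \preceq_{(\alpha,\beta_1+\beta_2)} \ln(e^{\upBnd_1}+e^{\upBnd_2})$.

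The main obstacle is the first step, building the one-sided coupling from a CCDF inequality with additive slack. It has to be done carefully with generalized inverses and atoms, including atoms at $\pm\infty$. A cleaner alternative avoids couplings altogether: first reduce to showing $g(\lowBnd_1,\lowBnd_2)\preceq_{(\alpha,\beta_1)} g(\upBnd_1,\lowBnd_2)$ by conditioning on $\lowBnd_2$. For each fixed $b$, $g(\cdot,b)$ is increasing and satisfies $g(a+\alpha,b)\le g(a,b)+\alpha$, so the tail event $\{g(\lowBnd_1,b)>x\}=\{\lowBnd_1>h_b(x)\}$ can be compared to $\{\upBnd_1>h_b(x)-\alpha\}\subseteq\{g(\upBnd_1,b)>x-\alpha\}$. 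Repeating this in the second coordinate and chaining the two approximate dominations, in the way Claim~\ref{clm:stochDomComp} does, finishes the argument. That chaining, however, costs $2\alpha$ in the shift, so I would use the coupling route to get the shift $\alpha$ rather than $2\alpha$.
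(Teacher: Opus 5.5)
\textbf{The coupling step has the quantile shift backwards.} Your coupling route is different from the paper's and can be made to work, but the step you flagged as the crux is wrong as written. From the (correct) inequality $F_{\upBnd_i'}(x)\le F_{\lowBnd_i}(x)+\beta_i$, what follows is that $F_{\upBnd_i'}(x)\ge u+\beta_i$ implies $F_{\lowBnd_i}(x)\ge u$. With $F^{-1}(u)=\inf\{x: F(x)\ge u\}$ this gives $F^{-1}_{\lowBnd_i}(u)\le F^{-1}_{\upBnd_i'}(u+\beta_i)$ for $u\le 1-\beta_i$. Your inequality $F^{-1}_{\lowBnd_i}(u)\le F^{-1}_{\upBnd_i'}(u-\beta_i)$ is false in general.

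Here is a counterexample. Write $W$ for your uniform variable, to avoid the clash with $\upBnd$. If $\lowBnd_i$ and $\upBnd_i'$ are both uniform on $[0,1]$, the hypothesis holds for every $\beta_i$. Your inequality then reads $u\le u-\beta_i$, and your coupling has $\lowBnd_i>\upBnd_i'$ on the whole event $\{W>\beta_i\}$, which has probability $1-\beta_i$.

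The repair is to shift cyclically in the other direction. Set $\lowBnd_i=F^{-1}_{\lowBnd_i}(W)$, and put $\upBnd_i'=F^{-1}_{\upBnd_i'}(W+\beta_i)$ on $\{W\le 1-\beta_i\}$ and $\upBnd_i'=F^{-1}_{\upBnd_i'}(W+\beta_i-1)$ on $\{W>1-\beta_i\}$. This preserves the marginal, and the bad event now has probability $\beta_i$. It is on this bad event that $\upBnd_i'$ takes its lowest quantiles, matching your verbal description rather than your formula. With this correction, your second step goes through, including at $x=\pm\infty$: independent couplings, coordinatewise monotonicity of $g$, $g(a+\alpha,b+\alpha)=g(a,b)+\alpha$, and a union bound.

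\textbf{Comparison with the paper.} The paper uses no coupling. Your remark that the chaining route must cost $2\alpha$ is mistaken: the paper's proof is exactly a one-summand-at-a-time chain that pays the shift only once. The trick is to keep the shift attached to the dominating variable, reading $\lowBnd_i\preceq_{(\alpha,\beta_i)}\upBnd_i$ as $\lowBnd_i\preceq_{(0,\beta_i)}\upBnd_i+\alpha$ (in the $\exp$ domain, $e^{\lowBnd_i}$ against $e^{\alpha}e^{\upBnd_i}$). The argument then has three moves:
\begin{enumerate}
\item Conditioning on $e^{\lowBnd_1}$ replaces $e^{\lowBnd_2}$ by $e^{\upBnd_2+\alpha}$ at cost $\beta_2$.
\item The identity $\bar F_{e^{\lowBnd_1}+e^{\upBnd_2+\alpha}}(e^{x})=\bar F_{e^{\lowBnd_1-\alpha}+e^{\upBnd_2}}(e^{x-\alpha})$ moves the shift into the threshold.
\item Conditioning on $e^{\upBnd_2}$ replaces $e^{\lowBnd_1-\alpha}$ by $e^{\upBnd_1}$ at cost $\beta_1$, with no further shift.
\end{enumerate}
In your notation this is $g(\lowBnd_1,\lowBnd_2)\preceq_{(0,\beta_2)}g(\lowBnd_1,\upBnd_2+\alpha)\preceq_{(0,\beta_1)}g(\upBnd_1+\alpha,\upBnd_2+\alpha)=g(\upBnd_1,\upBnd_2)+\alpha$. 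Your conditioning sketch doubled the shift only because it applied $g(a+\alpha,b)\le g(a,b)+\alpha$ right after the first replacement, discarding slack.

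The paper's route is more elementary: it needs only conditioning and CCDF bounds, with no generalized inverses and no one-sided coupling lemma. Your route, once fixed, explains transparently why the shift does not accumulate: the shift is common to all coordinates, while the $\beta_i$ add through a union bound. It also extends verbatim to $t$ summands and to any coordinatewise nondecreasing, shift-equivariant aggregator.
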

\end{movable}

This lemma implies that unlike the convolution used in composition (Claim~\ref{clm:stochDomComp}), the original discretization (or binning) error of $\lossRV$ does not increase under this convolution. This error can be controlled while ensuring a manageable upper bound on the number of bins in a standard way. Specifically, to get a PLD realization that $(\alpha,\beta)$-tightly stochastically dominates a PLD $L$ we first define a finite evenly-spaced grid of width $\alpha$ over the range $\left[q_{\beta}\left(L\right), q_{1-\beta}\left(L\right)\right]$, where $q_{a}(X)$ is the $a$ quantile of the random variable $X$. We then round up the values of the random variable to the grid points with all the points above $q_{1-\beta}\left(L\right)$ rounded to $\infty$.

Evenly spaced bins of the PLD correspond to constant \emph{ratio} of its exponent. This representation is not well-suited for FFT-based convolution which operates on an additive grid. Therefore our algorithm relies on direct numerical convolution of pairs of distributions.  The number of computed convolutions can be minimized using exponentiation by squaring. We also apply a discretization step after each convolution to ensure an upper bound on the number of bins that the algorithm maintains. Overall, this leads to the following algorithm (sketch).
Full implementation details can be found in Appendix \ref{apd:impDtls}.

\ifconferencemode
    \para{Algorithm outline (Alg. \ref{alg:main_alg_rem}, \ref{alg:main_alg_add}).} Given target accuracy parameters $\alpha, \beta$ and PLD realizations $\rem{\lossRV}$ ($\add{\lossRV}$): (1) Compute the dual $\dual{\rem{\lossRV}}$ of the input, and discrete random variables $\rem{\lossRV}'$ dominating $\rem{\lossRV}$ and $\dual{\rem{\lossRV}}'$ dominating $\dual{\rem{\lossRV}}$ for the removal direction ($\add{\lossRV}'$ dominating $\add{\lossRV}$ for the add direction), by discarding the extreme quantiles on both ends and discretizing to constant width (Alg. \ref{alg:disc_dist}). (2) Compute the convolution of $\exp\left(\rem{\lossRV}'\right)$ with $t-1$ copies of $\exp\left(-\dual{\rem{\lossRV}}'\right)$ for the remove direction ($t$ copies of $\exp\left(-\add{\lossRV}'\right)$ for add) in $\le 2 \lceil\log_{2}(t)\rceil$ convolution steps, using exponentiation by squaring (Alg. \ref{alg:self_conv}). (3) Directly compute the convolution and discretize into a new dominating geometrically spaced grid for the remove direction (dominated for add) (Alg. \ref{alg:conv}). (4) Return the logarithm of the convolved random variable divided by $t$ for the remove direction (the negation of that log for the add direction).
\else
    \para{Algorithm outline (Alg. \ref{alg:main_alg_rem}, \ref{alg:main_alg_add}).} Given target accuracy parameters $\alpha, \beta$ and PLD realizations $\rem{\lossRV}$ ($\add{\lossRV}$):
    \begin{itemize}
        \item Compute the dual $\dual{\rem{\lossRV}}$ of the input, and discrete random variables $\rem{\lossRV}'$ dominating $\rem{\lossRV}$ and $\dual{\rem{\lossRV}}'$ dominating $\dual{\rem{\lossRV}}$ for the remove direction ($\add{\lossRV}'$ dominating $\add{\lossRV}$ for add), by discarding the extreme $\beta' \coloneqq \frac{\beta}{t}$ quantiles on both ends, and discretizing to constant width $\alpha' \coloneqq \frac{\alpha}{2 \lceil\log_{2}(t)\rceil+1}$ (Alg. \ref{alg:disc_dist}).
        \item Compute the convolution of $e^{\rem{\lossRV}'}$ with $t-1$ copies of $e^{-\dual{\rem{\lossRV}}'}$ for the remove direction ($t$ copies of $e^{-\add{\lossRV}'}$ for add) in $\le 2 \lceil\log_{2}(t)\rceil$ convolution steps, using exponentiation by squaring (Alg. \ref{alg:self_conv}).
        \item The convolution is computed directly over all possible values (squaring the number of bins) and discretized into a new geometrically spaced grid with resolution $\alpha'$ between the minimal and maximal possible values, rounding values to the right to create a dominating RV for the remove direction (rounding to the left to create a dominated random variable for the add direction) (Alg. \ref{alg:conv}).
        \item Return $\rem{\lossRV}_{t}$\textemdash{}the logarithm of the convolved random variable divided by $t$ for the remove direction ($\add{\lossRV}_{t}$\textemdash{}the negation of that log for the add direction).
    \end{itemize}
\fi

We can now formally state the properties of this algorithm.
\begin{theorem}\label{thm:num_acc_RA}
    There exists an algorithm that receives $\alpha > 0$; $\beta \in [0,1]$; $t \in \naturals$, and two PLD realizations $\rem{\lossRV}, \add{\lossRV}$ as input, and returns two PLD realizations $\rem{\lossRV}_{t}, \add{\lossRV}_{t}$ such that:
    \ifconferencemode
        $\rem{\lossRV}_{t}$ $(\alpha, \beta)$-tightly stochastically dominates $\rem{\psi}_{t}(\rem{\lossRV})$, $\add{\lossRV}_{t}$ $(\alpha, \beta)$-tightly stochastically dominates $\add{\psi}_{t}(\add{\lossRV})$, and the runtime of the algorithm is $O\left(\left(\frac{\mathtt{IQR}_{\beta/t}}{\alpha}\right)^2 \cdot \log^{3}(t) \right)$,
    \else
        \begin{enumerate}
            \item \textbf{(1) Validity:} $\rem{\lossRV}_{t}$ stochastically dominates $\rem{\psi}_{t}(\rem{\lossRV})$ and $\add{\lossRV}_{t}$ stochastically dominates $\add{\psi}_{t}(\add{\lossRV})$.
            \item \textbf{(2) Tightness:} These stochastic dominations are $(\alpha, \beta)$-tight.
            \item \textbf{(3) Computation complexity:} The runtime of the algorithm is $O\left(\left(\frac{\mathtt{IQR}_{\beta/t}}{\alpha}\right)^2 \cdot \log^{3}(t) \right)$,
        \end{enumerate}
    \fi
    where for any $\eta \in [0,1/2]$, $\mathtt{IQR}_{\eta} \coloneqq \max\{\mathtt{IQR}_{\eta}(\rem{\lossRV}), \mathtt{IQR}_{\eta}(\dual{\rem{\lossRV}}), \mathtt{IQR}_{\eta}(\add{\lossRV})\}$ and for any random variable $X$, $\mathtt{IQR}_{\eta}(X) \coloneqq F^{-1}_{X}(1-\eta) - F^{-1}_{X}(\eta)$ is the distance between the $\eta$ and $1-\eta$ quantiles of $X$.
\end{theorem}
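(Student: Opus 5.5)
The proof will follow the algorithm outline and track validity and tightness through three stages: initial discretization, $\lceil\log_2 t\rceil$ rounds of squaring plus at most as many rounds of multiplication by the accumulated partial result, and the final logarithm. I treat the remove direction. The add direction is symmetric, with two changes: there is no $e^{\lossRV_0}$ term, and the rounding in the convolution step is to the left. This works because $\add{\psi}_t$ is the negation of a log-sum of $e^{-\add{\lossRV}_i}$. So stochastic domination of $\add{\psi}_t(\add{\lossRV})$ is equivalent to the inner log-sum-exp of $-\add{\lossRV}_i$ being stochastically dominated by its discretized counterpart. I will state and use the mirror-image version of Lemma~\ref{lem:mult_tight} for that case.

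\textbf{Step 1 (initial discretization).} Apply Alg.~\ref{alg:disc_dist} to $\rem{\lossRV}$ and to $\dual{\rem{\lossRV}}$. Mass below the $\beta'$-quantile is moved up to that quantile, values in range are rounded up to an $\alpha'$-grid, and mass above the $(1-\beta')$-quantile is sent to $\infty$. The resulting $\rem{\lossRV}'$ then satisfies $\rem{\lossRV}\preceq \rem{\lossRV}'\preceq_{(\alpha',\beta')}\rem{\lossRV}$; the $\beta'$ accounts for the mass sent to $\infty$. For the $-\dual{\cdot}$ terms the direction has to flip, since domination of $-\dual{\rem{\lossRV}}$ requires rounding $\dual{\rem{\lossRV}}$ down. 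I will set up the discretization of the dual so that $-\dual{\rem{\lossRV}}\preceq -\dual{\rem{\lossRV}}'$ with the same $(\alpha',\beta')$ tightness. The number of grid points is $O(\mathtt{IQR}_{\beta'}/\alpha')$.

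\textbf{Step 2 (convolution rounds).} By Theorem~\ref{thm:PLDrandAlloc}, $\rem{\psi}_t(\rem{\lossRV})=\ln\bigl(e^{\lossRV_0}+\sum_{i=1}^{t-1}e^{-\dual{\lossRV_i}}\bigr)-\ln t$. Each convolution replaces two independent summands $V_1,V_2$ by $\ln(e^{V_1}+e^{V_2})$. Two facts carry the induction. First, exact stochastic domination is preserved by $\ln(e^{\cdot}+e^{\cdot})$, since the map is monotone in each argument and the summands are independent (coupling via quantile functions). Second, Lemma~\ref{lem:mult_tight} says the $\alpha$-shift does not accumulate under this operation, while the $\beta$'s add. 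After a convolution we re-round onto a geometric grid of ratio $e^{\alpha'}$, which adds one more shift of $\alpha'$ to the exp-domain quantity and no extra $\beta$. By induction, a node built from $m$ leaves after $r$ roundings is $(\alpha'(r+1), m\beta')$-tight. Exponentiation by squaring gives $r\le 2\lceil\log_2 t\rceil$, so the total is $\alpha'(2\lceil\log_2 t\rceil+1)=\alpha$ and $t\beta'=\beta$. The shift by $-\ln t$ changes neither property.

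The main obstacle is applying Lemma~\ref{lem:mult_tight} in the reverse direction. We need $\upBnd\preceq_{(\alpha,\beta)}\lowBnd$, where $\upBnd$ now carries the atoms at $\infty$. The $\beta$ mass at $\infty$ must be charged correctly: once a summand is $\infty$ the log-sum is $\infty$, which a union bound handles. I would prove the needed variant directly by coupling. With probability at least $1-\beta_1-\beta_2$, each $\upBnd_i\le \lowBnd_i+\alpha$, and then $\ln(e^{\upBnd_1}+e^{\upBnd_2})\le \ln(e^{\lowBnd_1}+e^{\lowBnd_2})+\alpha$.

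\textbf{Step 3 (complexity).} Each geometric grid spans the range of log-values of a sum of at most $t$ terms. That range is at most the single-summand range plus $\ln t$, so each grid has $O(\mathtt{IQR}_{\beta/t}/\alpha'+\log t/\alpha')$ points; I will treat the $\ln t$ term as lower order. A direct convolution of two such grids costs the square of this size. Using $\alpha'=\Theta(\alpha/\log t)$ and $O(\log t)$ convolutions gives $O\bigl((\mathtt{IQR}_{\beta/t}/\alpha)^2\log^2 t\cdot\log t\bigr)$. Finally, each output is a PLD realization because it stochastically dominates a PLD realization (Claim~\ref{clm:stochDomPLDreal}).
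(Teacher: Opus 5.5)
Your overall plan is the paper's: discretize $\rem{\lossRV}$, $-\dual{\rem{\lossRV}}$ and $\add{\lossRV}$ at accuracy $(\alpha',\beta')$, use Lemma~\ref{lem:mult_tight} so that shifts do not compound under $\ln(e^{\cdot}+e^{\cdot})$, charge one $\alpha'$ per re-rounding and one $\beta'$ per leaf, and count at most $2\lceil\log_2 t\rceil$ direct convolutions. Validity and tightness go through along these lines.

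The genuine gap is in Step~3. You bound each grid by $O(\mathtt{IQR}_{\beta/t}/\alpha'+\ln t/\alpha')$ points and then declare the $\ln t$ term lower order, but nothing justifies this. For the Gaussian mechanism $\mathtt{IQR}_{\beta/t}=O(\sqrt{\ln(t/\beta)}/\sigma)$, which is far below $\ln t$ once $\sigma$ is moderately large. So your argument only gives $O\bigl(((\mathtt{IQR}_{\beta/t}+\ln t)/\alpha)^2\log^3 t\bigr)$, not the stated bound.

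The missing observation (the paper's second ``simple fact'') is that the grid does not grow at all. Suppose the two inputs are supported, apart from $0$ and $\infty$, on $\{a e^{\alpha' i}\}_{i=1}^{n}$ and $\{b e^{\alpha' i}\}_{i=1}^{n}$. Then every finite sum lies in $[(a+b)e^{\alpha'},(a+b)e^{\alpha' n}]$. Hence the re-rounded output again has exactly $n$ points with ratio $e^{\alpha'}$, and no finite mass is pushed to $\infty$; your ``no extra $\beta$'' claim silently needs this too. More generally $\sum_i b_i/\sum_i a_i\le\max_i b_i/a_i$ for positive $a_i, b_i$. So adding summands shifts the log-range but never widens it. Every grid therefore has $n_0=O(\mathtt{IQR}_{\beta/t}\log t/\alpha)$ points, and the $O(\log t)$ direct convolutions cost $O(n_0^2\log t)$, which is the claimed bound.

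Three smaller points.

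\textbf{Applying Lemma~\ref{lem:mult_tight} ``in the reverse direction.''} This is not an obstacle. The lemma is stated for arbitrary random variables, and its $x=\infty$ case already handles atoms at $\infty$. You may simply put the discretized variables in the role of $\lowBnd_i$ and the exact ones in the role of $\upBnd_i$.

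If you prefer your coupling proof, be careful. The natural coupling (round each realization) violates $\upBnd_i\le\lowBnd_i+\alpha'$ on both truncated tails: mass above $q_{1-\beta'}$ is sent to $\infty$, and mass below $q_{\beta'}$ is lifted by more than $\alpha'$. That coupling therefore charges $2\beta'$ per leaf. To charge only $\beta'$ you must extract the coupling from the CCDF inequality: if $\bar F_X\le\bar F_Y+\beta$, take $u$ uniform, $Y=F_Y^{-1}(u)$ and $X=F_X^{-1}((u-\beta)\bmod 1)$; this gives $\mathbb{P}(X>Y)\le\beta$.

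\textbf{The induction count.} The induction must take the maximum of the children's shifts, which is what the lemma provides, so $r$ counts rounding stages. If you instead counted rounding operations in the expanded convolution tree, repeated squaring would make $r$ grow linearly in $t$.

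\textbf{Order in the add direction.} You state the domination order backwards. Because of the outer negation, validity requires the discretized inner log-sum-exp to be dominated \emph{by} the exact one. That is exactly what rounding $\add{\lossRV}$ up and then rounding the convolutions to the left delivers.
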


In the case of the Gaussian mechanism with sensitivity $1$, $\mathtt{IQR}_{\beta/t} = O\left(\frac{\sqrt{\ln(t/\beta)}}{\sigma} \right)$. In particular, the runtime of the algorithm is $O\left(\frac{\log_{2}^{3}(t) \ln(t/\beta)}{\sigma^{2} \alpha^{2}} \right)$.

Combining this theorem with Theorem~\ref{thm:PLDrandAlloc}, the derivation of the dominating pair of distributions for random allocation (Claim \ref{clm:dominate-random-alloc-pair}), and the fact that stochastic domination implies domination in the hockey-stick sense (Claim \ref{clm:stochDomImpDom}), we get a computationally efficient transformation on PLD realizations for random allocation. The validity of this algorithm's output is maintained by re-discretizing in a domination-preserving manner, the tightness\textendash{}by accounting for the accumulated effect of all discretizations, and runtime\textendash{}by tracking the number of convolutions and the induced grid size.

\Arxiv{
    \begin{remark}\label{rem:lower}
        A nearly identical algorithm produces a numerical lower bound on the PLD of $\alg$, with the same guarantees, by switching domination direction in all steps. We use this option to construct the lower bound in Figures \ref{fig:epsilon_vs_sigma_by_t} and \ref{fig:epsilon_vs_sigma_by_k}, to demonstrate the tightness of our bounds.
    \end{remark}

    \para{Additional truncation.} To improve the efficiency of our algorithm we also truncate the convolved distribution to its $[q(\beta'), q(1-\beta')]$ quantiles after each squaring (while ensuring that both lower and upper bounds are valid). This step is useful since the convolved distribution is more concentrated than the original one. We additionally apply the Chernoff bound-based shrinking of the FFT range for the composition over $k$, as first proposed by \citet{KJPH21}.
 }

\section{Numerical Results}\label{sec:numRes}

\begin{movable}{mov:FFT}{}
    \para{FFT convolution.} While it is also possible to implement the convolution using FFT, it turns out that in many reasonable settings, it is hard to achieve high accuracy (in stochastic domination and the implied privacy evaluation) using reasonable computational resources. This is because FFT requires a constant discretization, and the convolution is carried over the exponent of the loss which implies a significantly larger range. For example, in the case of the Gaussian mechanism the PLD is a Gaussian random variable and its exponent is a lognormal, so setting $\sigma = 1$ and $\beta = 10^{-10}$ induces a width of $\approx 12.5$  on the discretized PLD and $\approx 950$ on its exponent.
    
    While we do not provide explicit tightness guarantees for this method, our validity analysis of the bounds it produces holds nevertheless, and it provides superior bounds in some extreme cases in the remove direction. The difference between the add and remove directions results from the fact FFT bins are evenly spaced in the exponent of the privacy loss space, which implies higher resolution for large positive losses and lower resolution for large negative losses. Consequently, the FFT yields a tighter privacy profile in the remove direction, which roughly corresponds to the right tail bound on the PLD, than in the add direction, which roughly corresponds to the left tail bound. We elaborate on these points in Appendix \ref{apd:expRes}, where we show that combining the two methods may lead to superior bounds (Fig. \ref{fig:FFT_geom_comb}).
    
    We note that combining the FFT-based and direct convolution methods on the same random variable can result in a tighter bound than either one (Claim \ref{clm:bndComb}).     
\end{movable}

\Arxiv{
    \para{Numerical stability.}
    We note that in practice, numerical stability affects probabilities close to machine accuracy ($\approx 10^{-15}$ for float64), which can be mitigated by using float128 at the cost of additional computation. Since these inaccuracies grow with the number of compositions, this requires careful implementation whenever $\delta \le 10^{-15} / t$. To mitigate this effect, we implemented various measures such as using logsf and logcdf to compute the $\beta'$ quantiles, and Kahan summation to reduce the risk of error accumulation.
}

\para{Privacy profile.} To emphasize the advantage of our numerical accounting method for the random allocation scheme, we compare it in Figure \ref{fig:delta_comparison_partial} to the combined analytic methods in \citep{FS25} (using the tightest bound over all the methods they provide), the Monte Carlo-based estimation of the privacy profile (both mean and high probability bounds), the lower bound by \citet{CGHLKKMSZ24}, and the numerical accounting of the Poisson sampling scheme. The chosen parameters match those used by \citet{CGHLKKMSZ24}, corresponding to their experimental setting. Results for additional parameter regimes and full experiment details can be found in Appendix \ref{apd:expRes}.

\begin{figure}[ht]
    \centering
    \includegraphics[width=0.9\linewidth]{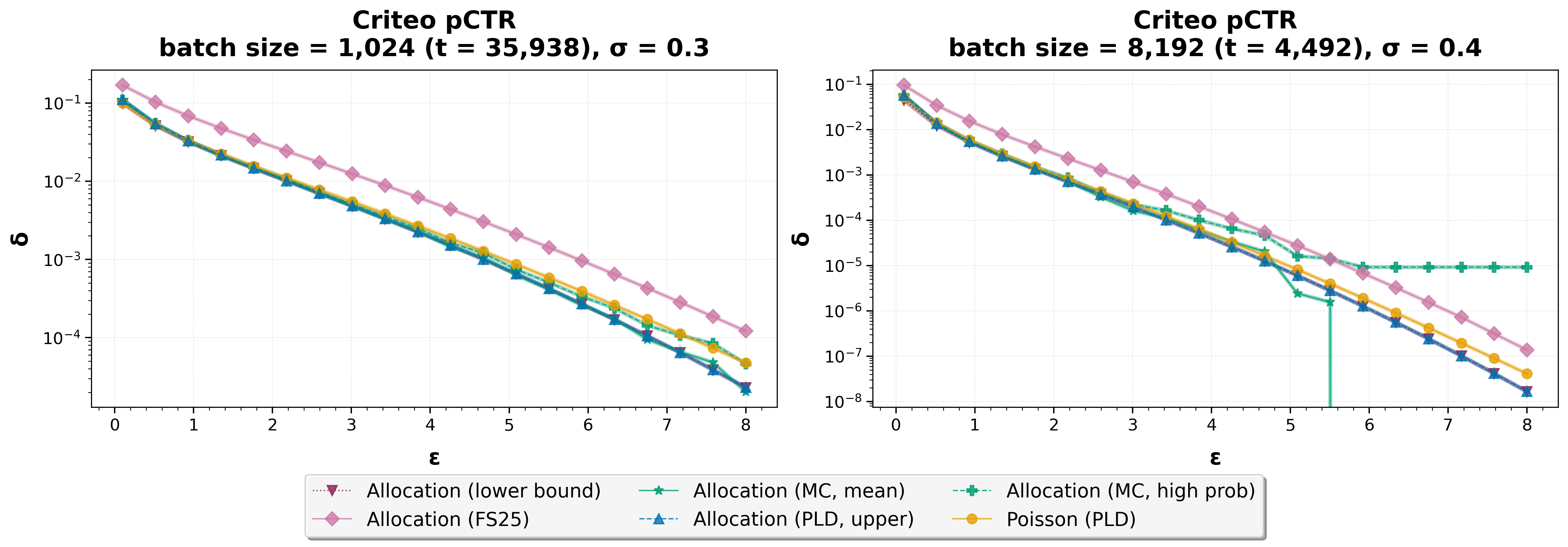}
    \caption{Comparison of the privacy profile of the Poisson scheme and various bounds for the random allocation scheme; the combined methods in \citet{FS25}, the high probability and the average estimations using Monte Carlo simulation and the lower bound by \citet{CGHLKKMSZ24}, and our numerical method, following the setting in \citet{CGHLKKMSZ24} (detailed description can be found in Appendix \ref{apd:expRes}).}
    \label{fig:delta_comparison_partial}
\end{figure}

Our results are nearly indistinguishable from the lower bound, closely match the expected MC-based estimation in the regime where it is statistically stable, are tighter than the MC-based high-probability bound for our simulation setting, and are significantly tighter than the analytical bounds. We also note that the privacy profile of the random allocation scheme is lower than that of Poisson for nearly the entire range. This is the case for nearly all parameter regimes (Fig. \ref{fig:param_scan}), except when $\eps \ll 1$ as depicted in Figure \ref{fig:no-domination}, matching the asymptotic analysis by \citet{DO26} for $\sigma \rightarrow 0$ and $\sigma \rightarrow \infty$, and the theoretical limits $\eps \rightarrow 0$ and $\eps \rightarrow \infty$ proven by \citet{CGHLKKMSZ24}.

We note that in the case of the Laplace mechanism the relation between Poisson and random allocation is more complex. Generally speaking, random allocation's privacy guarantees are better than Poisson's in the low privacy regime (small $\sigma$ and $t$), slightly worse than Poisson's in the high privacy regime, and converge as $\varepsilon$ tends to $0$ (Fig. \ref{fig:Gauss_Laplace_by_sigma}, \ref{fig:Gauss_Laplace_by_t}). A full characterization of the relation between Poisson and random allocation for various algorithms is left to future work.

\para{DP-SGD with PREAMBLE:}
To illustrate the accuracy of our PLD computations and our general approach to PLD-based accounting, we revisit the DP-SGD with low-communication noise addition algorithm (PREAMBLE) \citep{AFKRT26}. 
In this setting, a model is trained on a dataset of $n$ users for $E$ epochs, each consisting of $1/q$ gradient updates for some $q\in(0,1]$ via a variant of DP-SGD. Each update is calculated using a batch of users chosen via Poisson subsampling with probability $q$ using PREAMBLE mean estimation algorithm. In this algorithm the $d$-dimensional gradient is split into blocks of size $B$ and each user in the update batch samples $k = C/B$ (out of the $d/B$) blocks of the gradient. Here $C$ is an overall communication constraint for each user. Each user then adds Gaussian noise to their blocks and the blocks from all the users in the batch are sent to secret-sharing servers and aggregated.

In this algorithm each gradient update corresponds to running the $k = C/B$ out of $t = d/B$ random allocation scheme with Gaussian noise, followed by Poisson subsampling with rate $q$ and then $E / q$ composition. \citet{AFKRT26} used the RDP-based accounting derived from \citep{FS25,DCO25}, combined with amplification by subsampling for RDP and standard composition bounds.

\begin{figure}[ht]
    \centering
    \includegraphics[width=1.0\linewidth]{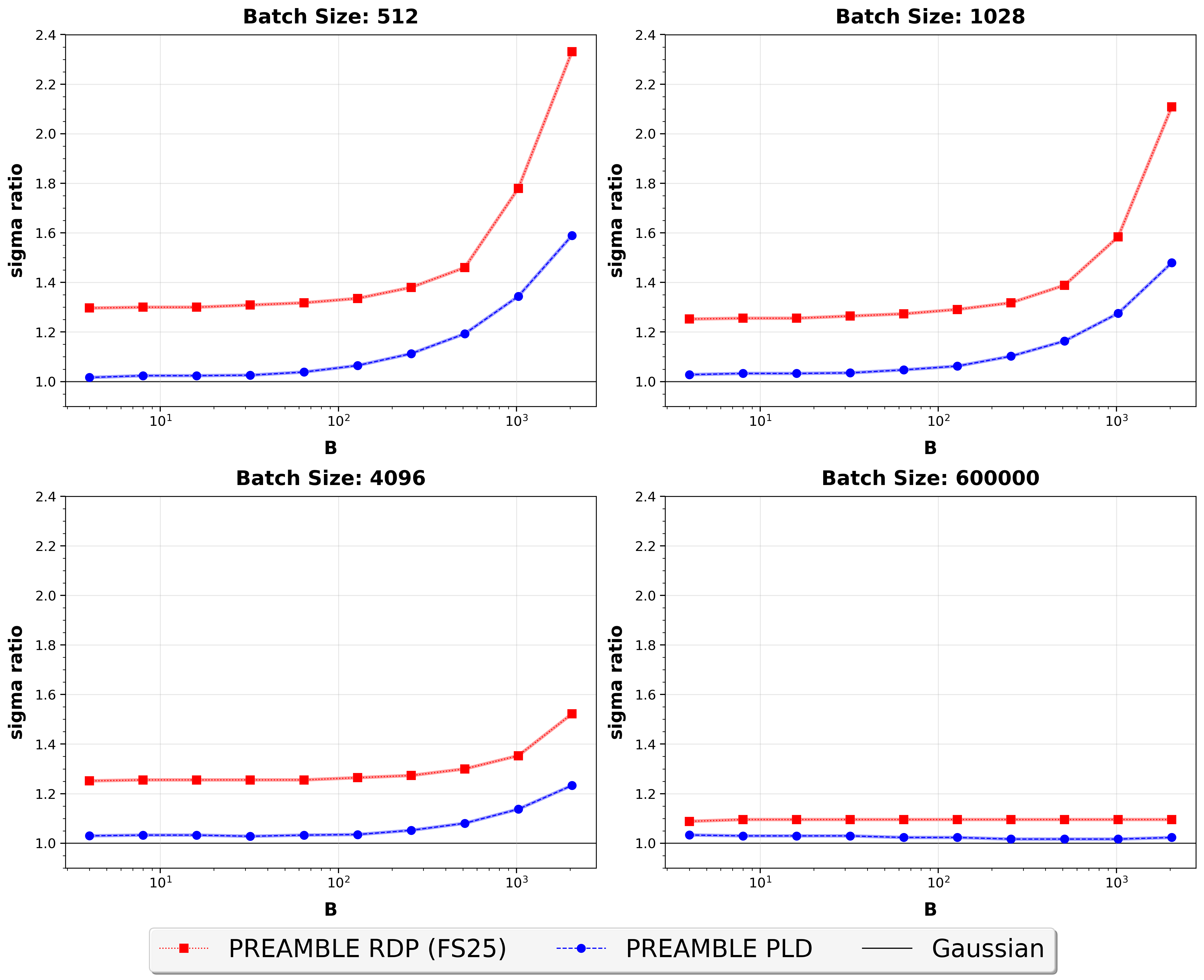}
    \caption{The ratio between the noise level required to achieve $(\eps=1, \delta=10^{-6})$-DP using the PREAMBLE method and simple Gaussian noise addition for DP-SGD calculated via RDP (\citep{FS25}) and numerical accounting.}
    \label{fig:PREAMBLE}
\end{figure}

We perform the privacy analysis for this setting using our PLD accounting methods, combining the results of Theorems \ref{thm:num_acc_RA} and \ref{thm:PLD_subsam} and Lemma \ref{lem:multAlloc}. 
Figure~\ref{fig:PREAMBLE} provides the results and compares them to the RDP-based bounds for all the settings in \citep{AFKRT26}. The sample size is $n = 6 \cdot 10^{5}$, the model dimension is $d = 2^{20}$, the communication constant is $C = 2^{15}$, and the number of epochs is $E=10$, where the expected batch size is $n q$. The y-axis represents the ratio between the noise $\sigma$ required to ensure $(\eps=1, \delta=10^{-6})$-DP using the PREAMBLE method and the standard private mean estimation via Gaussian noise addition (which corresponds to standard DP-SGD). As can be seen from the results, our accounting significantly improves on RDP-based bounds even in the relatively challenging setting where composition over numerous rounds amplifies approximation errors.

    \Arxiv{The large number of composition steps requires extremely tight bounds of the base random allocation PLD, resulting in a heavy computational load. The results were computed using the multiplicative-spacing method described in this work with $10^{6}$ bins for both add and remove directions.}

\para{Runtime.}
In Figure \ref{fig:runtime_experiment} we depict the runtime on a personal laptop as a function of the gap between upper and lower bounds\Arxiv{ (the lower bound is produced by the variant in Remark~\ref{rem:lower})} computed on the same parameter, as controlled by the tightness parameter $\alpha$ for several values of possible allocations $t$. As shown in Figure \ref{fig:gap_vs_disc}, this gap is proportional to the discretization $\alpha$, which is inversely proportional to the grid size. The results match the theoretical derivation in Theorem \ref{thm:num_acc_RA}, with runtime scaling $\sim \alpha^{-2}$. They also show that this method is practical and requires at most a few tens of seconds in most reasonable parameter regimes.

\begin{figure}[H]
    \centering
    \includegraphics[width=
    \ifconferencemode
    1
    \else
    0.8
    \fi
    \linewidth]{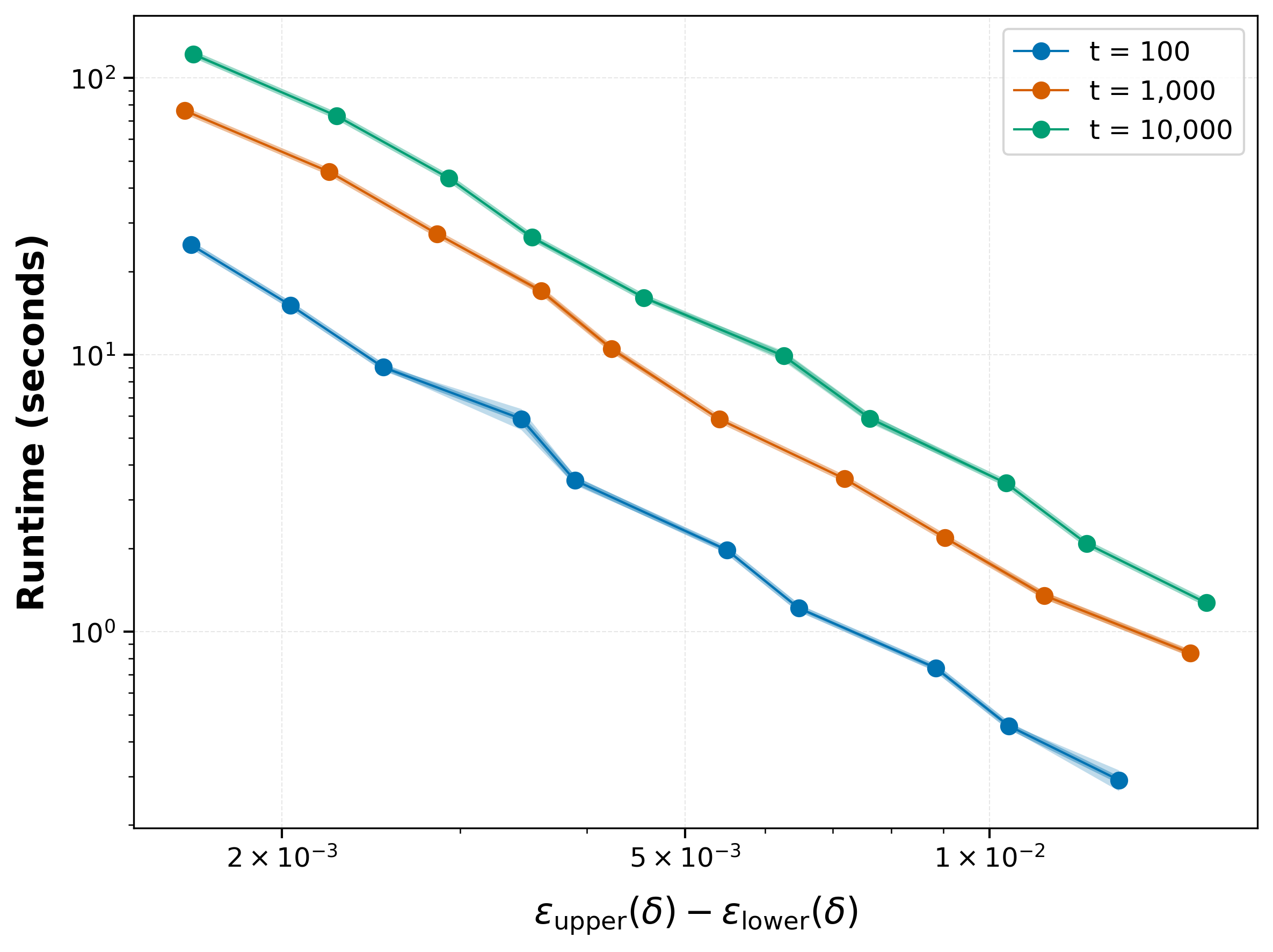}
    \caption{Runtime as a function of discretization $\alpha$ and number of steps $t$ on Apple MacBook Pro M1 averaged over $10$ runs (shaded area depicts the standard deviation).}
    \label{fig:runtime_experiment}
\end{figure}

\para{Privacy-utility trade-off.} \citet{CGHLKKMSZ24} demonstrated the utility advantage of the random allocation scheme relative to Poisson subsampling. In their experiments they train neural networks on a dataset of size $n$ using DP-SGD with a fixed noise level $\sigma$ and expected batch size $b$, and show that sampling using random allocation with $k=1, t=n/b$ results in better utility than Poisson sampling with $q=b/n$ (Figures \ref{fig:delta_comparison_partial} and \ref{fig:delta_comparison} correspond to the parameters of these experiments). This improvement is mainly attributed to the reduced variance in the number of times any single element participates in a training epoch. As mentioned above, our numerical bounds indicate that the privacy parameters of the random allocation scheme with the Gaussian mechanism are slightly better than those of Poisson for the same noise level and expected batch size for nearly all parameter regimes. The combination of these results directly implies that the utility-privacy tradeoff of the random allocation scheme is better than the corresponding one for Poisson in the explored settings.

\ifconferencemode
    The privacy-utility trade-off was additionally investigated in \citep[Appendix H]{FS25} in a synthetic setting.
    Specifically, a dataset $\dataset \in \{0,1\}^{n}$ is sampled i.i.d. from a Bernoulli distribution with expectation $p \in [0,1]$, at each iteration, the algorithm reports a noisy sum of the elements in the corresponding subset $\out_{i}$, and the estimated expectation $\hat{p} \coloneqq \frac{1}{n} \sum_{i \in [t]} \out_{i}$ is compared to the true value of $p$. Their results were based on weaker analytical bounds for random allocation and showed that in some settings Poisson sampling has an advantage due to tighter privacy analysis. 
    Our new bounds demonstrate that random allocation achieves higher accuracy for all privacy settings considered in their example (Fig. \ref{fig:utility_comparison}).
\else
    This phenomenon was additionally demonstrated in \citet[Appendix H]{FS25} using a toy example to compare the privacy-utility tradeoff of the Poisson and allocation schemes, where the mean of a dataset of size $n$ is estimated using an $n$-step scheme. In the one-dimensional setting a dataset $\dataset \in \{0,1\}^{n}$ is sampled i.i.d. from a Bernoulli distribution with expectation $p \in [0,1]$, at each iteration, the algorithm reports a noisy sum of the elements in the corresponding subset $\out_{i}$, and the estimated expectation $\hat{p} \coloneqq \frac{1}{n} \sum_{i \in [t]} \out_{i}$ is compared to the true value of $p$. In the multi-dimensional setting all but one coordinates are fixed to $0$, so changing the dimension only influences the scale of the added noise.

    Using their analytical bound they derived an upper bound on the scale of the noise required to achieve some fixed privacy level using the two schemes. They have shown the tradeoff between the accuracy degradation resulting from the small increase in noise scale required for the allocation scheme relative to Poisson, and the one induced by the additional sampling noise of the Poisson sampling. Using our new analysis, this tradeoff no longer holds, as the random allocation scheme requires slightly \emph{lower} level of noise, resulting in higher level of accuracy in all settings. Full experimental settings can be found in Appendix \ref{apd:expRes}.
\fi
\begin{movable}{mov:utility_comparison}{}
    \begin{figure}[H]
        \centering
        \includegraphics[width=1\linewidth]{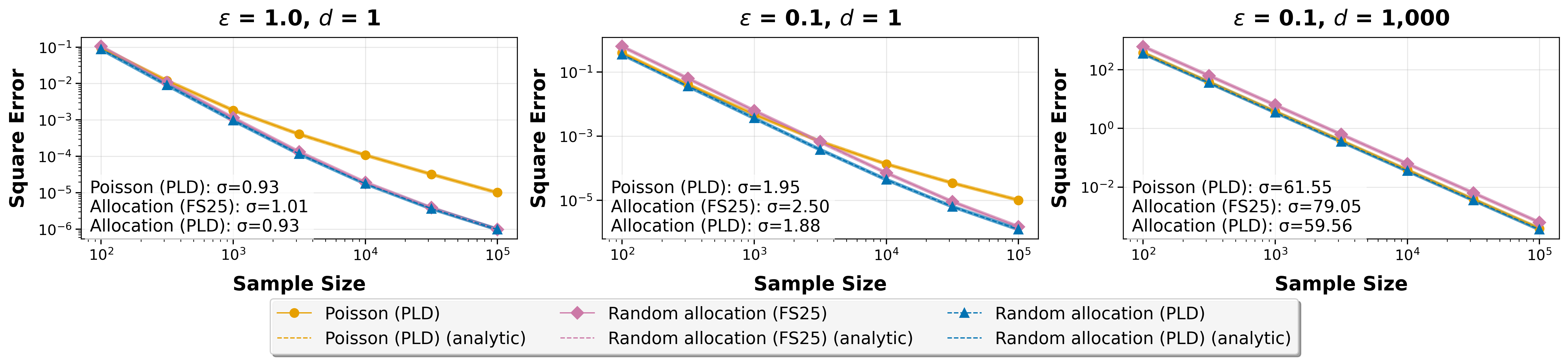}
        \caption{Analytical and empirical square error for the Poisson and random allocation scheme using both the combined method in \citet{FS25} and our PLD accounting, for various values of $\varepsilon$ and $d$ (which corresponds to an increase in sensitivity). We set $p = 0.9$, $t = 10^{3}$, $\delta = 10^{-10}$.}
        \label{fig:utility_comparison}
    \end{figure}
\end{movable}

In Appendix \ref{apd:expRes} we provide a number of additional numerical evaluations, including runtime/accuracy trade-offs, comparison with MCMC-based privacy bounds,  bounds for $k>1$ and comparison with an FFT-based implementation of convolutions for random allocation.

\section{Discussion}
Our work introduces the first efficient and tight numerical privacy accounting method for random allocation. The resulting privacy bounds, together with utility bounds for this sampling scheme, establish random allocation as a better and more practical alternative to Poisson subsampling, which had previously been the only sampling approach used for DP-SGD with valid privacy bounds.
We also formalize PLD realization as a natural representation of privacy loss bounds for numerical accounting, and demonstrate how subsampling and random allocation can be computed directly on this representation.

\section*{Acknowledgments}
We thank Matthew Regehr for his thoughtful comments.
Shenfeld's work was supported in part by the Apple Scholars in AI/ML PhD Fellowship, ERC grant 101125913, Simons Foundation Collaboration 733792, Israel Science Foundation (ISF) grant 2861/20, and a grant from the Israeli Council of Higher Education. Views and opinions expressed are, however, those of the author(s) only and do not necessarily reflect those of the European Union or the European Research Council Executive Agency. Neither the European Union nor the granting authority can be held responsible for them.
\ifconferencemode
\section*{Impact Statement}
This work aims to advance trustworthy machine learning by providing tighter privacy accounting for random allocation, a sampling scheme useful for training neural networks with privacy and communication-efficient mean estimation in a distributed setting.
\else
\newpage
\fi

\bibliography{bibliography}
\bibliographystyle{ICML_2026}

\newpage
\appendix
\onecolumn

\newpage
\ifconferencemode
\section{Additional Related Work}\label{apd:relWork}
\showstored{mov:relWork}

\section{Full Preliminaries}\label{apd:prelim}
\showstored{mov:prelim}
\fi

\section{Missing Details from Section \ref{sec:PLDreal}}\label{apd:PLDreal}
\showstored{mov:subsamPLDdom}

We start by stating some properties of PLD realizations.

\begin{claim}\label{clm:PLDisRealz}
    For any two distributions $P, Q$ over the same domain, $\lossRVfunc{P}{Q}$ is a PLD realization.
\end{claim}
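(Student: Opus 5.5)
We need to verify the two defining conditions of Definition~\ref{def:PLDreal} for $\lossRV = \lossRVfunc{P}{Q}$: that $\mathbb{E}[e^{-\lossRV}] \le 1$ and that $f_{\lossRV}(-\infty) = 0$. The plan is a direct computation from Definition~\ref{def:PLD}, handling carefully the points where $P(\omega)$ or $Q(\omega)$ vanish, since these produce the values $\pm\infty$.

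First, we fix a common dominating measure $\mu$ for $P$ and $Q$ (e.g.\ $\mu = P + Q$) and write $p, q$ for the densities, so $\ell(\omega) = \ln(p(\omega)/q(\omega))$ with the conventions $\ln(c/0)=\infty$ for $c>0$ and $\ln(0/c) = -\infty$. Since $\omega \sim P$, the random variable takes the value $-\infty$ only on $A \coloneqq \{\omega : p(\omega)=0,\ q(\omega)>0\}$. We have $P(A) = \int_A p\, d\mu = 0$, so $f_{\lossRV}(-\infty)=0$. The set where $p=q=0$ is also $P$-null, so $\ell$ is $P$-a.s.\ well defined.

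Second, we compute $\mathbb{E}[e^{-\lossRV}] = \int_{\{p>0\}} p \cdot \frac{q}{p}\, d\mu$. On $\{p>0, q=0\}$ the integrand $e^{-\infty}$ equals $0$, consistent with $q=0$ there. The integral therefore equals $\int_{\{p>0\}} q\, d\mu = Q(\{p>0\}) \le 1$.

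The only delicate point is the bookkeeping of the $\pm\infty$ conventions and the null sets, together with the remark that equality $\mathbb{E}[e^{-\lossRV}]=1$ holds exactly when $Q \ll P$. This remark explains the atom $f_{\dual{\lossRV}}(\infty) = Q(\{p=0\})$ used later in the definition of the dual, so we note it in passing.
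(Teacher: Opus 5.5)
Your proposal is correct and is essentially the paper's argument. The paper makes the same change-of-measure computation, $\mathbb{E}[e^{-\lossRVfunc{P}{Q}}] = \mathbb{E}_{\omega\sim P}[Q(\omega)/P(\omega)] = Q(\{P(\omega)>0\}) \le 1$, and notes that the loss equals $-\infty$ only where $P(\omega)=0$; you simply make the dominating measure and the $\pm\infty$/null-set conventions explicit, where the paper leaves them implicit.
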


\begin{proof}
    From the definition,
    \[
        \mathbb{E}\left[e^{-\lossRVfunc{P}{Q}}\right] = \underset{\omega \sim P}{\mathbb{E}}\left[e^{-\lossFunc{\omega}{P}{Q}}\right] = \underset{\omega \sim P}{\mathbb{E}}\left[\frac{Q(\omega)}{P(\omega)}\right] = \underset{\omega \sim Q}{\mathbb{E}}\left[\mathbbm{1}_{P(\omega) > 0} \right] \le 1,
    \]
    and $\lossFunc{\omega}{P}{Q} = -\infty \Rightarrow P(\omega) = 0$.
\end{proof}

\begin{claim}\label{clm:PLDdual}
    Given a PLD realization $\lossRV$, we have (1) for any $l \in \reals$, $\lossFunc{l}{f_{\lossRV}}{f_{-\dual{\lossRV}}} = l$; (2) the PLD between $f_{\lossRV}$ and $f_{-\dual{\lossRV}}$ is $f_{\lossRV}$; and (3) $f_{\dual{\lossRV}}$ is a PLD realization as well.
\end{claim}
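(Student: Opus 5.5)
We verify the three parts by direct computation from the definition of the PLD dual (Defn.~\ref{def:PLDreal}), working with the measures $f_{\lossRV}$ on $[-\infty,\infty]$. By definition, $f_{\dual{\lossRV}}(l) = f_{\lossRV}(-l)e^{l}$ for finite $l$, with an atom at $\infty$ of mass $1-\mathbb{E}[e^{-\lossRV}]$. Hence $-\dual{\lossRV}$ has measure $f_{-\dual{\lossRV}}(l) = f_{\dual{\lossRV}}(-l) = f_{\lossRV}(l)e^{-l}$ for finite $l$. It also has an atom at $-\infty$ of mass $1-\mathbb{E}[e^{-\lossRV}]$, and no mass at $+\infty$, since $f_{\dual{\lossRV}}(-\infty) = f_{\lossRV}(\infty)e^{-\infty}=0$ under the natural convention. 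The first check is that $f_{-\dual{\lossRV}}$ is a probability measure. Its finite part integrates to $\mathbb{E}[e^{-\lossRV}\mathbbm{1}_{\lossRV<\infty}] = \mathbb{E}[e^{-\lossRV}]$, and $f_{\lossRV}(-\infty)=0$ ensures nothing is lost at $-\infty$. Adding the atom gives total mass $1$; nonnegativity of that atom is exactly the condition $\mathbb{E}[e^{-\lossRV}]\le 1$.

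For (1), take the pair of distributions $P=f_{\lossRV}$ and $Q=f_{-\dual{\lossRV}}$ on the common domain $[-\infty,\infty]$. For finite $l$, the ratio $P(l)/Q(l) = f_{\lossRV}(l)/(f_{\lossRV}(l)e^{-l}) = e^{l}$, so $\lossFunc{l}{P}{Q}=l$ wherever $P(l)>0$. At the endpoints the loss is consistent with the identity as well. At $l=\infty$, $P$ may have mass while $Q$ has none, so the loss is $+\infty$. At $l=-\infty$, $Q$ may have mass while $P$ has none, so the loss is $-\infty$, but this point is never sampled under $P$. If the densities are with respect to a general base measure rather than a PMF, the same computation applies to Radon--Nikodym derivatives with respect to any measure dominating both.

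Part (2) follows immediately from (1). The PLD is the law of $\lossFunc{\omega}{P}{Q}$ for $\omega\sim P$, which is the law of $\omega$ itself under $P$, namely $f_{\lossRV}$.

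For (3), we need $f_{\dual{\lossRV}}(-\infty)=0$ and $\mathbb{E}[e^{-\dual{\lossRV}}]\le 1$. The first was noted above. For the second, the atom at $\infty$ contributes $0$. The finite part gives $\int f_{\lossRV}(-l)e^{l}e^{-l}\,dl = \Pr[\lossRV>-\infty \text{ and finite}] = 1 - f_{\lossRV}(\infty)\le 1$. Alternatively, one can observe that $\dual{\lossRV}$ is exactly $\lossRVfunc{Q}{P}$ for the pair above and invoke Claim~\ref{clm:PLDisRealz}.

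The only delicate step is the bookkeeping of the atoms at $\pm\infty$, in particular the convention $f_{\lossRV}(\infty)\cdot e^{-\infty}=0$ and the need for $f_{\lossRV}(-\infty)=0$ to make the normalization exact. I would state these conventions explicitly at the outset, and the rest is routine.
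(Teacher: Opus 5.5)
Your proposal is correct and follows essentially the same route as the paper: compute the ratio $f_{\lossRV}(l)/f_{-\dual{\lossRV}}(l)=e^{l}$, read off the PLD as the pushforward of $f_{\lossRV}$ under the identity map, and verify $\mathbb{E}[e^{-\dual{\lossRV}}]=\int f_{\lossRV}(-l)\,dl\le 1$. Your extra bookkeeping fills in details the paper leaves implicit, namely checking that $f_{-\dual{\lossRV}}$ has total mass one and tracking the atoms at $\pm\infty$. Your alternative for (3), which identifies $\dual{\lossRV}$ with the loss of the reversed pair and applies Claim~\ref{clm:PLDisRealz}, is also valid.
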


\begin{proof}
    From the definition of PLD realization $f_{\lossRVfunc{f_{\lossRV}}{f_{-\dual{\lossRV}}}}(-\infty) = f_{\lossRV}(-\infty) = 0$, and from the definition of the PLD dual $f_{-\dual{\lossRV}}(\infty) = 0$ so $f_{\lossRVfunc{f_{\lossRV}}{f_{-\dual{\lossRV}}}}(\infty) = f_{\lossRV}(\infty)$.
    
    Combining the definition of the privacy loss and the PLD dual we have,
    \[
        \lossFunc{l}{\lossRV}{-\dual{\lossRV}} = \ln\left(\frac{f_{\lossRV}(l)}{f_{-\dual{\lossRV}}(l)}\right) = \ln\left(\frac{f_{\lossRV}(l)}{f_{\lossRV}(l) e^{-l}}\right) = \ln\left(e^{l}\right) = l.
    \]
    
    The second part is a direct result of the fact,
    \[
        f_{\lossRVfunc{f_{\lossRV}}{f_{-\dual{\lossRV}}}}(l) = \int_{\reals} \delta(\lossFunc{x}{f_{\lossRV}}{f_{-\dual{\lossRV}}} - l) f_{\lossRV}(x) d x = \int_{\reals} \delta(x - l) f_{\lossRV}(x) d x = f_{\lossRV}(l).
    \]

    Similarly, for the third part,
    \[
        \mathbb{E}\left[e^{-\dual{\lossRV}}\right] = \int_{\reals} e^{-l} f_{\dual{\lossRV}}(l) dl = \int_{\reals} f_{\lossRV}(-l) dl \le 1,
    \]
    and by definition $f_{\dual{\lossRV}}(-\infty) = 0$.
\end{proof}

\begin{proof}[Proof of Theorem \ref{thm:PLD_subsam}]
    We first provide an explicit relation of the privacy losses.
    \[
        \lossFunc{\omega}{P_{\lambda}}{Q} = \ln\left(\frac{P_{\lambda}(\omega)}{Q(\omega)}\right) = \ln\left(\frac{\lambda P(\omega) + (1-\lambda) Q(\omega)}{Q(\omega)}\right) = \ln\left(1 + \lambda \left(e^{\lossFunc{\omega}{P}{Q}} - 1\right)\right)
    \]
    Using this identity we get,
    \begin{align*}
        F_{\lossRVfunc{P_{\lambda}}{Q}}(l) & = \prob{\omega \sim P_{\lambda}}{\lossFunc{\omega}{P_{\lambda}}{Q} \le l}
        \\ & = \prob{\omega \sim P_{\lambda}}{\ln\left(1 + \lambda \left(e^{\lossFunc{\omega}{P}{Q}} - 1\right)\right) \le l}
        \\ & = \prob{\omega \sim P_{\lambda}}{\lossFunc{\omega}{P}{Q} \le \ln\left(1 + \left(e^{l} - 1\right) / \lambda\right)}
        \\ & = \prob{\omega \sim P_{\lambda}}{\lossFunc{\omega}{P}{Q} \le \phi_{\lambda}(l)}
        \\ & = \lambda \prob{\omega \sim P}{\lossFunc{\omega}{P}{Q} \le \phi_{\lambda}(l)} + (1-\lambda) \prob{\omega \sim Q}{\lossFunc{\omega}{P}{Q} \le \phi_{\lambda}(l)}
        \\ & = \lambda \prob{\omega \sim P}{\lossFunc{\omega}{P}{Q} \le \phi_{\lambda}(l)} + (1-\lambda) \prob{\omega \sim Q}{-\lossFunc{\omega}{Q}{P} \le \phi_{\lambda}(l)}
        \\ & = \lambda F_{\lossRVfunc{P}{Q}}(\phi_{\lambda}(l)) + (1-\lambda) F_{-\lossRVfunc{Q}{P}}(\phi_{\lambda}(l)),
    \end{align*}
    which implies,
    \[
        f_{\lossRVfunc{P_{\lambda}}{Q}}(l) = \lambda f_{\lossRVfunc{P}{Q}}(\phi_{\lambda}(l)) + (1-\lambda) f_{-\lossRVfunc{Q}{P}}(\phi_{\lambda}(l)).
    \]
    We note that this is an identity between probability \emph{masses}: $f$ denotes the law of the realization (its PMF in the discrete case, or the pushforward of its distribution in general), and $\phi_{\lambda}$ merely relabels the loss axis. A monotone relabeling relocates mass between values but neither creates nor destroys it, so the CDF\textemdash{}and hence $f$ read as mass\textemdash{}composes without a Jacobian. The $\phi_{\lambda}'$ factor would appear only if $f$ were interpreted as a Lebesgue density (mass per unit loss), a representation we do not use.
    
    Similarly,
    \begin{align*}
        F_{\lossRVfunc{Q}{P_{\lambda}}}(l) & = \prob{\omega \sim Q}{\lossFunc{\omega}{Q}{P_{\lambda}} \le l}
        \\ & = \prob{\omega \sim Q}{-\ln\left(1 + \lambda \left(e^{\lossFunc{\omega}{P}{Q}} - 1\right)\right) \le l}
        \\ & = \prob{\omega \sim Q}{\lossFunc{\omega}{P}{Q} \ge \ln\left(1 + \left(e^{-l} - 1\right) / \lambda\right)}
        \\ & = \prob{\omega \sim Q}{\lossFunc{\omega}{Q}{P} \le -\ln\left(1 + \left(e^{-l} - 1\right) / \lambda\right)}
        \\ & = \prob{\omega \sim Q}{\lossFunc{\omega}{Q}{P} \le -\phi_{\lambda}(-l)}
        \\ & = F_{\lossRVfunc{Q}{P}}(-\phi_{\lambda}(-l)),
    \end{align*}
    which implies, $f_{\lossRVfunc{Q}{P_{\lambda}}}(l) = f_{\lossRVfunc{Q}{P}}(-\phi_{\lambda}(-l))$.
\end{proof}

\section{Missing Details from Section \ref{sec:PLDconv}}\label{apd:PLDconv}

\begin{claim}\label{clm:stochDomPLDreal}
	If a random variable $\upBnd$ stochastically dominates a PLD realization $\lossRV$ then it is a PLD realization as well.
\end{claim}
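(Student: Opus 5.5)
We need to check the two defining conditions of Definition~\ref{def:PLDreal} for $\upBnd$: that $f_{\upBnd}(-\infty)=0$ and that $\mathbb{E}[e^{-\upBnd}]\le 1$. Both will follow from the CCDF inequality $\bar{F}_{\lossRV}(x)\le \bar{F}_{\upBnd}(x)$ for all $x\in[-\infty,\infty]$ in Definition~\ref{def:stochDom}, together with the two corresponding properties of $\lossRV$.

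\textbf{Step 1: no atom at $-\infty$.} Evaluate the stochastic domination at $x=-\infty$, taking $\bar{F}_X(-\infty)=\mathbb{P}(X>-\infty)$. Since $\lossRV$ is a PLD realization, $f_{\lossRV}(-\infty)=0$, so $\bar{F}_{\lossRV}(-\infty)=1$. Then $1\le \bar{F}_{\upBnd}(-\infty)\le 1$, which gives $\mathbb{P}(\upBnd=-\infty)=0$. If one prefers to read $\bar{F}$ only on $\reals$, the same conclusion follows from continuity from above: $\mathbb{P}(\upBnd=-\infty)=\lim_{x\to-\infty}F_{\upBnd}(x)\le\lim_{x\to-\infty}F_{\lossRV}(x)=\mathbb{P}(\lossRV=-\infty)=0$.

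\textbf{Step 2: the moment bound.} The map $g(l)=e^{-l}$ is nonincreasing on $[-\infty,\infty]$, with $g(\infty)=0$. First-order stochastic domination $\lossRV\preceq\upBnd$ gives $\mathbb{E}[g(\upBnd)]\le\mathbb{E}[g(\lossRV)]$ for nonincreasing nonnegative $g$. The cleanest route is the layer-cake formula:
\[
\mathbb{E}[e^{-X}]=\int_0^\infty \mathbb{P}(e^{-X}>s)\,ds=\int_0^\infty \mathbb{P}(X<-\ln s)\,ds .
\]
Here $\mathbb{P}(X<y)=1-\mathbb{P}(X\ge y)$. The pointwise inequality $\mathbb{P}(\upBnd<y)\le\mathbb{P}(\lossRV<y)$ follows from $\bar{F}_{\lossRV}\le\bar{F}_{\upBnd}$ by taking left limits, since $\mathbb{P}(X\ge y)=\lim_{x\uparrow y}\bar{F}_X(x)$. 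Integrating then yields $\mathbb{E}[e^{-\upBnd}]\le\mathbb{E}[e^{-\lossRV}]\le1$.

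Alternatively, one can couple via quantile functions, $\upBnd=F_{\upBnd}^{-1}(W)\ge F_{\lossRV}^{-1}(W)=\lossRV$ for $W$ uniform, and apply monotonicity pointwise.

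\textbf{Main obstacle.} The only delicate point is bookkeeping at the extended values $\pm\infty$. The atom of $\upBnd$ at $+\infty$ contributes $0$ to $\mathbb{E}[e^{-\upBnd}]$, which is harmless. The possibility of an atom at $-\infty$ is exactly what Step 1 rules out; without it $e^{-\upBnd}$ could be infinite. Strict versus non-strict inequalities in the CCDF are handled by the left-limit argument.
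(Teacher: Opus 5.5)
Your proof is correct and takes the same route as the paper. The paper also reads off $f_{\upBnd}(-\infty)=0$ from the domination condition at $x=-\infty$, then uses the monotonicity of $e^{-x}$ to get $\mathbb{E}[e^{-\upBnd}]\le\mathbb{E}[e^{-\lossRV}]\le 1$; your layer-cake and left-limit argument just spells out the standard fact, which the paper invokes without proof, that first-order stochastic domination reverses the expectations of nonincreasing functions.
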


\begin{proof}
    From the definition of stochastic domination $f_{\upBnd}(-\infty) = 0$, and from the monotonicity of the function $e^{-x}$ we have that $e^{-\upBnd}$ is dominated by $e^{-\lossRV}$ so $\expect{}{e^{-\upBnd}} \le \expect{}{e^{-\lossRV}} \le 1$ (in the probability literature this is typically referred to as ``stochastically smaller than'').
\end{proof}

\showstored{mov:stochDomImpDom}

\begin{proof}\Arxiv{[Proof of Claim \ref{clm:stochDomImpDom}]}
    Using the fact that 
    \begin{align*}
        \HockeyStickFunc{e^{\eps}}{X} & = \expect{}{\left[1 - e^{\eps-X} \right]_{+}}
        \\ & = \int_{0}^{1} \prob{}{\left[1 - e^{\eps-X} \right]_{+} > t} dt
        \\ & = \int_{0}^{1} \prob{}{X > \eps - \ln(1 - t)} dt
        \\ & = \int_{0}^{1} \bar{F}_{X}\left(\eps - \ln(1 - t)\right) dt,
    \end{align*}

    we have,
    \[
        \HockeyStickFunc{e^{\eps}}{\lowBnd} = \int_{0}^{1} \bar{F}_{\lowBnd}\left(\eps - \ln(1 - t) \right) dt \le \int_{0}^{1} \bar{F}_{\upBnd}\left(\eps - \alpha - \ln(1 - t)\right) + \beta dt = \HockeyStickFunc{e^{\eps - \alpha}}{\upBnd} + \beta.
    \]
\end{proof}

\begin{claim}\label{clm:PLDdualDom}
    Stochastic domination is not maintained under some of the transformations we consider even for PLD realizations, but its approximate form does.
    \begin{enumerate}
        \item There exist two PLD realizations $\lowBnd, \upBnd$ such that $\upBnd$ stochastically dominates $\lowBnd$ but $\dual{\upBnd}$ neither stochastically dominates $\dual{\lowBnd}$ nor is stochastically dominated by it.
        \item There exist $\lambda \in (0,1]$ and two random variables $\lowBnd, \upBnd$ such that $\upBnd$ stochastically dominates $\lowBnd$ but $\rem{\varphi}_{\lambda}(\upBnd)$ neither stochastically dominates $\rem{\varphi}_{\lambda}(\lowBnd)$ nor is stochastically dominated by it.
        \item For any two PLD realizations $\lowBnd, \upBnd$, if $\lowBnd{} \preceq \upBnd$ then $\dual{\upBnd} \preceq_{(0, c)} \dual{\lowBnd}$ where $c \coloneqq f_{\dual{\upBnd}}(\infty) - f_{\dual{\lowBnd}}(\infty) = \mathbb{E}\left[e^{-\lowBnd}\right] - \mathbb{E}\left[e^{-\upBnd}\right]$.
    \end{enumerate}
\end{claim}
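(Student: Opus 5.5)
For the two non-preservation statements I would use the simplest possible explicit examples: point masses. For the approximate statement I would give a direct computation of the CCDF of the dual, reducing the claim to the monotonicity characterization of first-order stochastic domination.

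\textbf{Part 1.} Fix $a>0$ and take $\lowBnd \equiv 0$ and $\upBnd \equiv a$.
\begin{itemize}
\item Both are PLD realizations, since $\mathbb{E}[e^{-\lowBnd}]=1$ and $\mathbb{E}[e^{-\upBnd}]=e^{-a}\le 1$, and neither has mass at $-\infty$.
\item Clearly $\lowBnd\preceq\upBnd$.
\item By Definition~\ref{def:PLDreal}, $\dual{\lowBnd}\equiv 0$.
\item $\dual{\upBnd}$ has mass $e^{-a}$ at $-a$ and an atom of mass $1-e^{-a}$ at $\infty$.
\end{itemize}
Hence $\bar F_{\dual{\upBnd}}(-a/2)=1-e^{-a}<1=\bar F_{\dual{\lowBnd}}(-a/2)$. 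Also $\bar F_{\dual{\upBnd}}(a)=1-e^{-a}>0=\bar F_{\dual{\lowBnd}}(a)$. So the two duals are incomparable.

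\textbf{Part 2.} Reuse the same pair $\lowBnd\equiv 0$, $\upBnd\equiv a$, and pick any $\lambda\in(0,1)$. Read off the law of $\rem{\varphi}_\lambda$ from Theorem~\ref{thm:PLD_subsam}, where a point $l$ receives the mass sitting at $\phi_\lambda(l)$ and $\phi_\lambda^{-1}(u)=\ln(1+\lambda(e^u-1))$.
\begin{itemize}
\item Since $\phi_\lambda(0)=0$ and $-\dual{\lowBnd}\equiv 0$, we get $\rem{\varphi}_\lambda(\lowBnd)\equiv 0$.
\item $-\dual{\upBnd}$ has mass $e^{-a}$ at $a$ and mass $1-e^{-a}$ at $-\infty$.
\item Therefore $\rem{\varphi}_\lambda(\upBnd)$ puts mass $\lambda+(1-\lambda)e^{-a}$ at $\phi_\lambda^{-1}(a)>0$.
\item It also puts mass $(1-\lambda)(1-e^{-a})>0$ at $\ln(1-\lambda)<0$, which is the image of $-\infty$.
\end{itemize}
Thus $\rem{\varphi}_\lambda(\upBnd)$ has strictly less CCDF than the point mass at $0$ just below $0$, and strictly more just above $0$. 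The two are incomparable.

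\textbf{Part 3.} The approach is to write the dual CCDF in terms of the original variable. For any PLD realization $\lossRV$ and $x\in\reals$, Definition~\ref{def:PLDreal} gives
\[
\bar F_{\dual{\lossRV}}(x)=\int_{l>x} f_{\lossRV}(-l)e^{l}\,dl+f_{\dual{\lossRV}}(\infty)=\mathbb{E}\left[e^{-\lossRV}\mathbbm{1}_{\lossRV<-x}\right]+1-\mathbb{E}\left[e^{-\lossRV}\right].
\]
Subtracting the two instances yields
\[
\bar F_{\dual{\upBnd}}(x)-\bar F_{\dual{\lowBnd}}(x)=c+\left(\mathbb{E}[g_x(\upBnd)]-\mathbb{E}[g_x(\lowBnd)]\right),
\]
where $g_x(u)\coloneqq e^{-u}\mathbbm{1}_{u<-x}$. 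The function $g_x$ is nonnegative and nonincreasing: it decreases on $(-\infty,-x)$ and then drops to $0$. Since $\lowBnd\preceq\upBnd$, the standard characterization of first-order stochastic domination (for instance through the quantile coupling) gives $\mathbb{E}[g_x(\upBnd)]\le\mathbb{E}[g_x(\lowBnd)]$. Hence $\bar F_{\dual{\upBnd}}(x)\le\bar F_{\dual{\lowBnd}}(x)+c$, which is $\dual{\upBnd}\preceq_{(0,c)}\dual{\lowBnd}$.

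The endpoints $x=\pm\infty$ are checked directly. At $x=\infty$ both sides reduce to the atoms at $\infty$, whose difference is exactly $c$. The case $x=-\infty$ is trivial.

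The same monotone-function inequality applied to $e^{-u}$ shows $c\ge 0$, and it also gives the identity $c=f_{\dual{\upBnd}}(\infty)-f_{\dual{\lowBnd}}(\infty)$.

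The main care point is the boundary convention, strict versus weak inequality in $\mathbbm{1}_{\lossRV<-x}$ and its interaction with atoms. I would handle it by working with right-continuous CCDFs and noting that $g_x$ remains nonincreasing under either convention.
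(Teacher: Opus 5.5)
Your proof is correct and follows the paper's argument: Part 2 is the paper's own example (the paper takes $a=1$, $\lambda=1/2$), and Part 3 is the same computation of $\bar{F}_{\dual{W}}$ via the nonincreasing function $e^{-u}\mathbbm{1}_{u<-x}$ combined with the monotone-function characterization of stochastic order. The only difference is in Part 1: you reuse the point-mass pair $0 \preceq a$, whereas the paper uses a two-point example on $\{\ln 2, \ln 4\}$; your choice works equally well, since the incomparability comes from the atom $1-e^{-a}$ at $\infty$ in $\dual{\upBnd}$ while its remaining finite mass sits to the left of $0$.
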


\begin{proof}
    \textbf{First part:} Consider the random variables $\lowBnd, \upBnd$ over $\{\ln(2), \ln(4)\}$, such that $f_{\lowBnd}(\ln(2)) = f_{\upBnd}(\ln(4)) = 0.6$ and $f_{\lowBnd}(\ln(4)) = f_{\upBnd}(\ln(2)) = 0.4$. Clearly, both are PLD realizations, and $\upBnd$ stochastically dominates $\lowBnd$. Their duals are $f_{\dual{\lowBnd}}(-\ln(4)) = 0.1, f_{\dual{\lowBnd}}(-\ln(2)) = 0.3, f_{\dual{\lowBnd}}(\infty) = 0.6$, and $f_{\dual{\upBnd}}(-\ln(4)) = 0.15, f_{\dual{\upBnd}}(-\ln(2)) = 0.2, f_{\dual{\upBnd}}(\infty) = 0.65$ which do not stochastically dominate each other.

    \textbf{Second part:} Consider the random variables $\lowBnd, \upBnd$, such that $f_{\lowBnd}(0) = f_{\upBnd}(1) = 1$. Clearly, both are PLD realizations, and $\upBnd$ stochastically dominates $\lowBnd$. Their duals are $f_{\dual{\lowBnd}}(0) = 1$, and $f_{\dual{\upBnd}}(-1) = 1/e, f_{\dual{\upBnd}}(\infty) = 1-1/e$. Setting $\lambda = 0.5$ we have $f_{\rem{\varphi}_{0.5}(\lowBnd)}(0) = 1$ and $f_{\rem{\varphi}_{0.5}(\upBnd)}(\ln((e+1)/2)) = (e+1)/2e, f_{\rem{\varphi}_{0.5}(\upBnd)}(\ln(1/2)) = (e-1)/2e$ which do not stochastically dominate each other.

    \textbf{Third part:} Denoting $g_{l}(x) \coloneqq e^{-x} \cdot \mathbbm{1}_{\{x \le -l\}}$ we have for $W \in \{\lowBnd, \upBnd\}$,

    \[
        F_{\dual{W}}(l) = \int_{-\infty}^{l} f_{\dual{W}}(t) dt = \int_{-l}^{\infty} e^{-t} f_{W}(t) dt = \expect{}{e^{-W}} - \expect{}{g_{l}(W)} = 1 - f_{\dual{W}}(\infty) - \expect{}{g_{l}(W)}.
    \] 

    Using the fact that for any two random variables $\lowBnd, \upBnd$ if $\lowBnd \preceq \upBnd$ and $\varphi$ is a non-increasing function then $\mathbb{E}[\varphi(\upBnd)] \le \mathbb{E}[\varphi(\lowBnd)]$, assuming both expectations exist (See e.g., \cite{SS07} Eq. 1.A.7), we get
    \[
        \bar{F}_{\dual{\upBnd}}(l) = \expect{}{g_{l}(\upBnd)} + f_{\dual{\upBnd}}(\infty) \le \expect{}{g_{l}(\lowBnd)} + f_{\dual{\upBnd}}(\infty) = \bar{F}_{\dual{\lowBnd}}(l) + f_{\dual{\upBnd}}(\infty) - f_{\dual{\lowBnd}}(\infty).
    \]
\end{proof}

\showstored{mov:stochDomComp}

\begin{proof}\Arxiv{[Proof of Claim \ref{clm:stochDomComp}]}
    From the definition, for any $l \in (-\infty, \infty)$,
    \begin{align*}
        \bar{F}_{\lowBnd_{1}+\lowBnd_{2}}(l) & = \int_{-\infty}^{\infty} f_{\lowBnd_{1}}(x) \bar{F}_{\lowBnd_{2}}(l-x) dx
        \\ & \le \int_{-\infty}^{\infty} f_{\lowBnd_{1}}(x) (\bar{F}_{\upBnd_{2}}(l-x-\alpha_{2}) + \beta_{2}) dx
        \\ & = \beta_{2} + \bar{F}_{\lowBnd_{1}+\upBnd_{2}}(l-\alpha_{2})
        \\ & = \beta_{2} + \int_{-\infty}^{\infty} f_{\upBnd_{2}}(x) \bar{F}_{\lowBnd_{1}}(l-\alpha_{2}-x) dx
        \\ & \le \beta_{2} + \int_{-\infty}^{\infty} f_{\upBnd_{2}}(x) (\bar{F}_{\upBnd_{1}}(l-\alpha_{2}-x-\alpha_{1}) + \beta_{1}) dx
        \\ & = \bar{F}_{\upBnd_{1} + \upBnd_{2}}(l-(\alpha_{1}+\alpha_{2})) + (\beta_{1} + \beta_{2}).
    \end{align*}

    It remains to verify the two endpoints. For $l = -\infty$, note that $\lowBnd_{1} + \lowBnd_{2} > -\infty$ if and only if both $\lowBnd_{1} > -\infty$ and $\lowBnd_{2} > -\infty$ (and likewise for $\upBnd_{1}, \upBnd_{2}$), so by independence
    \begin{align*}
        \bar{F}_{\lowBnd_{1}+\lowBnd_{2}}(-\infty) & = \bar{F}_{\lowBnd_{1}}(-\infty) \cdot \bar{F}_{\lowBnd_{2}}(-\infty)
        \\ & \le \min\{\bar{F}_{\upBnd_{1}}(-\infty) + \beta_{1}, 1\} \cdot \min\{\bar{F}_{\upBnd_{2}}(-\infty) + \beta_{2}, 1\}
        \\ & \le \bar{F}_{\upBnd_{1}}(-\infty) \cdot \bar{F}_{\upBnd_{2}}(-\infty) + \beta_{1} + \beta_{2}
        \\ & = \bar{F}_{\upBnd_{1}+\upBnd_{2}}(-\infty) + \beta_{1} + \beta_{2},
    \end{align*}
    where the second inequality uses $\bar{F}_{\lowBnd_{i}}(-\infty) \le \bar{F}_{\upBnd_{i}}(-\infty) + \beta_{i}$ together with $\bar{F}_{\lowBnd_{i}}(-\infty) \le 1$. 
    
    For $l = \infty$ the bound is immediate, since $\bar{F}_{\lowBnd_{1} + \lowBnd_{2}}(\infty) = \bar{F}_{\upBnd_{1} + \upBnd_{2}}(\infty) = 0$.
\end{proof}

\begin{proof}[Proof of Theorem \ref{thm:PLDrandAlloc}]
    From the definition,
    \begin{align*}
        \lossFunc{\bar{\omega}_{1:t}}{\bar{P}_{t}}{Q^{t}} & = \ln\left(\frac{\bar{P}_{t}(\bar{\omega}_{1:t})}{Q^{t}(\bar{\omega}_{1:t})}\right)
        \\ & = \ln\left(\frac{\frac{1}{t} \sum_{i \in [t]} Q^{i-1}(\bar{\omega}_{1:i-1}) \times P(\omega_{i}) \times Q^{t-i}(\bar{\omega}_{i+1:t})}{Q^{t}(\bar{\omega}_{1:t})}\right)
        \\ & = \ln\left(\frac{1}{t} \sum_{i \in [t]} \frac{P(\omega_{i})}{Q(\omega_{i})}\right)
        \\ & = \ln\left(\frac{1}{t} \sum_{i\in [t]} e^{\lossFunc{\omega_{i}}{P}{Q}}\right).
    \end{align*}
    
    From symmetry, the random variable defined by $\lossFunc{\bar{\omega}_{1:t}}{\bar{P}_{t}}{Q^{t}}$ where $\bar{\omega}_{1:t}$ is sampled from $Q^{i-1} \times P \times Q^{t-i}$ is identically distributed for any $i \in [t]$, so WLOG we choose $\bar{\omega}_{1:t} \sim P \times Q^{t-1}$ and get 
    \[
        \lossFunc{\bar{\omega}_{1:t}}{\bar{P}_{t}}{Q^{t}} = \ln\left(\frac{1}{t} e^{\lossFunc{\omega_{1}}{P}{Q}} + \frac{1}{t} \sum_{i\in [2:t]} e^{-\lossFunc{\omega_{i}}{Q}{P}}\right).
    \]
    Combining this with the fact $\dual{\lossRVfunc{P}{Q}} = \lossRVfunc{Q}{P}$ (Claim \ref{clm:PLDdual}) this implies, 
    \[
        \lossRVfunc{\bar{P}_{t}}{Q^{t}} = \ln\left(\frac{1}{t} \left(e^{\lossRVfunc{P}{Q}} + \sum_{i \in [t-1]} e^{-\lossRVfunc{Q}{P}^{(i)}}\right)\right) = \ln\left(\frac{1}{t} \left(e^{\lossRVfunc{P}{Q}} + \sum_{i \in [t-1]} e^{-\dual{\lossRVfunc{P}{Q}}^{(i)}}\right)\right),
    \]
    where the superscript $(i)$ denotes independent copies.

    Similarly, sampling all elements from $Q^{t}$ we have,
    \[
        \lossFunc{\bar{\omega}_{1:t}}{Q^{t}}{\bar{P}_{t}} = -\lossFunc{\bar{\omega}_{1:t}}{\bar{P}_{t}}{Q^{t}} = -\ln\left(\frac{1}{t} \sum_{i\in [t]} e^{-\lossFunc{\omega_{i}}{Q}{P}}\right)
    \]
    which implies $\lossRVfunc{Q^{t}}{\bar{P}_{t}} = -\ln\left(\frac{1}{t} \sum_{i \in [t]} e^{-\lossRVfunc{Q}{P}^{(i)}}\right)$.

\end{proof}

\showstored{mov:mult_tight}

\begin{proof}\Arxiv{[Proof of Lemma \ref{lem:mult_tight}]}
    We first notice that,
    \[
        \bar{F}_{e^{\lowBnd_{i}}}\left(e^{x}\right) = \bar{F}_{\lowBnd_{i}}\left(x\right) \le \bar{F}_{\upBnd_{i}}\left(x-\alpha\right) + \beta_{i} = \bar{F}_{\upBnd_{i}+\alpha}\left(x\right) + \beta_{i} = \bar{F}_{e^{\upBnd_{i} + \alpha}}\left(e^{x} \right) + \beta_{i}.
    \]
    Using this bound we get,
        \begin{align*}
        \bar{F}_{\ln\left(e^{\lowBnd_{1}}+e^{\lowBnd_{2}} \right)}(x) & = \bar{F}_{e^{\lowBnd_{1}}+e^{\lowBnd_{2}}}\left(e^{x}\right)
        \\ & = \int_{-\infty}^{\infty} f_{e^{\lowBnd_{1}}}(y) \bar{F}_{e^{\lowBnd_{2}}}\left(e^{x}-y\right) dy
        \\ & \le \int_{-\infty}^{\infty} f_{e^{\lowBnd_{1}}}(y) \left(\bar{F}_{e^{\upBnd_{2}+\alpha}}\left(e^{x}-y\right) + \beta_{2} \right) dy
        \\ & = \bar{F}_{e^{\lowBnd_{1}}+e^{\upBnd_{2}+\alpha}}\left(e^{x}\right) + \beta_{2}
        \\ & = \bar{F}_{e^{\lowBnd_{1}-\alpha}+e^{\upBnd_{2}}}\left(e^{x-\alpha}\right) + \beta_{2}
        \\ & = \beta_{2} + \int_{-\infty}^{\infty} f_{e^{\upBnd_{2}}}(y) \bar{F}_{e^{\lowBnd_{1}-\alpha}}\left(e^{x-\alpha}-y\right) dy
        \\ & \le \beta_{2} + \int_{-\infty}^{\infty} f_{e^{\upBnd_{2}}}(y) \left(\bar{F}_{e^{\upBnd_{1}}}\left(e^{x-\alpha}-y\right) + \beta_{1} \right) dy
        \\ & = \bar{F}_{e^{\upBnd_{1}}+e^{\upBnd_{2}}}\left(e^{x-\alpha}\right) + (\beta_{1} + \beta_{2}) 
        \\ & = \bar{F}_{\ln\left(e^{\upBnd_{1}}+e^{\upBnd_{2}} \right)}\left(x-\alpha\right) + (\beta_{1} + \beta_{2}),
    \end{align*}
    and from Claim \ref{clm:stochDomComp},
    \[
        \bar{F}_{\ln\left(e^{\lowBnd_{1}}+e^{\lowBnd_{2}} \right)}(\infty) = \bar{F}_{e^{\lowBnd_{1}} + e^{\lowBnd_{2}}}(\infty) \le \bar{F}_{e^{\upBnd_{1}} + e^{\upBnd_{2}}}(\infty) + \beta_{1} + \beta_{2} = \bar{F}_{\ln\left(e^{\upBnd_{1}}+e^{\upBnd_{2}} \right)}(\infty) + \beta_{1} + \beta_{2}.
    \]
\end{proof}

\begin{proof}[Proof of Theorem \ref{thm:num_acc_RA}]
    We start by pointing out two simple facts:
    \begin{enumerate}
        \item The number of times Algorithm \ref{alg:conv} is called with input $t$ equals the number of pairwise convolutions performed by exponentiation by squaring in Algorithm \ref{alg:self_conv}, namely $\lfloor \log_{2}(t) \rfloor + \operatorname{popcount}(t) - 1 \le 2 \lceil \log_{2}(t) \rceil$, where $\operatorname{popcount}(t)$ is the number of nonzero bits in the binary representation of $t$.
        \item If both input distributions to Algorithm \ref{alg:conv} are defined over the grids $\left[a \cdot e^{\alpha \cdot i}\right]_{i\in[n+1]}$, $\left[b \cdot e^{\alpha \cdot i}\right]_{i\in[n+1]}$, then the output distribution is defined over the grid $\left[(a+b) \cdot e^{\alpha \cdot i}\right]_{i\in[n+1]}$. This is because by definition, the output grid is supported over the range $((a+b) \cdot e^{\alpha}, (a+b) \cdot e^{(n+1) \alpha})$, and it is spaced with constant ratio $e^{\alpha}$, so its size is $\log_{e^{\alpha}}\left(\frac{(a+b) \cdot e^{(n+1) \alpha}}{(a+b) \cdot e^{\alpha}}\right) = \ln\left(e^{\alpha n}\right)/\alpha = n$.
    \end{enumerate}

    We state the analysis first in terms of the remove direction, then point out the slight changes for the add direction.
    
    \textbf{Validity:} In the remove direction the algorithm computes the exact dual $\dual{\rem{\lossRV}}$ of the input PLD realization; hence, from Theorem \ref{thm:PLDrandAlloc}, if all discretization and convolution steps were lossless, then $\rem{\lossRV}_{t} = \rem{\psi}_{t}(\rem{\lossRV})$ and $\add{\lossRV}_{t} = \add{\psi}_{t}(\add{\lossRV})$. From Claim \ref{clm:stochDomImpDom} it suffices to implement all re-discretization steps in a stochastically dominating manner to complete the proof.
    The first step consists of the construction of a random variable dominating the input PLD realization (and, in the remove direction, one dominating its dual), and at each step of the convolution, the output random variable dominates the exact convolution of the two input random variables. Since the exponent and logarithm are monotonically increasing functions, they preserve domination. The proof of the add direction is identical, except that we compute dominated random variables of each step, since the last part is applying the monotonically decreasing transformation $-\ln(x)$.

    \textbf{Tightness:} We note that there are two points in the process where the domination is not tight. The first is in the construction of the random variables dominating the input PLD realization and its dual, and the second is the re-discretization at each convolution step. From the first fact, there are $\le 2 \lceil \log_{2}(t) \rceil$ convolution steps which, together with the initial discretization, are the only stages where accuracy degrades. From Lemma \ref{lem:mult_tight}, the effect of each discretization step does not increase with subsequent convolutions, so from Claim \ref{clm:stochDomImpDom} the overall discretization error is their sum. The effect of the tail truncation does grow with the number of convolutions, but it takes place only in the initial discretization. From the definition of $\alpha', \beta'$, the overall relative error is $(\alpha, \beta)$.
    
    \textbf{Computational complexity}: Since we directly compute the convolution, each one requires $O(n\cdot m)$ steps where $n, m$ are the number of bins in the two discrete random variables. From the second fact, the grid size remains identical to the initial size $n_{0}$. Combining this with the first fact implies the computational complexity of this algorithm is $O\left(n_{0}^{2} \cdot \log_{2}(t) \right)$. Using the fact that the bins were evenly spaced in the PLD between the $\beta'$ and $1-\beta'$ quantiles with width $\alpha'$ and the definition of $\alpha', \beta'$ we get,
    \[
        n_{0} = \frac{\mathtt{IQR}_{\beta'}}{\alpha'} = \frac{\mathtt{IQR}_{\beta/t}}{\alpha} \cdot \left(2 \left\lceil \log_{2}(t) \right \rceil + 1 \right),
    \]
    which completes the proof.
\end{proof}

The next claim shows how two bounds can be combined to form a tighter one. A fact that can be used to provide tighter bounds by combining two convolution techniques, as discussed in the first part of Section \ref{sec:numRes}.
\begin{claim}\label{clm:bndComb}
    Given three random variables $\lowBnd_{1}, \lowBnd_{2}, X$, if $\lowBnd_{1} \preceq X$ and $\lowBnd_{2} \preceq X$ then $\lowBnd_{1} \preceq \lowBnd \preceq X$ and $\lowBnd_{2} \preceq \lowBnd \preceq X$, where $\lowBnd$ is defined via its CCDF, $\forall l \in [-\infty, \infty]; \bar{F}_{\lowBnd}(l) \coloneqq \max\{\bar{F}_{\lowBnd_{1}}(l), \bar{F}_{\lowBnd_{2}}(l)\}$. Similarly, if $X \preceq \upBnd_{1}$ and $X \preceq \upBnd_{2}$ then $X \preceq \upBnd \preceq \upBnd_{1}$ and $X \preceq \upBnd \preceq \upBnd_{2}$, where $\forall l \in [-\infty, \infty]; \bar{F}_{\upBnd}(l) \coloneqq \min\{\bar{F}_{\upBnd_{1}}(l), \bar{F}_{\upBnd_{2}}(l)\}$.
\end{claim}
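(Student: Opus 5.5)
The plan is to work entirely with CCDFs, since both stochastic domination and the new variables $\lowBnd, \upBnd$ are defined through them. The first step is to check that $\lowBnd$ and $\upBnd$ are well-defined random variables on $[-\infty,\infty]$. Concretely, I will verify that $l \mapsto \max\{\bar{F}_{\lowBnd_{1}}(l), \bar{F}_{\lowBnd_{2}}(l)\}$ is a legitimate CCDF: it should be non-increasing, right-continuous, and take the correct boundary values.

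Monotonicity is immediate, because the pointwise maximum or minimum of two non-increasing functions is non-increasing. Right-continuity also transfers: the maximum or minimum of two right-continuous functions is right-continuous, since $\max$ and $\min$ are continuous maps $\reals^2\to\reals$. For the boundary values, at $l=\infty$ both CCDFs vanish, so the max and the min vanish as well. At the left endpoint the values lie in $[0,1]$, which is all we need when atoms at $\pm\infty$ are allowed. With these checks, $\bar{F}_{\lowBnd}$ and $\bar{F}_{\upBnd}$ each determine a unique law on $[-\infty,\infty]$, for example via the quantile construction.

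The dominations then follow pointwise. The claim $\lowBnd_{i} \preceq \lowBnd$ holds because $\bar{F}_{\lowBnd_{i}}(l) \le \max\{\bar{F}_{\lowBnd_{1}}(l),\bar{F}_{\lowBnd_{2}}(l)\} = \bar{F}_{\lowBnd}(l)$ for every $l$. The claim $\lowBnd \preceq X$ holds because both $\bar{F}_{\lowBnd_{1}}(l)$ and $\bar{F}_{\lowBnd_{2}}(l)$ are at most $\bar{F}_{X}(l)$, so their maximum is too. The second statement is the mirror image with minima: $\bar{F}_{\upBnd}(l)\le \bar{F}_{\upBnd_{i}}(l)$ gives $\upBnd \preceq \upBnd_{i}$, and $\bar{F}_{X}(l)\le \min_i \bar{F}_{\upBnd_{i}}(l)$ gives $X\preceq \upBnd$.

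The only step needing any care is the well-definedness check, in particular right-continuity and the treatment of atoms at $\pm\infty$. The paper's convention for $\bar{F}$ on the extended line must be respected there, but no real obstacle is expected. The remark that this yields a tighter bound follows because $\upBnd$ is dominated by both of the original upper bounds.
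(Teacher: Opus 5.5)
Your proposal is correct. The paper states this claim without a written proof, treating it as immediate from the definition of stochastic domination, and your pointwise CCDF comparison ($\bar{F}_{\lowBnd_{i}} \le \max\{\bar{F}_{\lowBnd_{1}}, \bar{F}_{\lowBnd_{2}}\} \le \bar{F}_{X}$, and symmetrically with minima) is exactly that argument; your check that the pointwise max/min is a genuine CCDF on $[-\infty,\infty]$ fills in a detail the paper leaves implicit.
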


\newpage
\section{Full Implementation Details}\label{apd:impDtls}
In this section we provide detailed description of the algorithm implementation. We represent discrete random variables $X$ as a pair $(\bar{l}, f_{X})$, where $\bar{l}$ is an array of increasing values and $f_{X}$ is their corresponding probabilities. We use $0$-based indexing notation and denote the last cell as $l[-1]$. We note that if $X$ is a PLD realization $l[0] = -\infty$, $l[-1] = \infty$, and $f_{X}[0] = 0$. In the context of $\exp$-PLDs we have $e^{l[0]} = e^{-\infty} = 0$ and $e^{l[-1]} = e^{\infty} = \infty$. This affects the convolution since for any scalar $x\pm\infty = \pm\infty$ but $x+0=x$. Throughout we use notation such as $\exp(\lossRV)$ and $\ln(\lossRV)$ to denote the operation over the loss values that keeps the PDF unchanged, and in the case of monotonically decreasing operation\textemdash{}reverses the order of the losses and PDF.

We start by describing the main functions, with remove (\ref{alg:main_alg_rem}) and add (\ref{alg:main_alg_add}) variants as separate algorithms, followed by their building blocks. Our implementation uses the exact number of pairwise convolutions in exponentiation by squaring, $\lfloor \log_{2}(t)\rfloor + \operatorname{popcount}(t)-1$, where $\operatorname{popcount}(t)$ is the number of nonzero bits in the binary representation of $t$. In the pseudocode and analysis we use the simpler upper bound $2 \left\lceil \log_{2}(t) \right\rceil$.

~

\begin{figure}[htbp]
    \begin{minipage}{0.48\textwidth}
        \begin{algorithm}[H]
            \caption{Random allocation numerical accounting (remove) \textbf{rand-alloc-rem($\rem{\lossRV}, t, \alpha, \beta$)}}\label{alg:main_alg_rem}
            \begin{algorithmic}
               \REQUIRE $\rem{\lossRV}$-PLD realization,~ $t$-number of allocations,~ $(\alpha, \beta)$-domination tightness parameters
                \ENSURE$\rem{L}_{t}$-PLD realization dominating $\rem{\psi}_{t}\left(\rem{\lossRV} \right)$
                
                \STATE $\alpha' \gets \frac{\alpha}{2 \left \lceil \log_{2}(t) \right\rceil + 1}$, $\beta' \gets \frac{\beta}{t}$
        
                \STATE $L \gets \text{disc-dist}(\rem{\lossRV}, \alpha', \beta')$

                \IF{$t=1$}
                    \RETURN $\rem{L}_{t} = L$
                \ENDIF
        
                \STATE $D \gets \text{PLD-dual}(\rem{\lossRV})$
                            
                \STATE $D' \gets \text{disc-dist}(-D, \alpha', \beta')$

                \STATE $eL_{1} \gets \exp(L)$, $eD_{1} \gets \exp(D')$
                            
                \STATE $eD_{t-1} \gets \text{self-conv}\left(eD_{1}, t-1, \text{upper}\right)$
                
                \STATE $eL_{t} \gets \text{conv}\left(eD_{t-1}, eL_{1}, \text{upper} \right)$
                                 
                \RETURN $\rem{L}_{t} = \ln(eL_{t}/t)$
            \end{algorithmic}
        \end{algorithm}
    \end{minipage}
    \hfill
    \begin{minipage}{0.48\textwidth}
        \begin{algorithm}[H]
            \caption{Random allocation numerical accounting (add) \textbf{rand-alloc-add($\add{\lossRV}, t, \alpha, \beta$)}}\label{alg:main_alg_add}
            \begin{algorithmic}
            
               \REQUIRE$\add{\lossRV}$-PLD realization,~ $t$-number of allocations,~ $(\alpha, \beta)$-domination tightness parameters
                \ENSURE$\add{L}_{t}$-PLD realization dominating $\add{\psi}_{t}\left(\add{\lossRV} \right)$
            
                \STATE $\alpha' \gets \frac{\alpha}{2 \left \lceil \log_{2}(t) \right\rceil + 1}$, $\beta' \gets \frac{\beta}{t}$

                \STATE $L \gets \text{disc-dist}(\add{\lossRV}, \alpha', \beta')$

                \STATE $eL_{1} \gets \exp(-L)$

                \STATE $eL_{t} \gets \text{self-conv}\left(eL_{1}, t, \text{lower}\right)$
                                
                \RETURN $\add{L}_{t} = -\ln(eL_{t}/t)$
            \end{algorithmic}
        \end{algorithm}
    \end{minipage}
\end{figure}

\begin{remark}
    A lower-bound variant of the algorithm produces tight numerical lower bounds on the PLD, obtained by rounding down instead of up in disc-dist and reversing the direction of conv and self-conv, and all results of Theorem \ref{thm:num_acc_RA} extend to this direction. Since the dual is extracted from the input realization before any discretization (rather than from its discretization), this direction is well defined even when down-rounding the loss would otherwise yield a random variable that is not a PLD realization and has no dual.
\end{remark}

The main component of these two algorithms is convolution, which is computed in Algorithm \ref{alg:self_conv} recursively using exponentiation by squaring. Given a number of steps $t \in \naturals$ we decompose it into a sum of powers of $2$, so we can compute the $t$-times self convolution as a convolution of the $2^{i}$-times self convolutions using Algorithm \ref{alg:conv}. For example, for $t = 13$ we can compute the $2, 4$ and $8$ convolutions, then convolve the random variable with its $4$th and $8$th convolution, for a total of $5$ convolutions.

Each convolution is computed in Algorithm \ref{alg:conv} by direct multiplication of the two input random variables, and discretized into a new grid computed by Algorithm \ref{alg:range_renorm}. It assumes both input random variables are defined over the same number of points, and that if $l[0] = -\infty$ and $l[-1] = \infty$ then $f_{X}[0] \cdot f_{X'}[-1] = f_{X}[-1] \cdot f_{X'}[0] = 0$ since $\infty - \infty$ is not defined. Since we convolve $\exp$-PLDs whose minimal value corresponds to $0$, this is never a problem. First it computes the output grid using Algorithm \ref{alg:range_renorm}, then it computes the PMF of $f_{\text{conv}}$ as the sum of $f_{X}[j] \cdot f_{X'}[k]$ over all pairs $j, k$ such that $\bar{l}[j] + \bar{l}'[k]$ belongs to the relevant bin, with the probability mass assigned to its left or right depending on the domination direction.

\begin{algorithm}[H]
    \caption{Self convolution \textbf{self-conv}($X, t$, dir)}\label{alg:self_conv}

    \begin{algorithmic}
       \REQUIRE$X$-random variable,~ $t$ number of convolutions, dir-domination direction (upper/lower)
        \ENSURE$X_{\text{acc}}$-convoluted random variable
    
        \STATE $X_{\text{base}} \gets X$
        
        \STATE $\text{init} \gets \text{False}$ 
        
        \WHILE{$t > 0$}
            \IF{$t$ is odd}
                \IF{$\text{init}$}
                    \STATE $X_{\text{acc}} \gets \text{conv}(X_{\text{base}}, X_{\text{acc}}, \text{dir})$
                \ELSE
                    \STATE $X_{\text{acc}} \gets X_{\text{base}}$
                    \STATE $\text{init} \gets \text{True}$ 
                \ENDIF
            \ENDIF

            \STATE $t \gets \lfloor t/2 \rfloor$
            \IF{$t > 0$}
                \STATE $X_{\text{base}} \gets \text{conv}(X_{\text{base}}, X_{\text{base}}, \text{dir})$
            \ENDIF
            \ENDWHILE
        \IF{$\text{init}$}
            \RETURN $X_{\text{acc}}$
        \ELSE
            \RETURN $X_{\text{base}}$
        \ENDIF
    \end{algorithmic}
\end{algorithm}

~

\begin{algorithm}[H]
    \caption{Convolve two random variables \textbf{conv}($X, X'$, dir)}\label{alg:conv}

    \begin{algorithmic}
       \REQUIRE$X, X'$-two random variables, dir-domination direction (upper/lower)
        \ENSURE$X_{\text{conv}}$-convoluted random variable
    
        \STATE $\bar{l}_{\text{conv}} \gets \text{range-renorm}(X, X')$
    
        \STATE $\bar{l} \gets$ values of $X$, $\bar{l}' \gets$ values of $X'$, $n \gets (\text{size of } \bar{l}) - 2$

        \IF{dir = 'upper'}
            \STATE $f_{\text{conv}}[0] \gets \underset{\bar{l}[j] + \bar{l}'[k] = \bar{l}_{\text{conv}}[0]}{\sum} f_{X}[j] \cdot f_{X'}[k]$, 
            
            \STATE $f_{\text{conv}}[1:n] \gets \left[\underset{\bar{l}[j] + \bar{l}'[k] \in (\bar{l}_{\text{conv}}[i-1], \bar{l}_{\text{conv}}[i]]}{\sum} f_{X}[j] \cdot f_{X'}[k] \right]_{i \in [n]}$
    
            \STATE $f_{\text{conv}}[n+1] \gets \underset{\bar{l}[j] + \bar{l}'[k] > \bar{l}_{\text{conv}}[n]}{\sum} f_{X}[j] \cdot f_{X'}[k]$            
        \ELSE
            \STATE $f_{\text{conv}}[0] \gets \underset{\bar{l}[j] + \bar{l}'[k] < \bar{l}_{\text{conv}}[1]}{\sum} f_{X}[j] \cdot f_{X'}[k]$, 
            
            \STATE $f_{\text{conv}}[1:n] \gets \left[\underset{\bar{l}[j] + \bar{l}'[k] \in [\bar{l}_{\text{conv}}[i], \bar{l}_{\text{conv}}[i+1])}{\sum} f_{X}[j] \cdot f_{X'}[k] \right]_{i \in [n]}$
    
            \STATE $f_{\text{conv}}[n+1] \gets \underset{\bar{l}[j] + \bar{l}'[k] = \bar{l}_{\text{conv}}[n+1]}{\sum} f_{X}[j] \cdot f_{X'}[k]$
        \ENDIF
        
        \RETURN $X_{\text{conv}} = (\bar{l}_{\text{conv}}, f_{\text{conv}})$
    \end{algorithmic}
\end{algorithm}

~

\newpage

Algorithm \ref{alg:range_renorm} is used to define the new grid for the convolution step. It assumes the two random variables are defined over the same number of points, $\bar{l}[0] = \bar{l}'[0], \bar{l}[-1] = \bar{l}'[-1]$, and the rest of the points are exponentially spaced with the same ratio, that is $\bar{l}[i]/\bar{l}[i-1] = \bar{l}'[j]/\bar{l}'[j-1] = r$ for some ratio $r > 1$ and all $i, j$ except for the two ends. Under these two assumptions, the sum of the two random variables is bounded in $[\bar{l}[1] + \bar{l}'[1], \bar{l}[-2] + \bar{l}'[-2]]$ and we have $\frac{\bar{l}[-2] + \bar{l}'[-2]}{\bar{l}[1] + \bar{l}'[1]} = \frac{\bar{l}[1] r^{n-1} + \bar{l}'[1] r^{n-1}}{\bar{l}[1] + \bar{l}'[1]} = r^{n-1}$, so we split the range into $n-1$ exponentially spaced bins with ratio $r$ (i.e., $n$ interior grid points).

\begin{algorithm}[H]
    \caption{Re-normalize the range for convolution \textbf{range-renorm}($X, X'$)}\label{alg:range_renorm}

    \begin{algorithmic}
       \REQUIRE$X, X'$-two random variables to be convolved
        \ENSURE$\bar{l}_{\text{conv}}$-the grid for the convolved random variable
    
        \STATE $\bar{l} \gets$ values of $X$, $\bar{l}' \gets$ values of $X'$, $n \gets (\text{size of } \bar{l}) - 2$ ~\COMMENT{Using the equal length assumption}

        \STATE $l_{\min} \gets \bar{l}[1] + \bar{l}'[1]$

        \STATE $r \gets \bar{l}[2]/\bar{l}[1]$ ~\COMMENT{Using the constant ratio assumption}

        \STATE $\bar{l}_{\text{conv}}[1:n] \gets \left[l_{\min} \cdot r^{i-1} \right]_{i \in [n]}$

        \STATE $\bar{l}_{\text{conv}}[0] \gets \bar{l}[0]$, $\bar{l}_{\text{conv}}[n+1] \gets \bar{l}[n+1]$  ~\COMMENT{Using the identical min and max values assumption}

        \RETURN $\bar{l}_{\text{conv}}$
    \end{algorithmic}
\end{algorithm}

~

The implementation also requires an algorithm (\ref{alg:disc_dist}) for discretizing a random variable to an evenly spaced grid.

~

\begin{algorithm}[H]
    \caption{Discretize a PLD realization \textbf{disc-dist}($X, \alpha, \beta$)}\label{alg:disc_dist}

    \begin{algorithmic}
       \REQUIRE$X$-PLD realization,~ $(\alpha, \beta)$ tightness parameters
        \ENSURE$U$-discretized PLD realization
    
        \STATE $(l_{\min}, l_{\max}) \gets \left(q_{X}(\beta), q_{X}(1-\beta)\right)$ 
        
        \STATE $n \gets \left\lceil\frac{l_{\max}-l_{\min}}{\alpha}\right\rceil+1$ 
        
        \STATE $\bar{l} \gets \{-\infty\} \cup \left[l_{\min} + (i-1) \cdot \alpha \right]_{i \in [n]} \cup \{\infty\}$
        
        \STATE $F_{U}[1:n] \gets \left[F_{X}(\bar{l}[i])\right]_{i \in [n]}$ ~\COMMENT{$F_{X}$ denotes the CDF of $X$ which may be a continuous or discrete distribution}
                
        \STATE $f_{U}[0] = 0$, $f_{U}[n+1] \gets 1 - F_{U}[n]$

        \STATE $f_{U}[1:n] \gets \left[F_{U}[i] - F_{U}[i-1]\right]_{i \in [n]}$

        \RETURN $U = (\bar{l}, f_{U})$
    \end{algorithmic}
\end{algorithm}

The computation of random allocation and subsampling in the remove direction requires extracting the dual from a PLD realization.

\begin{algorithm}[H]
    \caption{Extract the dual PLD \textbf{PLD-dual}($\lossRV$)}\label{alg:PLD_dual}

    \begin{algorithmic}
        \REQUIRE$\lossRV$-PLD realization
        \ENSURE$D$-PLD dual
        
        \STATE $\bar{l} \gets$ values of $\lossRV$, $n \gets (\text{size of } \bar{l}) - 2$

        \STATE $f_{D} \gets \left[e^{-\bar{l}[i]} \cdot f_{\lossRV}[i] \right]_{i \in [n]}$
        
        \STATE $f_{D}[n+1] \gets 0$, $f_{D}[0] \gets 1 - \sum_{i \in [n]} f_{D}[i]$
    
        \RETURN $D = -(\bar{l}, f_{D})$
    \end{algorithmic}
\end{algorithm}

\newpage
The next two algorithms (\ref{alg:PLD_subsam_rem}, \ref{alg:PLD_subsam_add}) compute the subsampling transformation of the PLD via Algorithm (\ref{alg:subsam_core}). Since the subsampling transformation $\phi_{\lambda}$ is non-linear, an exact lossless transformation results in a varying-width grid. Our implementation combines the transformation and re-discretization steps, subsampling directly into the re-discretized grid. We present the simpler lossless support transformation here for clarity. 

The expression $w \cdot L_{1} + (1-w) \cdot L_{2}$ should be interpreted as a new distribution supported on the union of the loss values of $L_{1}$ and $L_{2}$ and the probability vector is assigned by the corresponding distribution (if the same value appears in both distributions, its new probability is the weighted sum of its probabilities under the two distributions).

\begin{figure}[htbp]
    \begin{minipage}{0.46\textwidth}
        \begin{algorithm}[H]
            \caption{Subsample PLD (remove) \textbf{PLD-subsam-remove}($\lossRV, \lambda$)}\label{alg:PLD_subsam_rem}
        
            \begin{algorithmic}
                \REQUIRE$\lossRV$-PLD realization,~$\lambda$-sampling probability
                \ENSURE$\lossRV_{\lambda}$-subsampled PLD
        
                \STATE $\lossRV_{1} \gets \textbf{subsam-core}(\lossRV, \lambda)$
                \STATE $D \gets \text{PLD-dual}(\lossRV)$
                \STATE $\lossRV_{2} \gets \textbf{subsam-core}(-D, \lambda)$

                \RETURN $\lossRV_{\lambda} = \lambda \cdot \lossRV_{1} + (1-\lambda) \cdot \lossRV_{2}$
            \end{algorithmic}
        \end{algorithm}
        \begin{algorithm}[H]
            \caption{Subsample PLD (add) \textbf{PLD-subsam-add}($\lossRV, \lambda$)}\label{alg:PLD_subsam_add}
        
            \begin{algorithmic}
                \REQUIRE$\lossRV$-PLD realization,~$\lambda$-sampling probability
                \ENSURE$\lossRV_{\lambda}$-subsampled PLD
                
                \STATE $\lossRV_{1} \gets \textbf{subsam-core}\left(-\lossRV, \lambda\right)$

                \RETURN $\lossRV_{\lambda} = -\lossRV_{1}$
            \end{algorithmic}
        \end{algorithm}
    \end{minipage}
    \hfill
    \begin{minipage}{0.5\textwidth}
        \begin{algorithm}[H]
            \caption{Subsample core \textbf{subsam-core}($\lossRV, \lambda$)}\label{alg:subsam_core}
        
            \begin{algorithmic}
                \REQUIRE$\lossRV$-Random variable,~$\lambda$-sampling probability
                \ENSURE$\lossRV_{\lambda}$-subsampled random variable
        
                \STATE $\bar{l} \gets$ values of $\lossRV$, $n \gets (\text{size of } \bar{l}) - 2$

                \STATE $\bar{l}_{\lambda}[0] \gets \ln(1-\lambda)$
                \STATE $\bar{l}_{\lambda}\left[1:n\right] \gets \left[\ln\left(1 + \lambda (e^{\bar{l}[i]}-1) \right)\right]_{i \in \left[1:n\right]}$
                \STATE $\bar{l}_{\lambda}\left[n+1\right] \gets \bar{l}[n+1]$

                \STATE $f_{\lossRV_{\lambda}} \gets f_{\lossRV}$

                \RETURN $\lossRV_{\lambda} = (\bar{l}_{\lambda}, f_{\lossRV_{\lambda}})$
            \end{algorithmic}
        \end{algorithm}
    \end{minipage}
\end{figure}

\newpage
\section{Experimental Results}\label{apd:expRes}
In this section we provide several additional results.

Figure \ref{fig:delta_comparison} is an extended version of Figure \ref{fig:delta_comparison_partial}. It follows the setting used by \citet{CGHLKKMSZ24} to showcase their results. The number of steps is derived from the size of the training set and choice of batch size in their experimental results for Criteo Display Ads pCT (top) and Criteo Sponsored Search Conversion Log dataset (bottom) with batch sizes $1,024$ (left) and $8,192$ (right).

The Monte Carlo results were computed using importance sampling with $10^{6}$ samples and $95\%$ confidence. We note that the computation for the results derived by \citet{CGHLKKMSZ24} was performed in parallel on a cluster of 60 CPU machines.

\begin{figure}[ht]
    \centering
    \includegraphics[width=0.9\linewidth]{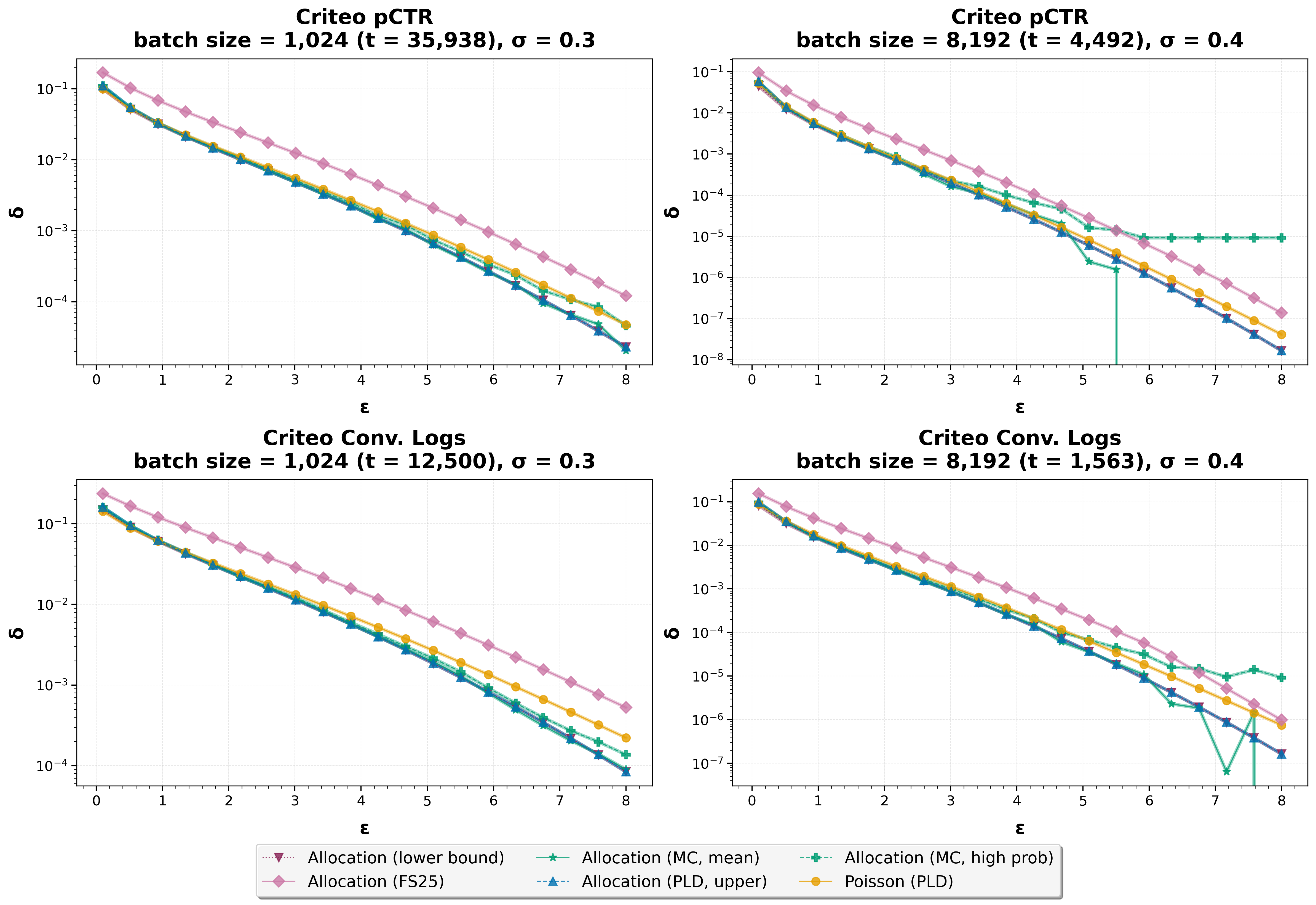}
    \caption{Comparison of the privacy profile of the Poisson scheme and various bounds for the random allocation scheme; the combined methods in \citet{FS25}, the high probability and the average estimations using Monte Carlo simulation and the lower bound by \citet{CGHLKKMSZ24}, and our numerical method, following the setting in \citet{CGHLKKMSZ24}.}
    \label{fig:delta_comparison}
\end{figure}

\newpage

\begin{figure}[htbp]
    \begin{minipage}{0.5\textwidth}
        \para{Beyond $k=1$.} The next two figures depict the effect of changing $k$, the number of allocations. Figure \ref{fig:epsilon_vs_sigma_by_k} extends the results of the main figure (Figure \ref{fig:epsilon_vs_sigma_by_t}) to the $k>1$ regime, and Figure \ref{fig:epsilon_vs_k} depicts the effect of increasing $k$ on $\varepsilon$. We note that the ``zigzag'' present in the FS25 line in the latter figure results from cases where the remainder $t \mod k$ is comparable to the partition size $t/k$. Our PLD results rely on our more careful reduction in Lemma \ref{lem:multAlloc}.
        
        The superiority of the random allocation privacy guarantees over the Poisson scheme holds in nearly all parameter regimes, as depicted in the next two plots. Figure \ref{fig:param_scan_eps} depicts the privacy parameter $\varepsilon$ for various $t$, $\sigma$, and $\delta$, after a single random allocation run or $100$ compositions, and Figure \ref{fig:param_scan_gap} depicts the gap between the upper bounds computed for random allocation and Poisson sampling. We note that there is no case where the upper bound on the privacy parameter $\varepsilon$ of Poisson is smaller than the lower bound on random allocation. The gray cells correspond to settings where the gap between the methods is smaller than the gap between the bounds. These cells are more prevalent for the $100$ epoch plots as the gap between our bounds is higher in this setting due to computational constraints.
        
    \end{minipage}
    \hfill
    \begin{minipage}{0.48\textwidth}
        \centering
        \includegraphics[width=1\linewidth]{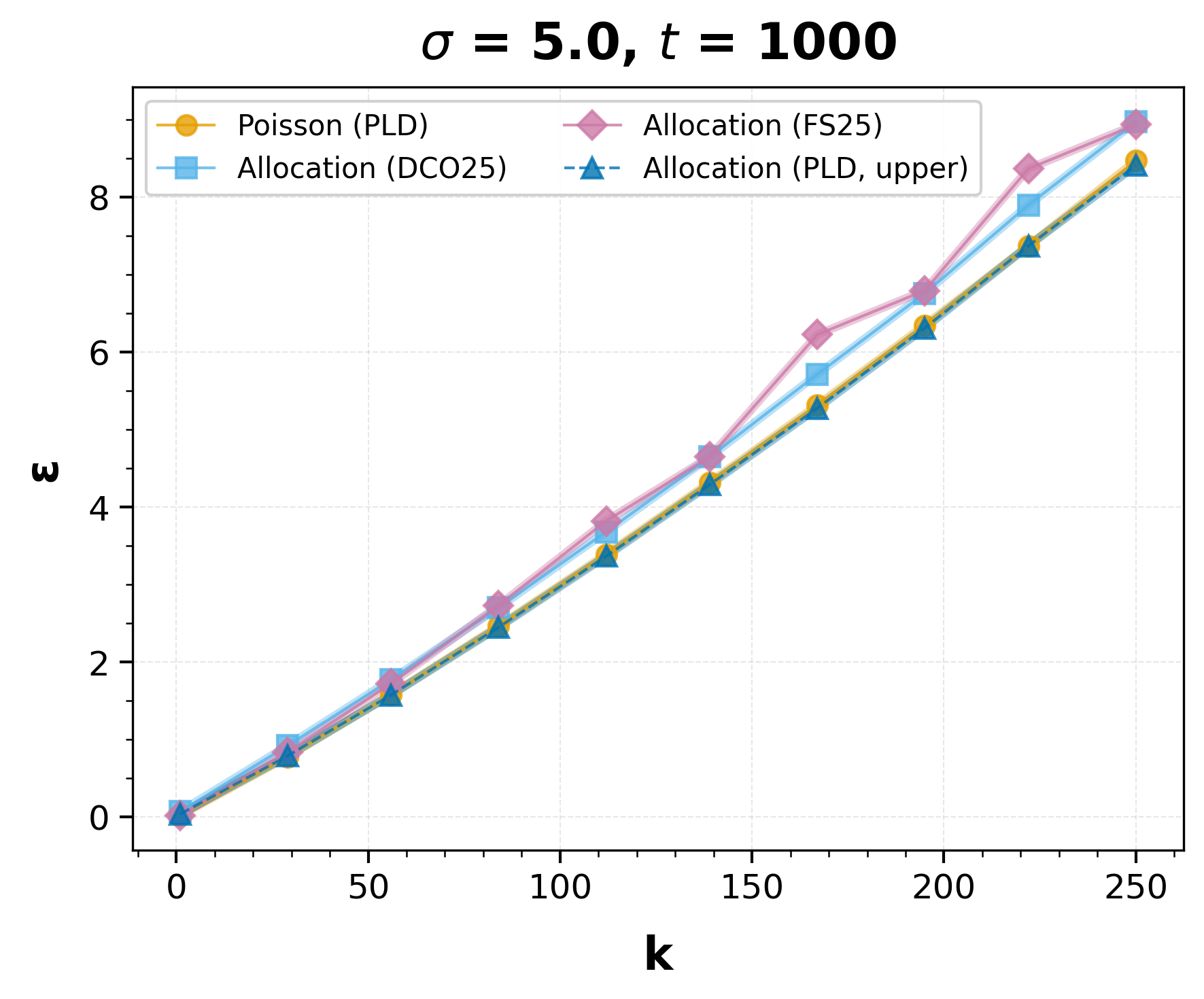}
        \caption{Upper and lower bounds on privacy parameter $\eps$ as a function of the number of allocations $k$. We compare our upper bound to upper bounds on random allocation \citep{FS25, DCO25}, and to the Poisson scheme with $\lambda = 1/t$.}
        \label{fig:epsilon_vs_k}
    \end{minipage}
\end{figure}

\begin{figure}[H]
    \centering
    \begin{subfigure}[t]{0.48\textwidth}
        \centering
        \includegraphics[width=\linewidth]{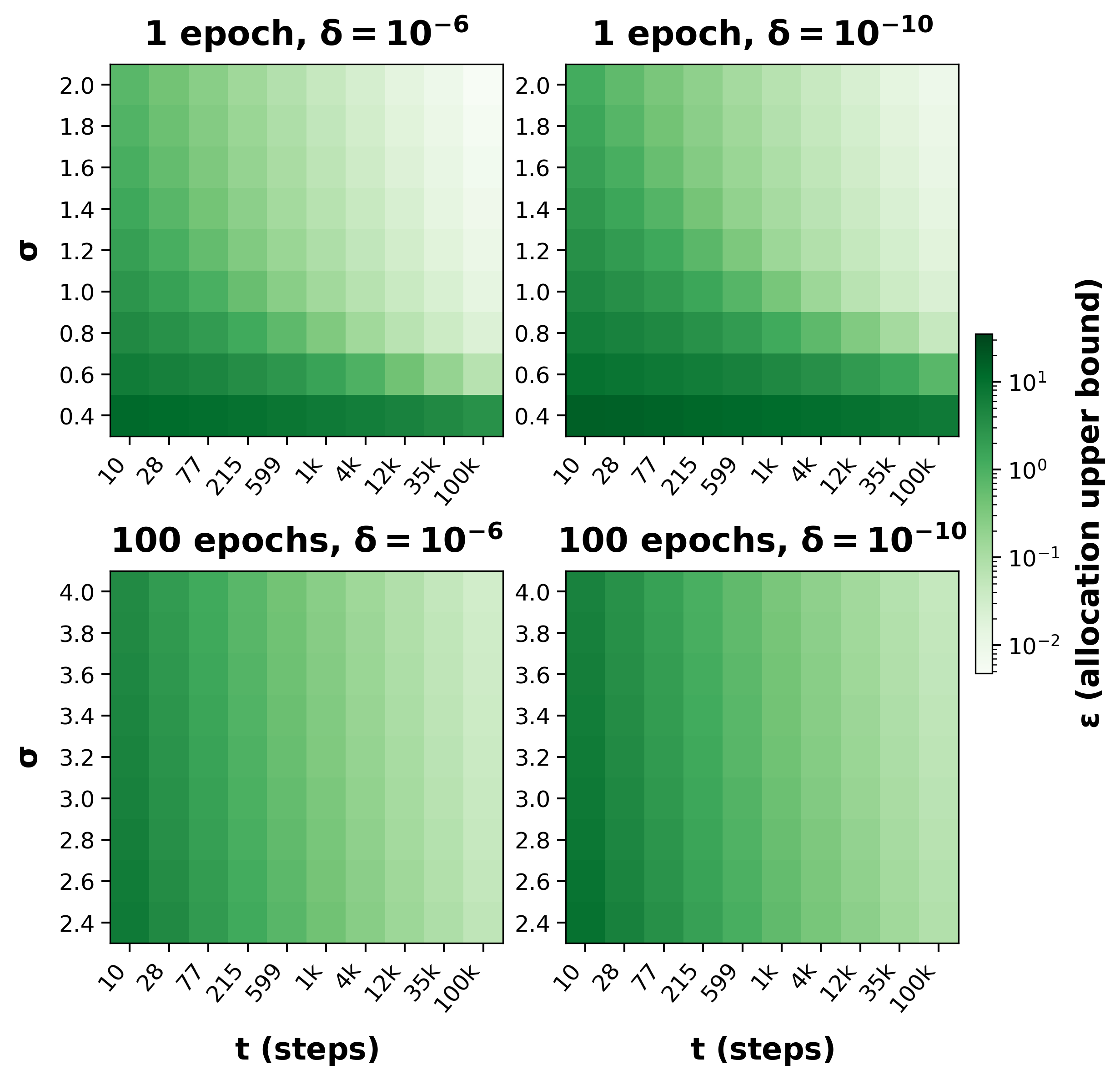}
        \caption{Upper bound on the privacy parameter $\varepsilon$ values of random allocation.
        ~
        ~}
        \label{fig:param_scan_eps}
    \end{subfigure}
    \hfill
    \begin{subfigure}[t]{0.48\textwidth}
        \centering
        \includegraphics[width=\linewidth]{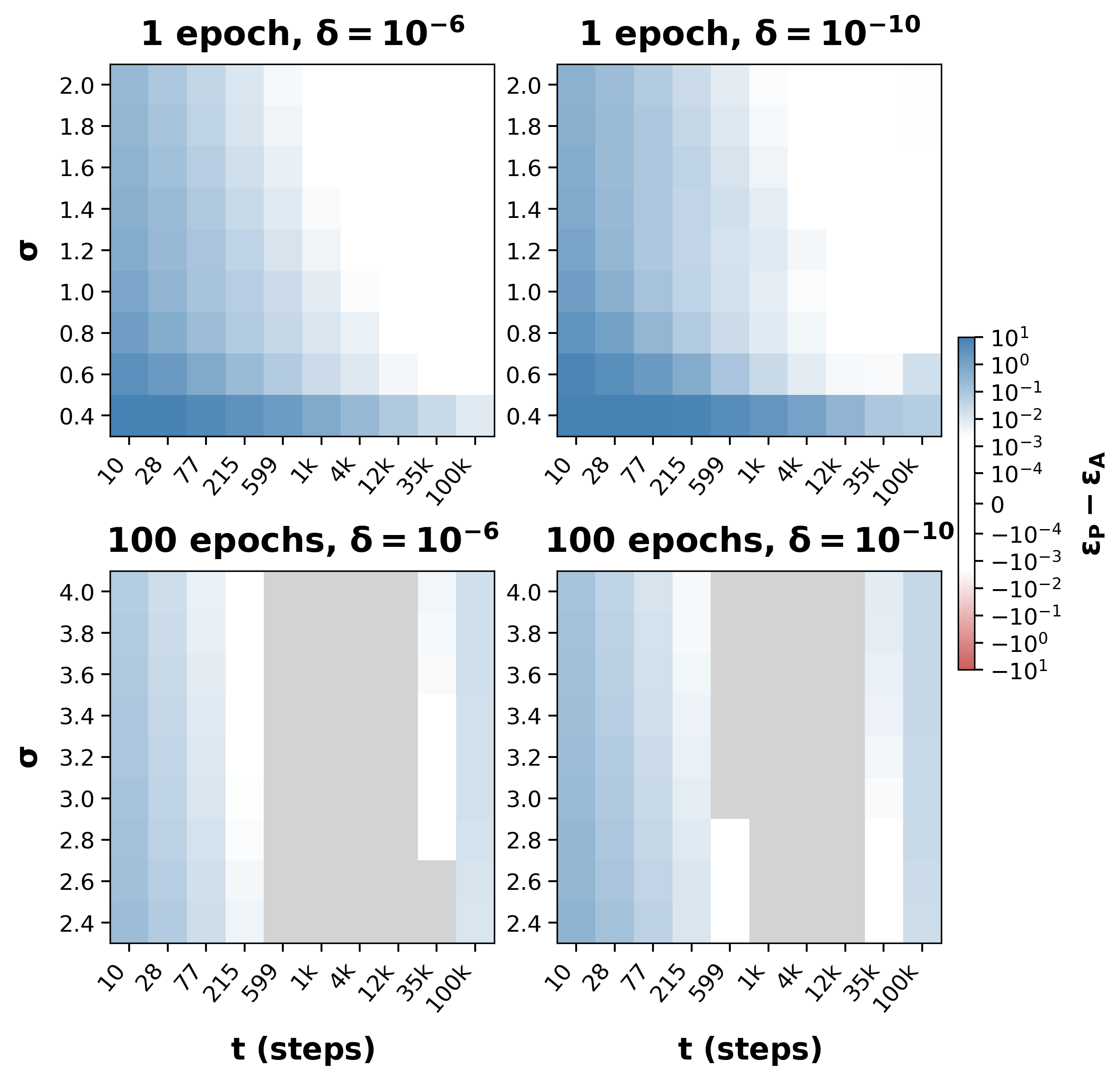}
        \caption{Gap between the upper bounds on the privacy parameter $\varepsilon$ induced by random-allocation and Poisson schemes. Gray represents values where upper bound on $\varepsilon$ for Poisson is between the upper and lower bounds for allocation.}
        \label{fig:param_scan_gap}
    \end{subfigure}
    
    \caption{Privacy parameter $\varepsilon$ for various noise scales $\sigma$, number of allocation steps $t$, values of $\delta$, and repeated composition (epochs).}
    \label{fig:param_scan}
\end{figure}

\begin{figure}[H]
    \centering
    \includegraphics[width=0.85\linewidth]{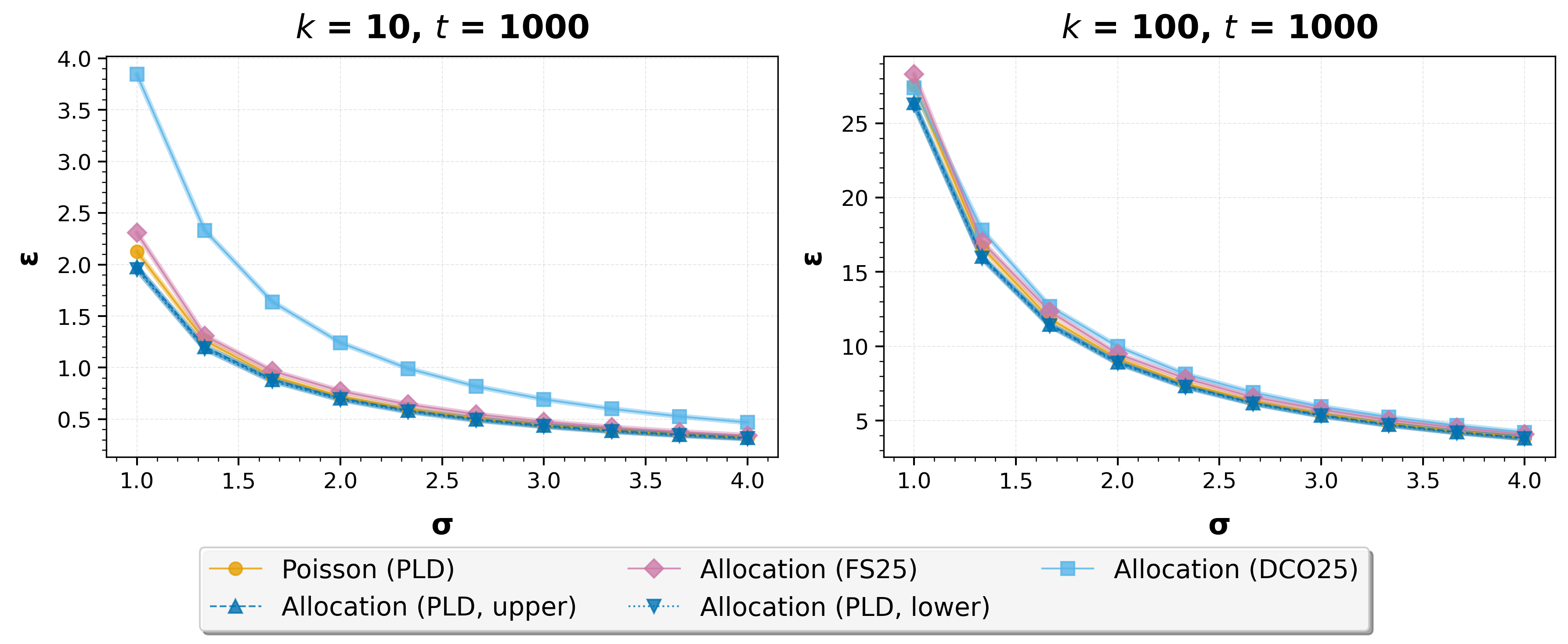}
    \caption{Upper and lower bounds on privacy parameter $\eps$ as a function of the noise parameter $\sigma$. We compare our upper and lower bounds (which are nearly identical) to upper bounds on random allocation \citep{FS25, DCO25}, and to the Poisson scheme with $\lambda = 1/t$.}
    \label{fig:epsilon_vs_sigma_by_k}
\end{figure}

\ifconferencemode
\para{Privacy-utility trade-off.} Figure \ref{fig:utility_comparison} presents the improved tradeoff under the same setting considered in \citep{FS25} (Appendix H) as discussed in Section \ref{sec:numRes}.
\showstored{mov:utility_comparison}
\fi
\newpage

\begin{figure}[htbp]
    \begin{minipage}{0.5\textwidth}
        \para{No domination between Poisson and allocation.}
        While most numerical examples in this work show superior privacy guarantees for random allocation relative to Poisson sampling, the Poisson scheme is not dominated by random allocation, nor does it dominate it for other parameter regimes. This was first proven theoretically by \citet{CGHLKKMSZ24} for the limit of $\eps \to 0$ and $\eps \to \infty$. Figure \ref{fig:no-domination} provides a clear demonstration of this fact.
        
        We note that this near-total domination is unique to the Gaussian mechanism. Repeating the experiment with the Laplace mechanism results in more complex dynamics, as shown in Figures \ref{fig:Gauss_Laplace_by_sigma} and \ref{fig:Gauss_Laplace_by_t}. In the low privacy regime (small $\sigma$ and $t$), random allocation is significantly more private than Poisson sampling. For smaller $\varepsilon$, the roles flip and Poisson provides better privacy guarantees. As is the case with the Gaussian mechanism, the two become nearly identical as $\sigma$ and $t$ become sufficiently large.
        
    \end{minipage}
    \hfill
    \begin{minipage}{0.45\textwidth}
        \centering
        \includegraphics[width=\linewidth]{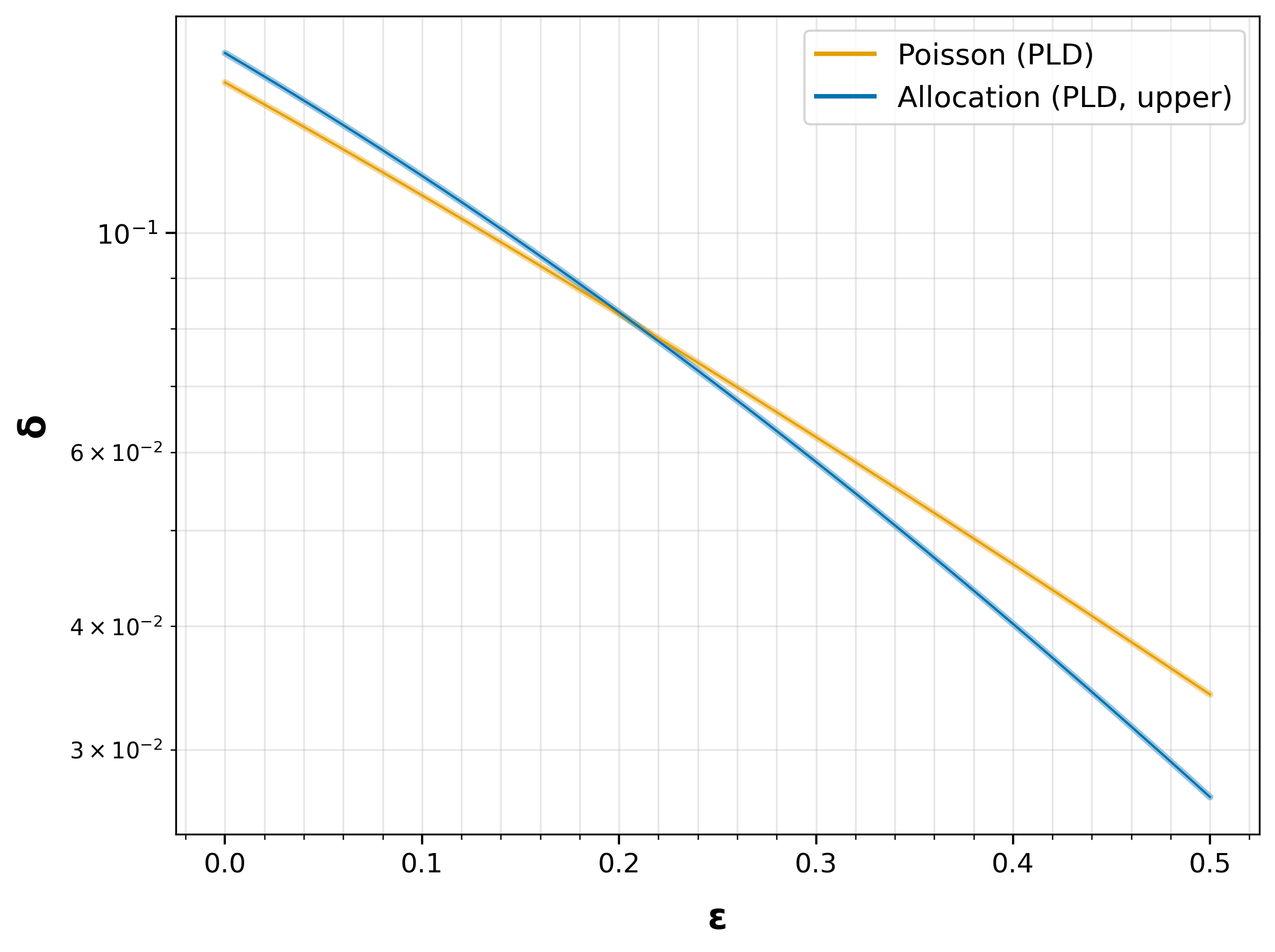}
        \caption{Privacy profile of the Poisson and random allocation schemes, clearly demonstrating they do not dominate each other.}
        \label{fig:no-domination}
    \end{minipage}
\end{figure}

\begin{figure}[H]
    \centering
    \includegraphics[width=1\linewidth]{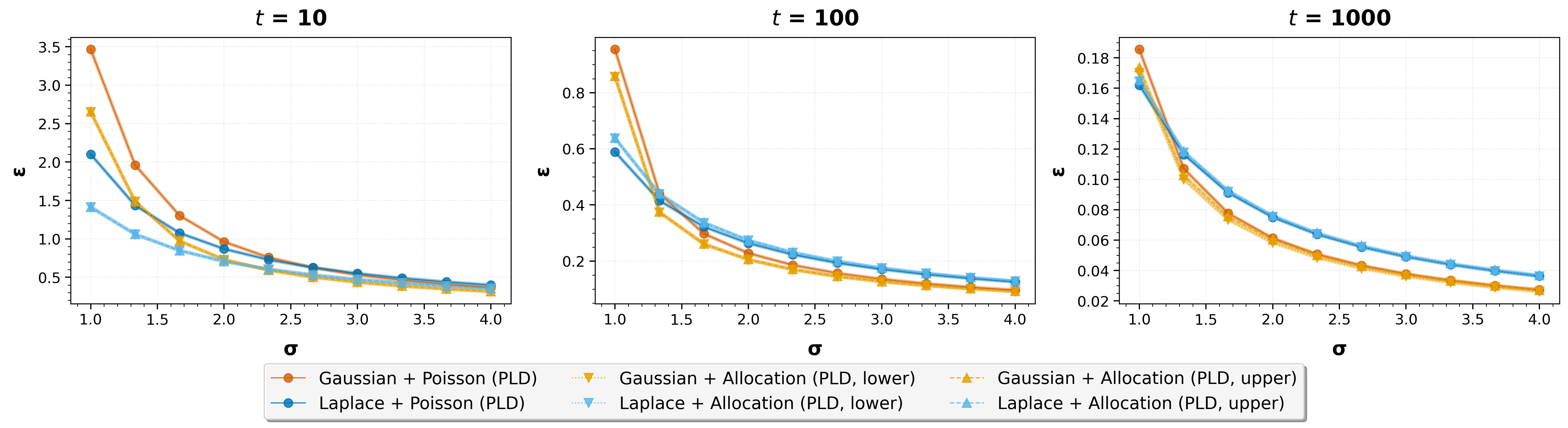}
    \caption{Comparison of the privacy parameter $\varepsilon$ for $\delta = 10^{-6}$ induced by random allocation (nearly identical upper and lower bounds) and Poisson scheme with $\lambda = 1/t$ (upper bound) as a function of the noise parameter $\sigma$ for the Gaussian and Laplace mechanisms.}
    \label{fig:Gauss_Laplace_by_sigma}
\end{figure}

\begin{figure}[H]
    \centering
    \includegraphics[width=0.65\linewidth]{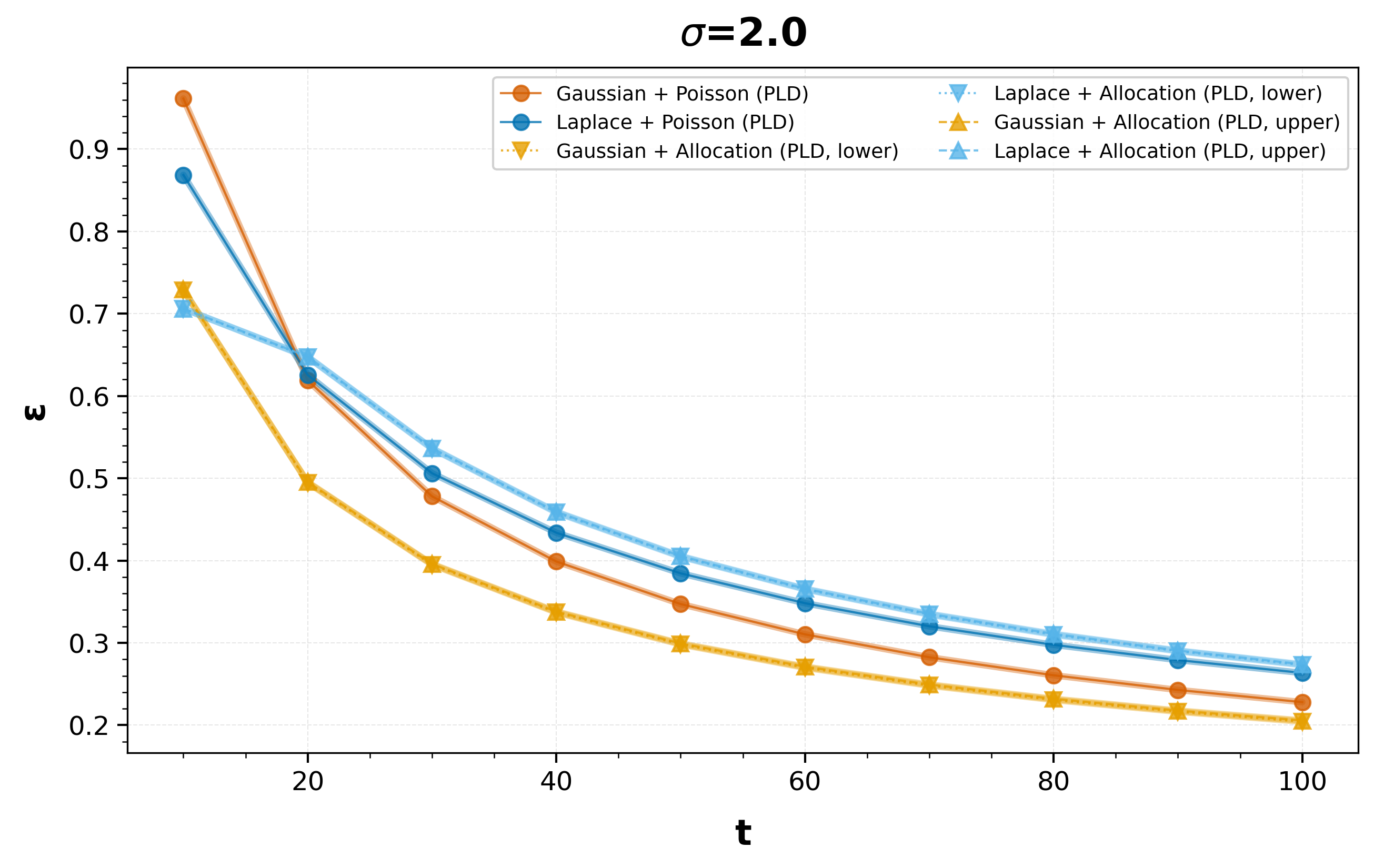}
    \caption{Comparison of the privacy parameter $\varepsilon$ for $\delta = 10^{-6}$ induced by random allocation (nearly identical upper and lower bounds) and Poisson scheme with $\lambda = 1/t$ (upper bound) as a function of the number of steps $t$ for the Gaussian and Laplace mechanisms.}
    \label{fig:Gauss_Laplace_by_t}
\end{figure}

\newpage

\showstored{mov:FFT}
\Arxiv{\para{FFT convolution.} As discussed in Section \ref{sec:numRes}, }FFT based convolution requires evenly spacing the bins in the $e^{\loss}$ space, which results in smaller bins (and thus\textemdash{}tighter bound) for $\lossRVfunc{P}{Q} \gg 1$ and larger bins (and thus\textemdash{}looser bound) for $\lossRVfunc{P}{Q} \ll -1$. Since $\rem{\delta}(\eps) \approx \prob{}{\lossRVfunc{P}{Q} > \eps}$ approximately corresponds to the right tail in the remove direction, while $\add{\delta}(\eps)$ approximately corresponds to the left tail in the add direction, this implies that FFT's results are tighter in the remove direction for a given number of bins.

Figure \ref{fig:FFT_geom_comb} portrays this phenomenon. Since the FFT-based convolution typically requires more memory while the direct multiplicative-based method results in longer runtime, there is no natural comparison between the two. We chose to use the number of bins as the comparison value, with the range of FFT bin counts at the top of each subplot and the number of multiplicative bins at the bottom. The privacy parameter $\eps$ is presented separately for the add, remove, and combined directions. In this range, the multiplicative method is tighter than the FFT in the add direction for the entire range, and only for a large number of bins in the remove direction, but the true profile of the remove direction is typically larger than add. As such, the combined method is tighter than either one in the actual privacy parameter (both).

\begin{figure}[H]
    \centering
    \includegraphics[width=1\linewidth]{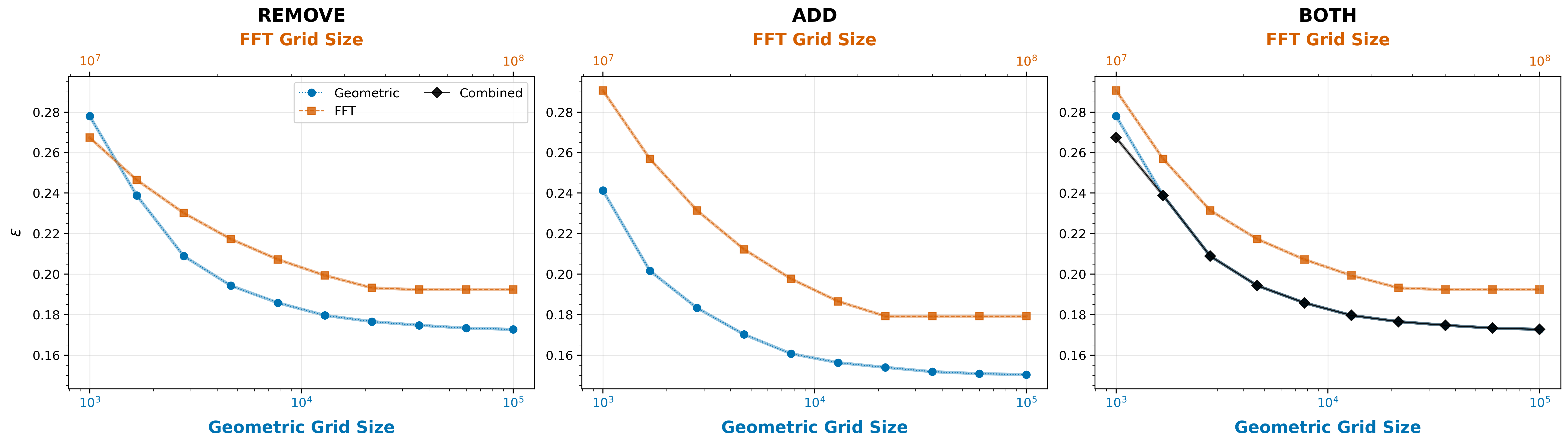}
    \caption{Comparison of $\eps$ for $\delta = 10^{-10}$ using FFT and direct multiplicative-based PLD accounting techniques for the random allocation scheme with $\sigma = 3, t = 10^{3}$, presented separately for the add, remove, and both (max of the two) directions. The x-axis represents the number of bins (bottom - multiplicative, top - FFT), both with tightness parameter $\beta = 10^{-12}$.}
    \label{fig:FFT_geom_comb}
\end{figure}

\newpage
\para{Discretization.} In Figure \ref{fig:runtime_experiment}, we compare the runtime to the gap between the computed upper and lower bounds, as determined by the discretization parameter $\alpha$. This comparison relies on the implicit assumption that the gap is indeed approximately equal to the discretization. This requires verification, as our implementation contains several rescalings of the input parameter, which is interpreted as the target final discretization. Figure \ref{fig:gap_vs_disc} demonstrates that this is indeed the case by comparing the two across various parameter regimes. Because the discretization $\alpha$ should bound the gap between the true value and the upper and lower bounds, the gap should be at most $2 \cdot \alpha$. The full configuration parameters and resulting bounds can be found in Tables \ref{tab:dominates-gap-k1} and \ref{tab:dominates-gap-kgt1}.

\begin{figure}[H]
    \centering
    \includegraphics[width=1\linewidth]{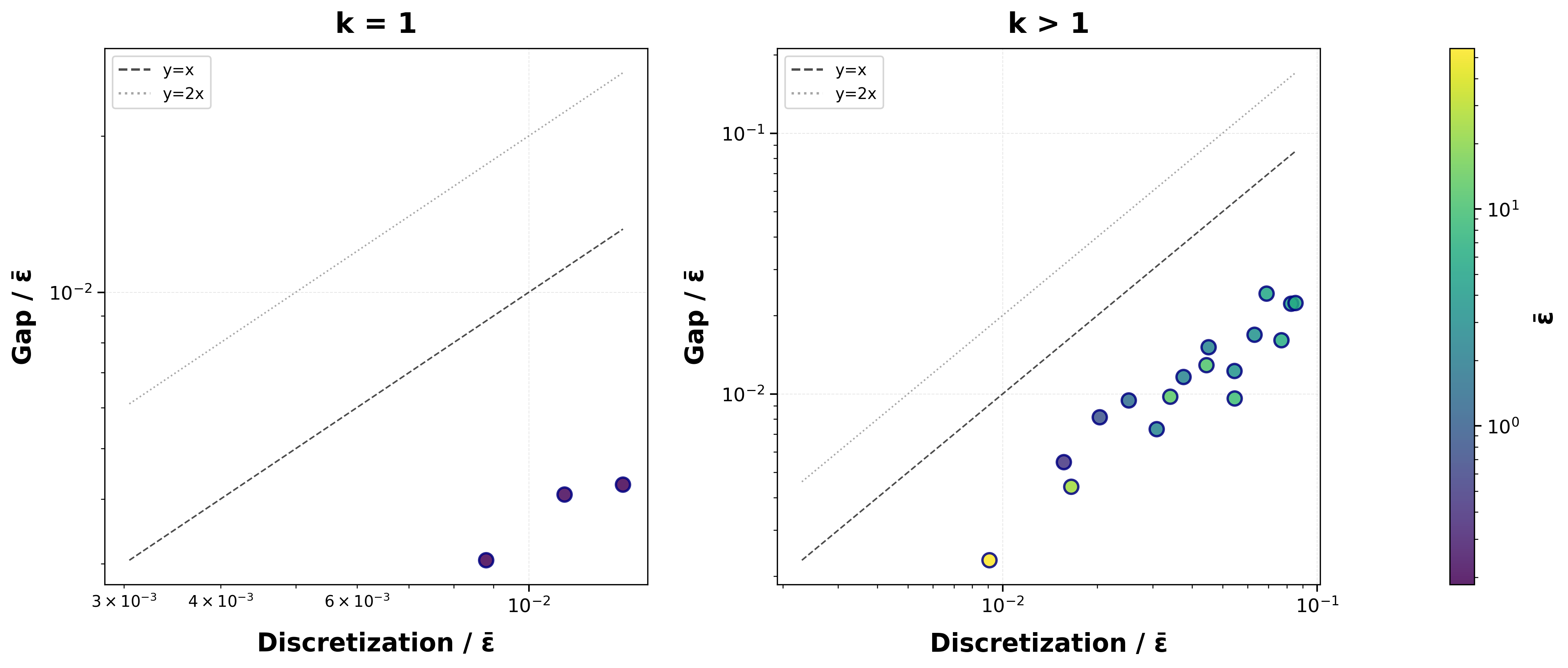}
    \caption{Gap between upper and lower bounds as a function of the discretization parameter for various parameter configurations. Values are normalized by $\varepsilon$ to align the scale of the different settings.}
    \label{fig:gap_vs_disc}
\end{figure}

\IfFileExists{./PLD_plots/dominates_gap_vs_discretization_tables}{

\begin{table}[tbp]
\centering
\small
\caption{Parameters and epsilon bounds for the $k = 1$ dominates-gap subplot.}
\label{tab:dominates-gap-k1}
\begin{tabular}{rrrrrrrrr}
\hline
Point & $\sigma$ & $t$ & $\delta$ & Discretization & Tail truncation & $\varepsilon_{\mathrm{upper}}$ & $\varepsilon_{\mathrm{lower}}$ & $\varepsilon_{\mathrm{gap}}$ \\
\hline
1 & 2.4 & 80 & $10^{-6}$ & $1.630755\times 10^{-3}$ & $10^{-8}$ & $1.85\times 10^{-1}$ & $1.85\times 10^{-1}$ & $5.64\times 10^{-4}$ \\
2 & 2 & 100 & $10^{-6}$ & $2.2798\times 10^{-3}$ & $10^{-8}$ & $2.05\times 10^{-1}$ & $2.05\times 10^{-1}$ & $8.36\times 10^{-4}$ \\
3 & 2.2 & 120 & $10^{-7}$ & $2.492507\times 10^{-3}$ & $10^{-9}$ & $1.89\times 10^{-1}$ & $1.88\times 10^{-1}$ & $8.03\times 10^{-4}$ \\
\hline
\end{tabular}
\end{table}

\begin{table}[tbp]
\centering
\small
\caption{Parameters and epsilon bounds for the $k > 1$ dominates-gap subplot.}
\label{tab:dominates-gap-kgt1}
\begin{tabular}{rrrrrrrrrr}
\hline
Point & $\sigma$ & $t$ & $k$ & $\delta$ & Discretization & Tail truncation & $\varepsilon_{\mathrm{upper}}$ & $\varepsilon_{\mathrm{lower}}$ & $\varepsilon_{\mathrm{gap}}$ \\
\hline
1 & 0.65 & 512 & 64 & $10^{-6}$ & $5\times 10^{-1}$ & $10^{-8}$ & $5.51\times 10^{1}$ & $5.5\times 10^{1}$ & $1.27\times 10^{-1}$ \\
2 & 1.8 & 120 & 2 & $10^{-6}$ & $6.97295\times 10^{-3}$ & $10^{-8}$ & $4.47\times 10^{-1}$ & $4.44\times 10^{-1}$ & $2.44\times 10^{-3}$ \\
3 & 0.7 & 384 & 32 & $10^{-6}$ & $4.07584\times 10^{-1}$ & $10^{-8}$ & $2.47\times 10^{1}$ & $2.46\times 10^{1}$ & $1.09\times 10^{-1}$ \\
4 & 1.5 & 150 & 3 & $10^{-6}$ & $1.592553\times 10^{-2}$ & $10^{-8}$ & $7.85\times 10^{-1}$ & $7.79\times 10^{-1}$ & $6.36\times 10^{-3}$ \\
5 & 1.2 & 180 & 4 & $10^{-6}$ & $3.498079\times 10^{-2}$ & $10^{-8}$ & $1.4\times 10^{0}$ & $1.38\times 10^{0}$ & $1.31\times 10^{-2}$ \\
6 & 1.1 & 190 & 6 & $10^{-6}$ & $7.49499\times 10^{-2}$ & $10^{-8}$ & $2.44\times 10^{0}$ & $2.42\times 10^{0}$ & $1.78\times 10^{-2}$ \\
7 & 0.75 & 320 & 16 & $10^{-7}$ & $4.239968\times 10^{-1}$ & $10^{-9}$ & $1.25\times 10^{1}$ & $1.24\times 10^{1}$ & $1.21\times 10^{-1}$ \\
8 & 1 & 200 & 5 & $10^{-6}$ & $9.079556\times 10^{-2}$ & $10^{-8}$ & $2.43\times 10^{0}$ & $2.4\times 10^{0}$ & $2.8\times 10^{-2}$ \\
9 & 0.75 & 320 & 16 & $10^{-6}$ & $5\times 10^{-1}$ & $10^{-8}$ & $1.13\times 10^{1}$ & $1.12\times 10^{1}$ & $1.45\times 10^{-1}$ \\
10 & 1 & 180 & 4 & $10^{-7}$ & $1.09825\times 10^{-1}$ & $10^{-9}$ & $2.45\times 10^{0}$ & $2.41\times 10^{0}$ & $3.67\times 10^{-2}$ \\
11 & 0.78 & 310 & 14 & $10^{-6}$ & $5\times 10^{-1}$ & $10^{-8}$ & $9.19\times 10^{0}$ & $9.1\times 10^{0}$ & $8.79\times 10^{-2}$ \\
12 & 0.95 & 230 & 7 & $10^{-6}$ & $1.897968\times 10^{-1}$ & $10^{-8}$ & $3.5\times 10^{0}$ & $3.46\times 10^{0}$ & $4.26\times 10^{-2}$ \\
13 & 0.9 & 250 & 6 & $10^{-6}$ & $2.064352\times 10^{-1}$ & $10^{-8}$ & $3.29\times 10^{0}$ & $3.24\times 10^{0}$ & $5.51\times 10^{-2}$ \\
14 & 0.8 & 280 & 8 & $10^{-6}$ & $3.682103\times 10^{-1}$ & $10^{-8}$ & $5.4\times 10^{0}$ & $5.27\times 10^{0}$ & $1.29\times 10^{-1}$ \\
15 & 0.82 & 305 & 11 & $10^{-6}$ & $5\times 10^{-1}$ & $10^{-8}$ & $6.55\times 10^{0}$ & $6.44\times 10^{0}$ & $1.04\times 10^{-1}$ \\
16 & 0.88 & 275 & 9 & $10^{-6}$ & $3.964506\times 10^{-1}$ & $10^{-8}$ & $4.85\times 10^{0}$ & $4.75\times 10^{0}$ & $1.06\times 10^{-1}$ \\
17 & 0.85 & 275 & 9 & $10^{-7}$ & $5\times 10^{-1}$ & $10^{-9}$ & $5.93\times 10^{0}$ & $5.8\times 10^{0}$ & $1.31\times 10^{-1}$ \\
\hline
\end{tabular}
\end{table}

}{
    \IfFileExists{./dominates_gap_vs_discretization_tables.tex}{
        
    }{
        
    }
}

\newpage
\para{General combination of sampling schemes.} The PREAMBLE setting (Fig. \ref{fig:PREAMBLE}) represents one possible way random allocation and Poisson sampling can be combined. Figure \ref{fig:double_DPSGD} compares all 4 possible combinations of two consecutive sampling schemes. This plot demonstrates the versatility of our accounting method and provides further evidence that the two sampling schemes give comparable privacy amplification. For example, Poisson $\rightarrow$ Allocation represents a $k_{2}$-out-of-$t_{2}$ random allocation scheme, where the local algorithm itself is a Poisson scheme with $t_{1}$ steps and a sampling probability of $\lambda = k_{1} / t_{1}$, using a Gaussian mechanism as its local algorithm.

\begin{figure}[H]
    \centering
    \includegraphics[width=1\linewidth]{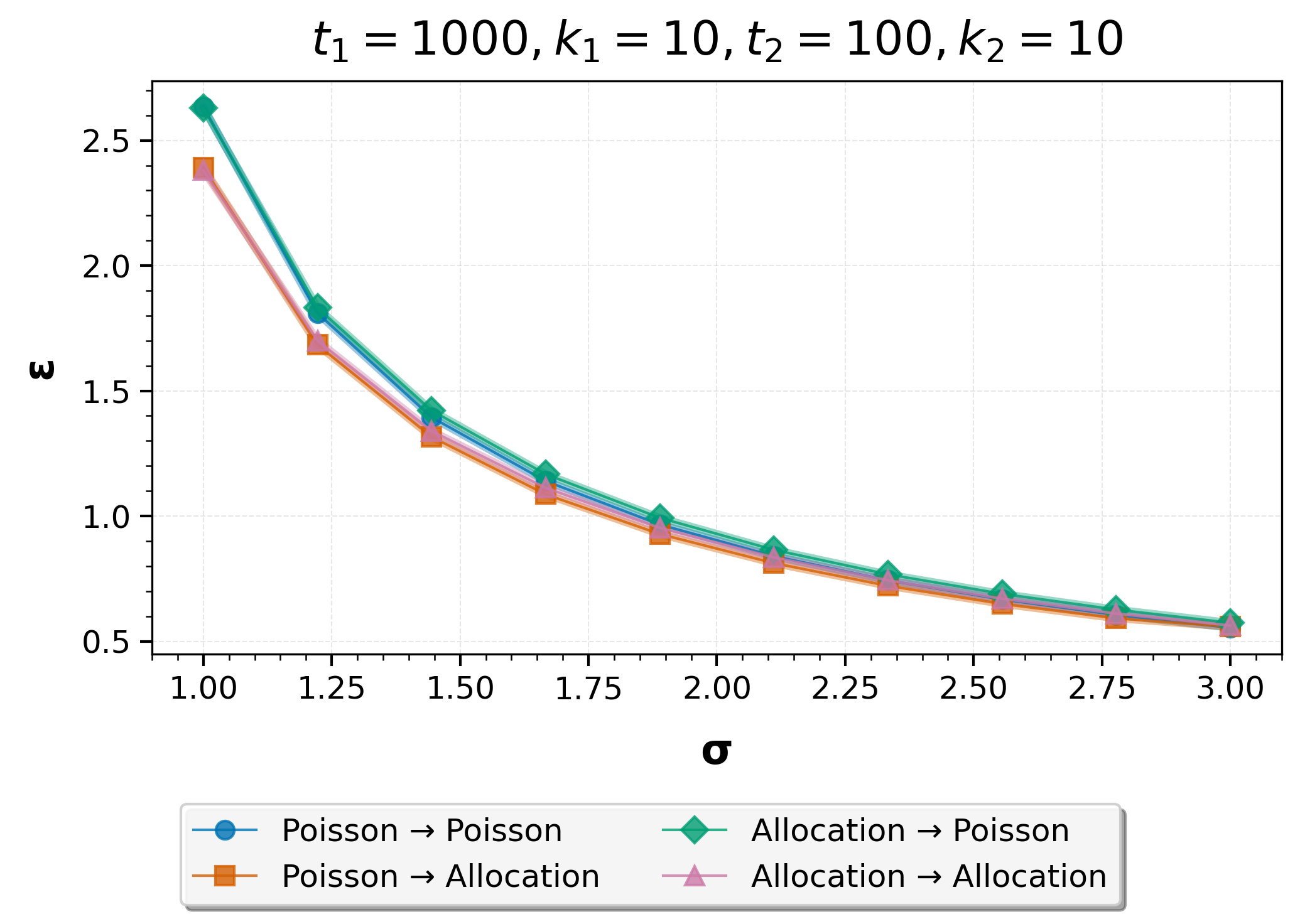}
    \caption{Comparison of the four possible combinations of two consecutive random allocation and Poisson sampling for $\delta = 10^{-8}$}
    \label{fig:double_DPSGD}
\end{figure}

\end{document}